\documentclass{article}

\usepackage[T1]{fontenc}
\usepackage[utf8]{inputenc}
\usepackage{microtype}
\usepackage{graphicx}
\usepackage{subcaption}
\usepackage{booktabs}
\usepackage{url}
\usepackage{comment}
\usepackage{cuted}
\usepackage{caption} 

\usepackage{hyperref}

\usepackage[preprint]{icml2026}

\usepackage{amsfonts}
\usepackage{amssymb}
\usepackage{amsmath}
\usepackage{amsthm}
\usepackage{mathtools}
\usepackage{bbm}
\usepackage{mathrsfs}
\usepackage{nicefrac}
\usepackage{enumerate}
\usepackage{enumitem}
\usepackage{xcolor}
\usepackage{algorithm}
\usepackage{algorithmic}

\usepackage[capitalize,noabbrev]{cleveref}

\usepackage[disable,textsize=tiny]{todonotes}

\newcommand{\cA}{\mathcal{A}}

\newcommand{\cD}{\mathcal{D}}

\newcommand{\cU}{\mathcal{U}}

\newcommand{\cX}{\mathcal{X}}

\newcommand{\PP}{\mathbb{P}}
\newcommand{\EE}{\mathbb{E}}
\newcommand{\R}{\mathbb{R}}

\DeclareMathOperator{\Var}{Var}

\DeclareMathOperator{\diag}{diag}
\DeclareMathOperator{\supp}{supp}
\DeclareMathOperator{\Law}{Law}
\DeclareMathOperator{\KL}{KL}
\newcommand{\Regret}{\operatorname{Regret}}

\newcommand{\softmax}{\operatorname{softmax}}

\theoremstyle{plain}
\newtheorem{theorem}{Theorem}[section]
\newtheorem{proposition}[theorem]{Proposition}
\newtheorem{lemma}[theorem]{Lemma}
\newtheorem{corollary}[theorem]{Corollary}
\theoremstyle{definition}
\newtheorem{definition}[theorem]{Definition}

\theoremstyle{remark}
\newtheorem{remark}[theorem]{Remark}

\icmltitlerunning{Personalized Alignment Revisited}

\begin{document}

\twocolumn[
  \icmltitle{Personalized Alignment Revisited: \\
  The Necessity and Sufficiency of User Diversity}

  \begin{icmlauthorlist}
    \icmlauthor{Enoch Hyunwook Kang}{uw}
  \end{icmlauthorlist}

  \icmlaffiliation{uw}{Foster School of Business, University of Washington, Seattle, Washington, USA}

  \icmlcorrespondingauthor{Enoch Hyunwook Kang}{ehwkang@uw.edu}

  \icmlkeywords{Machine Learning, ICML, Personalized Alignment, User Diversity, Preference Learning}

  \vskip 0.3in
]

\printAffiliationsAndNotice{}

\begin{abstract}
  Personalized alignment aims to adapt large language models to heterogeneous user preferences, yet the precise theoretical conditions for its statistical efficiency have not been formally established. This paper characterizes the conditions under which personalized alignment achieves $O(1)$ online regret and $\log(1/\varepsilon)$ offline sample complexity. We show that these optimal rates depend on a specific \textit{user-diversity} condition: the population of user-specific heads must span the latent reward directions that can alter the optimal response. We prove that this condition is both \textit{necessary and sufficient}. When it holds, simple greedy algorithms achieve benchmark efficiency; when it fails, every learner in a natural admissible class incurs at least logarithmic regret. Our results identify user diversity as the fundamental driver of personalized identifiability.
\end{abstract}

\section{Introduction}

Large language models are increasingly deployed in settings where different
users may reasonably prefer different responses to the same prompt. Standard
alignment pipelines, however, typically treat such user disagreement as noise
rather than as signal. This motivates \emph{personalized alignment}: adapting
model behavior to heterogeneous user preferences while sharing statistical
strength across users. A recent line of work makes this idea concrete through
low-dimensional personalized reward models, in which user rewards are
decomposed into a shared representation and user-specific linear heads
\citep{shenfeld2025language,bose2025lore,poddar2024personalizing,chen2024pal}.

The empirical case for this agenda, however, is mixed. While personalized
methods have been reported to outperform non-personalized baselines
\citep{shenfeld2025language,bose2025lore}, other studies find that non-personalized baselines often match or exceed personalized methods in both reward-model prediction and downstream alignment quality
\citep{rezk2025reward}. These conflicting findings suggest that we need a theoretical account of exactly when user-specific preference data translates
into better personalized decisions, and when it does not.

In this paper, we address this theoretical gap. For this, we first ask what it should theoretically mean, formally, for a
personalized alignment procedure to succeed. For non-personalized online alignment, \citet{kang2026demystifying}
show that the effective regret rate for online alignment is actually $O(1)$, i.e., bounded regret. For offline alignment from reference-logged preference
data, the corresponding sample complexity is $\log(1/\varepsilon)$ instead of $1/\varepsilon$ in \citet{wu2025greedy}. These rates match the strong empirical performance of 
alignment pipelines
\citep{ouyang2022training,xiong2024iterative,dong2024rlhf}.
This
sets a natural benchmark: a working personalized alignment method may also achieve personalized $O(1)$ online regret and $\log(1/\varepsilon)$ sample complexity. 

This paper characterizes exactly when personalized alignment achieves
$O(1)$ regret and $\log(1/\varepsilon)$ sample complexity. The critical condition is not about algorithmic design, but about a (decision-relevant) user-diversity: the population of user heads must span the latent
reward directions whose perturbations can change the
optimal response. When this condition holds, simple greedy algorithms achieve $O(1)$ regret and $\log(1/\epsilon)$ sample complexity; when it fails, every learner in a natural
admissible class suffers logarithmic regret. Conceptually, our condition is closely related to the task-diversity and
representation-coverage conditions that pervade the theory of transfer and
multi-task representation learning, where source tasks must be diverse enough
to identify a shared representation useful for downstream tasks
\citep{du2021fewshot,tripuraneni2020theory,tripuraneni2021provable,xu2021representation}.

The remainder of the paper is organized as follows. Section~\ref{sec:setup} introduces the personalized preference model, shared-representation reward structure, and notation. Section~\ref{sec:decision_framework} sets up the decision framework used by both the online and offline analyses, including the parameter class, temperature-zero evaluation criterion, regularity conditions, and reference-slate MNL loss. Section~\ref{sec:online_alignment} studies online alignment by presenting the greedy learner and proving the bounded-regret characterization. Section~\ref{sec:offline_exact_erm_pi0} develops the offline analogue and establishes logarithmic accuracy complexity. Section~\ref{sec:experiments} reports simulation evidence. Related work is discussed in Appendix~\ref{sec:related_works}.

\section{Setup}
\label{sec:setup}

\subsection{Interaction model (contexts, users, policies)}
We study an online (iterative) personalized alignment setting. Let $\cX$ denote a context space
(queries) and let $\cA$ denote an action space (the space of responses). $|\cA|$ is often
considered infinite. Contexts are drawn i.i.d.\ from an unknown distribution $d_0$ over $\cX$.

\paragraph{Users.}
We consider a population of users $\cU=\{1,\dots,U\}$ and a (possibly unknown) distribution $\rho$ over $\cU$. Each interaction round $t$ is associated with a user index $i_t \sim \rho$ who provides preference feedback. All main theorems below assume a finite user set. 

\paragraph{Personalized policies and reference model.}
A personalized policy for user $i$ at time $t$ is a mapping
\[
\pi_{it}:\ \cX \to \Delta(\cA),\qquad a \sim \pi_{it}(\cdot \mid x), \qquad i\in \cU.
\]
We are also given a fixed reference policy $\pi_0:\cX\to\Delta(\cA)$ (e.g., the SFT checkpoint). Following the standard alignment literature, we restrict attention to policies that are absolutely continuous with respect to the reference, so that the KL regularizer is well-defined:
\begin{align}
    \Pi \;:=\; \bigl\{& \pi:\cX\times\cU\to\Delta(\cA): \notag
    \\
    &\ \pi(\cdot\mid x,i)\ll \pi_0(\cdot\mid x)\ \ \forall (x,i)\in\cX\times\cU\bigr\}. \notag
\end{align}

\subsection{Preference model and feedback}
Fix a slate size $K\ge 2$. Given $(x,i)$ and a slate
\begin{align}
    \mathbf{a}&=(a_1,\dots,a_K)\in \cA^K, \notag
\\ 
a_k &\overset{\text{i.i.d.}}{\sim} \pi_{it}(\cdot\mid x)\quad \text{for }k=1,\dots,K, \notag
\end{align}

each user $i$ is associated with an (unknown) multinomial choice model
\[
P_i^\star:\ \cX\times\cA^K \to \Delta(\{1,\dots,K\}),
\]
where $P_i^\star(x,\mathbf{a})_k = \mathbb{P}(y=k\mid x,i,\mathbf{a})$ and $\sum_{k=1}^K P_i^\star(x,\mathbf{a})_k = 1$.
Feedback is generated as
\[
y \sim \mathrm{Cat}\!\big(P_i^\star(x,\mathbf{a})\big).
\]
Here \(\mathrm{Cat}(p)\) denotes the categorical distribution on \(\{1,\dots,K\}\) with
probability vector \(p\in\Delta(\{1,\dots,K\})\).

Suppose that we are given a known bounded base embedding $\psi:\cX\times\cA\to\R^d$ with $\|\psi(x,a)\|_2\le 1$, e.g., BERT embeddings \citep{warner2025smarter,teiletche2025modernvbert}. 
Let $\mathcal H$ be a class of $J$-dimensional representations (e.g., neural networks) of the embedding, where $h\in \mathcal H$ is 
\[
h:\R^d\to\R^J,
\qquad
\phi_h(x,a)\coloneqq h(\psi(x,a))\in\R^J.
\]

\paragraph{Representation structure.}
Following the standard representation model used in personalization literature \citep{bose2025lore, shenfeld2025language}, user rewards are linked through a shared representation and user-specific linear heads. Each user $i$ has an unknown coefficient vector (a ``task head'') $\lambda_i^\star\in\R^J$, and we write
$\Lambda^\star=[\lambda_1^\star,\dots,\lambda_U^\star]\in\R^{J\times U}$. There exists an unknown $h^\star\in\mathcal H$ such that the true shared representation satisfies
\[
\phi^\star(x,a)=\phi_{h^\star}(x,a)
\]
for all $(x,a)\in\cX\times\cA$, and the true individual reward $R_i^\star(x,a)$ is of the form
\begin{equation}
R_i^\star(x,a) \;=\; \langle \lambda_i^\star,\phi^\star(x,a)\rangle.
\label{eq:lowrank_reward}
\end{equation}

We assume boundedness $R_i^\star(x,a)\in[-B_\star,B_\star]$ for all $(x,a,i)$ for some fixed
$B_\star>0$. A sufficient condition is $\|\lambda_i^\star\|_2\le B_\star$ and
$\|\phi^\star(x,a)\|_2\le 1$ for all users and all $(x,a)$.

Under \eqref{eq:lowrank_reward}, the choices follow a multinomial logit model: for a slate
$\mathbf{a}=(a_1,\dots,a_K)$,
\begin{align}
\mathbb{P}(y=k\mid x,i,\mathbf{a})
\;&=\;
P_i^\star(x,\mathbf{a})_k
\;=\;
\frac{\exp\!\big(R_i^\star(x,a_k)\big)}{\sum_{\ell=1}^K \exp\!\big(R_i^\star(x,a_\ell)\big)}
\; \notag
\\
&=\;
\frac{\exp\!\big(\langle \lambda_i^\star,\phi^\star(x,a_k)\rangle\big)}
{\sum_{\ell=1}^K \exp\!\big(\langle \lambda_i^\star,\phi^\star(x,a_\ell)\rangle\big)}.
\label{eq:lowrank_mnl}
\end{align}
When $K=2$, this reduces to the Bradley--Terry model.

The compactness and continuity requirements on the representation family, head domain, and parameter class are collected in Condition~\ref{as:LP1} in Section~\ref{sec:class_assumptions}. The definitions below specify the parameterized score class to which those conditions apply.

\paragraph{Representation and user-specific linear heads.}

We parameterize the representation family as
\[
\mathcal H=\{h_\theta:\theta\in\Theta\}.
\]
Here \(\Theta\) denotes the parameter domain indexing the shared representation family.
To jointly learn the shared representation and user heads, we work over a joint personalized model class
\[
\mathcal M \subseteq \Theta\times \Xi,
\]
where \(\Xi\subset \R^{J\times U}\) is a head-matrix domain; for
\(\Lambda\in\Xi\), we write
\[
\Lambda=[\lambda_1,\dots,\lambda_U]
\]
with columns \(\lambda_i\in\R^J\).

\paragraph{Candidate parameterized scores.}
Given any candidate shared representation $h\in\mathcal H$ and any candidate
user-head matrix $\Lambda=[\lambda_1,\dots,\lambda_U]\in\Xi$, define the raw linear score
\begin{equation}
R^{\mathrm{raw}}_{h,\Lambda}(x,a,i)
\;\coloneqq\;
\langle \lambda_i,\phi_h(x,a)\rangle.
\label{eq:param_score_class_def}
\end{equation}
For likelihood, policy, and temperature-zero calculations below, we use the corresponding
\(\pi_0\)-centered score. Define
\[
\bar\phi_h(x,a)
\;\coloneqq\;
\phi_h(x,a)-\EE_{A\sim\pi_0(\cdot\mid x)}[\phi_h(x,A)]
\]
and set
\[
R_{h,\Lambda}(x,a,i)
\coloneqq
\langle \lambda_i,\bar\phi_h(x,a)\rangle.
\]
By the \(\pi_0\)-centering invariance used in the non-personalized temperature-zero analysis of \citet{kang2026demystifying} and restated in Lemma~\ref{lem:pi0_centering_wlog_psi} of Appendix \ref{app:supporting_lemmas}, replacing the raw score by its centered version does not change the induced MNL likelihoods, KL-tilted policies, or supportwise argmax sets.
For \((\theta,\Lambda)\in \mathcal M\), we write
\[
R_{\theta,\Lambda}(x,a,i)
\coloneqq
\langle \lambda_i,\bar\phi_{h_\theta}(x,a)\rangle.
\]

The compactness, continuity, and envelope consequences of this parameterization are stated with the model-class regularity conditions in Section~\ref{sec:class_assumptions}.

\section{Decision framework}
\label{sec:decision_framework}

This section fixes the decision-theoretic objects used by both the online and offline analyses: the regret criteria, support-wise score geometry, the parameter class, the structural regularity conditions, and the reference-slate MNL loss used in the proofs.

\subsection{Temperature-zero regret}
\label{sec:temperature_zero_regret}

In this paper, for both online and offline alignment, we use the traditional \textit{temperature-zero regret} \cite{kang2026demystifying} which evaluates the deterministic top-ranked action
induced by a score\footnote{This choice is motivated by the fact that the temperature-zero regret criterion yields bounded
cumulative regret, whereas the KL-regularized criterion can continue to incur
logarithmic regret \citep{wu2025greedy,kang2026demystifying}. The residual
KL-regularized regret, therefore, only reflects finite-temperature randomization rather than continued exploration.}. For any centered score \(R\), require the supportwise
argmax to be nonempty on a full \(d_0\times\rho\)-measure set and fix a
measurable selector
\[
a_R(x,i)\in \arg\max_{a\in \cA_0(x)} R(x,a,i)
\]
there. Selectors may be extended arbitrarily off this full-measure set. For
truth \(p\in\mathcal P\), write
\[
a_p(x,i)\coloneqq a_{R_p}(x,i).
\]

Define the expected one-step temperature-zero regret under
truth \(p\) by
\begin{align}
    \mathcal G_p(R) 
&\coloneqq \EE_{(X,I)\sim d_0\times\rho}\notag
\\
&
\Big[
R_p\big(X,a_p(X,I),I\big)-R_p\big(X,a_R(X,I),I\big)
\Big].
\label{eq:truth_centered_expected_regret_def}
\end{align}
This one-step functional is the common temperature-zero regret primitive used
in both the online and offline analyses. For any random score output
\(\widehat R\), its expected temperature-zero regret under truth \(p\) is
\(\EE_p[\mathcal G_p(\widehat R)]\). Online performance accumulates this
quantity over deployed estimates, while offline performance evaluates it for
the final score output.

In the online setting, a learning rule \(\mathsf A\) produces score estimates
\[
\widehat R_0^{\mathsf A},\widehat R_1^{\mathsf A},\widehat R_2^{\mathsf A},\dots,
\]
where \(\widehat R_t^{\mathsf A}\) is fitted from the first \(t\) rounds of data and is deployed on round \(t+1\). Its expected cumulative temperature-zero regret under truth \(p\in\mathcal P\) is
\begin{equation}
\Regret_{0,p}^{\mathsf A}(T)
\coloneqq
\sum_{t=0}^{T-1}\EE_p\big[\mathcal G_p(\widehat R_t^{\mathsf A})\big].
\label{eq:expected_regret_class}
\end{equation}

\subsection{Parameter class and supportwise score geometry}
\label{sec:parameter_class_eval}

We first formulate the theory around a closed class of nearby possible truths. The goal is to identify when personalized preference data separates nearby alternatives that still matter for the final recommendation.

Throughout the theoretical sections below, assume that \(\cA\) is a separable metric space and interpret
\(\supp(\pi_0(\cdot\mid x))\) as the topological support of \(\pi_0(\cdot\mid x)\). Write
\[
\cA_0(x)\coloneqq \supp(\pi_0(\cdot\mid x)).
\]
For a score \(R:\cX\times\cA\times\cU\to\R\), use the supportwise norm
\[
\|R\|_{\infty,\supp(\pi_0)}
\coloneqq
\sup_{i\in\cU}\sup_{x\in\cX}\sup_{a\in \cA_0(x)}|R(x,a,i)|,
\]
and use the analogous vector-valued supremum norm for representations. Throughout the theoretical sections below, we work with \(\pi_0\)-centered scores. This is without loss of generality: subtracting, for each \((x,i)\), an \(a\)-independent constant from a score leaves the MNL choice probabilities, the KL-tilted policy, and the supportwise argmax recommendation unchanged; see Lemma~\ref{lem:pi0_centering_wlog_psi}. The supportwise continuity arguments below adapt the topological-support upgrade used in \citet{kang2026demystifying} to user-indexed scores.

Fix a parameter class
\[
\mathcal P\subseteq \mathcal M.
\]
Each
\[
p=(\theta_p,\Lambda_p)\in\mathcal P
\]
induces the centered score
\[
R_p(x,a,i)\coloneqq \langle \lambda_{i,p},\bar\phi_{h_{\theta_p}}(x,a)\rangle.
\]
Let
\[
\mathcal F_{\mathcal P}\coloneqq \{R_p:p\in\mathcal P\}
\]
denote the induced score class.

\subsection{Structural regularity conditions on the parameter class}
\label{sec:class_assumptions}

We impose the following conditions throughout the online and offline analyses in Sections~\ref{sec:online_alignment} and~\ref{sec:offline_exact_erm_pi0}.
\begin{enumerate}[label=(C\arabic*),ref=C\arabic*,leftmargin=2em]
\item \label{as:LP1} \textbf{Compact continuous personalized parameter and score class.}
There exists a compact set \(\Omega_\psi\subset \R^d\) such that
\[
\psi(x,a)\in\Omega_\psi
\]
for every \((x,a)\) with \(a\in \cA_0(x)\). The representation parameter set \(\Theta\) is compact, and the map
\[
(\theta,z)\longmapsto h_\theta(z)
\]
is continuous on \(\Theta\times\Omega_\psi\). The head-matrix set \(\Xi\subset\R^{J\times U}\) is compact, and
\[
\mathcal M\subseteq \Theta\times\Xi
\]
is closed. The parameter class \(\mathcal P\subseteq\mathcal M\) is nonempty and closed. Moreover, for every \(p\in\mathcal P\), the action section
\[
a\longmapsto R_p(x,a,i)
\]
is continuous on \(\cA_0(x)\) for \(d_0\times\rho\)-a.e.\ \((x,i)\).

\item \label{as:LP2} \textbf{Uniform supportwise reward gap on \(\mathcal P\).}
There exists \(\Delta_{\min}^{\mathcal P}>0\) such that for every truth \(p\in\mathcal P\),
\(a_p(X,I)\) is unique and
\begin{align}
    R_p\!\big(X,a_p(X,I),I\big)
&-
\sup_{a\in \cA_0(X),\ a\neq a_p(X,I)}
R_p(X,a,I) \notag
\\
&\ge
\Delta_{\min}^{\mathcal P}
\qquad
 d_0\times\rho\text{-a.s.} \notag
\end{align}

\end{enumerate}
Condition~\ref{as:LP1} is a compactness-based supportwise regularity condition on the personalized score class. It plays the same role as the finite-class or bounded-covering-number assumptions commonly imposed in recent online alignment analyses \citep{xiong2024iterative,ye2024online,wu2025greedy,kang2026demystifying}\todo{add references}. The continuity requirement in Condition~\ref{as:LP1} is mild for standard parametric personalized reward classes, including bounded-parameter shared-representation models with user-specific linear heads and neural representation families with continuous activations; see Appendix~\ref{app:linear_neural_examples}. Condition~\ref{as:LP2} is a standard margin or selector-stability condition\footnote{Mathematically, it says that each user's top-ranked action is stable under small score perturbations. In bandit problems, positive gaps typically underlie logarithmic instance-dependent regret, while bounded or sub-logarithmic regret generally requires additional self-exploration structure, such as optimal-arm spanning, HLS, covariate diversity, smoothed contexts, or anti-concentration \citep{hao2020adaptive,papini2021leveraging,tirinzoni2023complexity,bastani2021mostly,kannan2018smoothed,raghavan2023greedy,kim2024local}.} in preference-based reward modeling. Practically, low-margin comparisons are often ambiguous: \citet{wang2024secrets} report low annotator agreement in real preference data, and preference-strength or margin-aware approaches have been proposed to handle heterogeneous comparison quality \citep{qin2024towards,kim2024margin}. Thus, collapsing imperceptible quality differences is a natural modeling convention in alignment settings where weak or ambiguous preference pairs may be unreliable or uninformative.

By Condition~\ref{as:LP1}, the score class \(\mathcal F_{\mathcal P}\) is compact under \(\|\cdot\|_{\infty,\supp(\pi_0)}\). In particular, the envelope
\[
B_{\mathcal P}
\coloneqq
\sup_{p\in\mathcal P}\|R_p\|_{\infty,\supp(\pi_0)}
\]
is finite; see Lemma~\ref{lem:compact_continuity_consequences_P}. For the regret bounds below, set
\[
\Delta_{\max}^{\mathcal P}\coloneqq 2B_{\mathcal P}.
\]

\subsection{Reference-slate MNL loss}
\label{sec:truth_centered_population_risk}

We record the common loss functional once here, leaving the online and offline sections to specify how samples are obtained. For any centered score \(R\) and slate \(\mathbf a=(a_1,\dots,a_K)\), define
\[
P_R(y=k\mid x,i,\mathbf a)
\coloneqq
\frac{\exp(R(x,a_k,i))}{\sum_{\ell=1}^K \exp(R(x,a_\ell,i))},
\;k=1,\dots,K,
\]
and
\[
\ell(\mathbf v,y)
\coloneqq
\log\!\Big(\sum_{k=1}^K e^{v_k}\Big)-v_y.
\]
For truth \(p\in\mathcal P\), define the reference-slate population loss
\begin{equation}
\mathcal L_p(R)
\coloneqq
\EE\!\left[
\ell\!\left(
\big(R(X,A_1,I),\dots,R(X,A_K,I)\big),Y
\right)
\right], \notag
\end{equation}
where \((X,I)\sim d_0\times\rho\), \(\mathbf A=(A_1,\dots,A_K)\sim\pi_0(\cdot\mid X)^{\otimes K}\), and \(Y\sim P_{R_p}(\cdot\mid X,I,\mathbf A)\). This is a definition of the reference-slate population objective; the offline model below draws i.i.d. observations from exactly this law.

\section{Online alignment}
\label{sec:online_alignment}

This section specializes the decision framework from Section~\ref{sec:decision_framework} to the online personalized alignment problem. We describe the greedy learner, define the decision-relevant user-diversity modulus, and prove the bounded-regret characterization.

\subsection{Greedy learning}
\label{sec:greedy_learning}

We describe the online learner directly in the joint model class
\(\mathcal M\) from Section~\ref{sec:setup}. After round \(t\), the learner uses the
induced centered score
\[
\widehat R_t(x,a,i)\coloneqq R_{\hat\theta_t,\hat\Lambda_t}(x,a,i),
\qquad
(\hat\theta_t,\hat\Lambda_t)\in \mathcal M.
\]

Fix a tilt parameter \(\eta>0\).
Given $\widehat R_t$, the KL-regularized greedy policy for user $i$ is
\begin{align}
    \pi_{i,t+1}(a\mid x)
=&
\frac{\pi_0(a\mid x)\exp\!\big(\eta\,\widehat R_t(x,a,i)\big)}
{Z_{i,t}(x)}, \notag
\\
Z_{i,t}(x)\coloneqq&
\int_{\cA}\pi_0(a'\mid x)\exp\!\big(\eta\,\widehat R_t(x,a',i)\big)\,da'.
\label{eq:greedy_policy_update_personalized_mnl}
\end{align}

Initialize \(\widehat R_0\equiv 0\), $\pi_{i,1}\gets \pi_0$ for all $i\in\cU$, and \(\cD_0=\emptyset\). For each round
\(t=1,\dots,T\):
\begin{enumerate}[label=\arabic*. ,leftmargin=2em]
\item Observe the user-query pair $(x_t,i_t)$ with $x_t\sim d_0$ and $i_t\sim\rho$.
\item Sample a slate $\mathbf{a}_t=(a_{t,1},\dots,a_{t,K})$ with
      $a_{t,k}\overset{\mathrm{i.i.d.}}{\sim}\pi_{i_t,t}(\cdot\mid x_t)$ for $k=1,\dots,K$.
\item Observe the user choice $y_t\sim \mathrm{Cat}\!\big(P_{i_t}^\star(x_t,\mathbf{a}_t)\big)$.
\item Update $\cD_t\gets \cD_{t-1}\cup\{(x_t,i_t,a_{t,1},\dots,a_{t,K},y_t)\}$.
\item Fit the shared representation and user heads by empirical MNL risk minimization\footnote{When Condition~\ref{as:LP1} is imposed,
Lemma~\ref{lem:compact_joint_gf_head} implies that \(\mathcal M\) is compact and
that \((\theta,\Lambda)\mapsto R_{\theta,\Lambda}\) is continuous under
\(\|\cdot\|_{\infty,\supp(\pi_0)}\). On the full-probability event that every sampled action lies in
\(\supp(\pi_0(\cdot\mid x_s))\), each summand in \eqref{eq:parametric_mnl_mle} is continuous in the
induced score, so the empirical objective is continuous on the compact parameter set and an exact
minimizer exists. On the complementary null event, one may use any fixed measurable default
parameter without affecting the probability statements below.}:
\begin{equation}
\big(\hat\theta_t,\hat\Lambda_t\big)
\in
\arg\min_{(\theta,\Lambda)\in \mathcal M}
\widehat{\mathcal L}_t(\theta,\Lambda),
\label{eq:parametric_mnl_mle}
\end{equation}
where
\begin{align}
    \widehat{\mathcal L}_t(\theta,\Lambda)
\coloneqq
\frac1t\sum_{s=1}^t
\bigl[
\log\!\Big(\sum_{k=1}^K \exp\big(R_{\theta,\Lambda}(x_s,a_{s,k},i_s)\big)\Big) \notag
\\-
R_{\theta,\Lambda}(x_s,a_{s,y_s},i_s)
\bigr]. \notag
\end{align}

\item Set \(\widehat R_t(x,a,i)\coloneqq R_{\hat\theta_t,\hat\Lambda_t}(x,a,i)\) and update
policies via \eqref{eq:greedy_policy_update_personalized_mnl}.
\end{enumerate}

\subsection{User diversity condition}
\label{sec:truth_centered_moduli}

Using the regret functional \(\mathcal G_p\) and population risk \(\mathcal L_p\) from Section~\ref{sec:decision_framework}, fix a regret threshold \(\varepsilon_0>0\). For truth \(p\in\mathcal P\) and radius \(r>0\), define the truth-centered regret shell
\begin{align}
    &\mathcal Q_{\mathrm{reg}}(p,r;\varepsilon_0) \notag
\\
&\coloneqq
\Big\{
q\in\mathcal P:
\max_{i\in\cU}\|\lambda_{i,q}-\lambda_{i,p}\|_2\le r,
\ \mathcal G_p(R_q)\ge \varepsilon_0
\Big\}. \notag
\end{align}
Thus \(\mathcal Q_{\mathrm{reg}}(p,r;\varepsilon_0)\) is the regret-relevant shell of head-close alternatives: it contains exactly those nearby alternatives that still incur at least \(\varepsilon_0\) expected one-step regret under truth \(p\).

For \(p\in\mathcal P\), define the user head second-moment matrix
\[
G_{\lambda,p}
\coloneqq
\EE_{I\sim\rho}\big[\lambda_{I,p}\lambda_{I,p}^\top\big],
\]
and, for \(q\in\mathcal P\), define the truth-centered representation difference
\[
g_{q\mid p}(x,a)
\coloneqq
\bar\phi_{h_{\theta_q}}(x,a)-\bar\phi_{h_{\theta_p}}(x,a).
\]
With \((X,I)\sim d_0\times\rho\) and \(A\sim\pi_0(\cdot\mid X)\), define
\begin{equation}
\mathfrak d_{\mathrm{reg},p}(r;\varepsilon_0)
\coloneqq
\inf_{q\in\mathcal Q_{\mathrm{reg}}(p,r;\varepsilon_0)}
\EE\!\left[
g_{q\mid p}(X,A)^\top G_{\lambda,p}\,g_{q\mid p}(X,A)
\right], \notag
\end{equation}
with \(\inf\emptyset=+\infty\), and its class-uniform version
\begin{equation}
\underline{\mathfrak d}_{\mathrm{reg}}(r;\varepsilon_0)
\coloneqq
\inf_{p\in\mathcal P}\mathfrak d_{\mathrm{reg},p}(r;\varepsilon_0) \notag
\end{equation}
The quantity \(\underline{\mathfrak d}_{\mathrm{reg}}(r;\varepsilon_0)\) is the decision-relevant user-diversity modulus. It only probes representation directions that are realizable by a head-close alternative and still large enough to change the temperature-zero recommendation at scale \(\varepsilon_0\). This motivates the following threshold definition.

\begin{lemma}[Automatic isolation of positive regret on \(\mathcal F_{\mathcal P}\)]
\label{lem:automatic_positive_regret_isolation_P}
Assume \ref{as:LP1}--\ref{as:LP2}. Then there exists a constant
\[
\varepsilon_{\mathrm{iso}}^{\mathcal P}>0
\]
such that for every \(p,q\in\mathcal P\),
\[
\mathcal G_p(R_q)\in\{0\}\cup[\varepsilon_{\mathrm{iso}}^{\mathcal P},\infty).
\]
\end{lemma}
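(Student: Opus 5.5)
The plan is to reduce the claim to a statement about the probability of the event where the two selectors disagree, and then use compactness of \(\mathcal F_{\mathcal P}\) together with the uniform gap in Condition~\ref{as:LP2} to show that this probability cannot be positive but arbitrarily small.

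\emph{Step 1 (pointwise dichotomy).} Fix \(p,q\in\mathcal P\). By Condition~\ref{as:LP2} applied to both \(p\) and \(q\), the maximizers \(a_p(x,i)\) and \(a_q(x,i)\) over \(\cA_0(x)\) are unique for \(d_0\times\rho\)-a.e.\ \((x,i)\), so the selectors are a.s.\ determined. Pointwise, the integrand \(R_p(x,a_p,i)-R_p(x,a_q,i)\) takes one of two values:
\begin{itemize}
\item it equals \(0\) on the set where \(a_q=a_p\);
\item it is at least \(\Delta_{\min}^{\mathcal P}\) on the set where \(a_q\neq a_p\), because \(a_q\in\cA_0(x)\setminus\{a_p\}\) there.
\end{itemize}
Setting \(D(p,q)\coloneqq\{(x,i):a_q(x,i)\neq a_p(x,i)\}\), this gives
\[
\Delta_{\min}^{\mathcal P}\,(d_0\times\rho)(D(p,q))\ \le\ \mathcal G_p(R_q)\ \le\ \Delta_{\max}^{\mathcal P}\,(d_0\times\rho)(D(p,q)).
\]
In particular \(\mathcal G_p(R_q)=0\) if and only if \(D(p,q)\) is null. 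It therefore suffices to show that \(\mu(D(p,q))\), where \(\mu=d_0\times\rho\), is either \(0\) or bounded below by a constant that does not depend on \((p,q)\).

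\emph{Step 2 (compactness argument by contradiction).} Suppose instead that there are pairs \((p_n,q_n)\in\mathcal P^2\) with \(0<\mu(D(p_n,q_n))\to0\). By Condition~\ref{as:LP1}, the class \(\mathcal P\) is compact and \(p\mapsto R_p\) is continuous in \(\|\cdot\|_{\infty,\supp(\pi_0)}\) (Lemma~\ref{lem:compact_continuity_consequences_P}). We can therefore pass to a subsequence with \(p_n\to p\), \(q_n\to q\), \(\|R_{p_n}-R_p\|_{\infty,\supp(\pi_0)}<\Delta_{\min}^{\mathcal P}/2\) and \(\|R_{q_n}-R_q\|_{\infty,\supp(\pi_0)}<\Delta_{\min}^{\mathcal P}/2\).

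\emph{Step 3 (selector stability, the key step).} If \(\|R'-R_q\|_{\infty,\supp(\pi_0)}<\Delta_{\min}^{\mathcal P}/2\), then a.s.\ every \(a\in\cA_0(x)\) with \(a\neq a_q(x,i)\) satisfies
\[
R'(x,a,i)<R_q(x,a,i)+\tfrac{\Delta}{2}\le R_q(x,a_q,i)-\tfrac{\Delta}{2}<R'(x,a_q,i),
\]
where \(\Delta=\Delta_{\min}^{\mathcal P}\). Hence \(a_q\) is the unique supportwise argmax of \(R'\). Applying this with \(R'=R_{q_n}\), and similarly with \(R'=R_{p_n}\) around \(p\), gives \(a_{q_n}=a_q\) and \(a_{p_n}=a_p\) a.s. for all large \(n\). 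Consequently \(D(p_n,q_n)=D(p,q)\) up to null sets, so \(\mu(D(p_n,q_n))\) is eventually a fixed number. A fixed number cannot be strictly positive and also tend to \(0\), which is the desired contradiction.

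This yields some \(c>0\) such that \(\mu(D(p,q))\in\{0\}\cup[c,1]\) for all \(p,q\in\mathcal P\). Combined with Step 1, we can take \(\varepsilon_{\mathrm{iso}}^{\mathcal P}=\Delta_{\min}^{\mathcal P}c\).

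I expect the main obstacle to be Step 3, specifically the measure-theoretic bookkeeping. Condition~\ref{as:LP2} holds only \(\mu\)-a.s. for each truth, and the exceptional null sets depend on \(p\), \(q\), \(p_n\) and \(q_n\). I would handle this by working on the countable intersection of the relevant full-measure sets, which is still of full measure. Here the supremum norm over \(\cA_0(x)\) gives the needed uniformity in \(a\) directly, so no continuity in \(a\) is required beyond what Condition~\ref{as:LP1} already provides.
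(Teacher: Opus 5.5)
Your proof is correct and follows essentially the paper's route: the disagreement-mass sandwich (Lemma~\ref{lem:regret_vs_disagreement}), selector stability under supportwise perturbations smaller than $\Delta_{\min}^{\mathcal P}/2$ (Lemma~\ref{lem:stability_selector_P}), and compactness of $\mathcal F_{\mathcal P}$. The only difference is packaging. The paper uses total boundedness to show there are finitely many selector classes, then sets $\varepsilon_{\mathrm{iso}}^{\mathcal P}=\Delta_{\min}^{\mathcal P}\delta_{\mathcal P}$, where $\delta_{\mathcal P}$ is the minimal pairwise disagreement mass among those classes; this gives a slightly more explicit constant. You instead use sequential compactness on $\mathcal P^2$ to show the disagreement mass is locally constant, so it cannot accumulate at $0$.
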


\begin{definition}[Decision-relevant user diversity condition]
\label{def:decision_relevant_user_diversity}
We say that \(\mathcal P\) satisfies the decision-relevant user diversity if
\[
\liminf_{r\downarrow0}
\underline{\mathfrak d}_{\mathrm{reg}}(r;\varepsilon_{\mathrm{iso}}^{\mathcal P})>0.
\]
We say that decision-relevant user diversity fails if
\[
\liminf_{r\downarrow0}
\underline{\mathfrak d}_{\mathrm{reg}}(r;\varepsilon_{\mathrm{iso}}^{\mathcal P})=0.
\]
\end{definition}

The following sections will show that \(\mathcal P\) satisfying the decision-relevant user diversity condition in the sense of Definition~\ref{def:decision_relevant_user_diversity} is both necessary and sufficient for the bounded-regret/log-complexity behavior we study. 

Despite its importance, it is not immediately clear from the statement of Definition~\ref{def:decision_relevant_user_diversity} how to verify the condition. The next lemma gives a simple and intuitive sufficient condition for the decision-relevant user diversity condition to hold: full-rank variation in the user heads rules out collapse of the decision-relevant diversity modulus.

\begin{lemma}[Full-rank user heads imply decision-relevant user diversity]
\label{lem:pd_user_heads_suffice_for_diversity}
Assume \ref{as:LP1}--\ref{as:LP2}. If
\[
G_{\lambda,p}
\coloneqq
\EE_{I\sim\rho}\!\left[\lambda_{I,p}\lambda_{I,p}^{\top}\right]
\succ 0
\qquad
\text{for every }p\in\mathcal P,
\]
then \(\mathcal P\) has decision-relevant user diversity in the sense of Definition~\ref{def:decision_relevant_user_diversity}.
\end{lemma}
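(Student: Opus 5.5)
The plan is to argue by contradiction, using compactness of \(\mathcal P\) (Condition~\ref{as:LP1}) together with the uniform margin (Condition~\ref{as:LP2}). Suppose \(\liminf_{r\downarrow0}\underline{\mathfrak d}_{\mathrm{reg}}(r;\varepsilon_{\mathrm{iso}}^{\mathcal P})=0\). Then there are radii \(r_n\downarrow0\), truths \(p_n\in\mathcal P\) and alternatives \(q_n\in\mathcal Q_{\mathrm{reg}}(p_n,r_n;\varepsilon_{\mathrm{iso}}^{\mathcal P})\) such that
\[
\EE\big[g_{q_n\mid p_n}(X,A)^\top G_{\lambda,p_n}\,g_{q_n\mid p_n}(X,A)\big]\to0 .
\]
Since \(\mathcal P\) is closed in the compact set \(\mathcal M\) (Lemma~\ref{lem:compact_joint_gf_head}), I pass to a subsequence with \(p_n\to p\) and \(q_n\to q\) in \(\mathcal P\).

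\textbf{Step 1 (heads coincide).} The shell constraint \(\max_i\|\lambda_{i,q_n}-\lambda_{i,p_n}\|_2\le r_n\to0\) forces \(\Lambda_q=\Lambda_p\) in the limit.

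\textbf{Step 2 (uniform curvature).} The map \(\Lambda\mapsto G_\lambda=\sum_i\rho(i)\lambda_i\lambda_i^\top\) is continuous, and \(G_{\lambda,p}\succ0\) by hypothesis. Hence \(\lambda_{\min}(G_{\lambda,p_n})\ge c/2>0\) for large \(n\), where \(c\coloneqq\lambda_{\min}(G_{\lambda,p})\). Consequently
\[
\EE\|g_{q_n\mid p_n}(X,A)\|_2^2\to0 .
\]

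\textbf{Step 3 (representations agree on the support).} By Lemma~\ref{lem:compact_joint_gf_head}, \(\theta\mapsto\bar\phi_{h_\theta}\) is continuous in the supportwise sup-norm, so \(g_{q_n\mid p_n}\to g_{q\mid p}\) uniformly on \(\{(x,a):a\in\cA_0(x)\}\). This gives \(\EE\|g_{q\mid p}(X,A)\|_2^2=0\), i.e.\ \(\bar\phi_{h_{\theta_q}}=\bar\phi_{h_{\theta_p}}\) for \(\pi_0(\cdot\mid x)\)-a.e.\ \(a\) and \(d_0\)-a.e.\ \(x\). Combining this with Step 1, \(R_q(x,\cdot,i)=R_p(x,\cdot,i)\) holds \(\pi_0\)-a.e.

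I would upgrade this to equality on all of \(\cA_0(x)\) using continuity of the action sections (Condition~\ref{as:LP1}) and the topological-support argument (cf.\ Lemma~\ref{lem:pi0_centering_wlog_psi} and its surroundings). If the two continuous functions differed at some \(a\in\cA_0(x)\), they would differ on an open neighborhood of \(a\). That neighborhood has positive \(\pi_0(\cdot\mid x)\)-mass because \(a\) lies in the support, which contradicts a.e.\ equality. Thus \(R_q=R_p\) on the support, for \(d_0\times\rho\)-a.e.\ \((x,i)\).

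\textbf{Step 4 (contradiction with the regret shell).} Choose \(n\) large enough that
\[
\|R_{q_n}-R_p\|_{\infty,\supp(\pi_0)}<\Delta_{\min}^{\mathcal P}/2
\quad\text{and}\quad
\|R_{p_n}-R_p\|_{\infty,\supp(\pi_0)}<\Delta_{\min}^{\mathcal P}/2 .
\]
The first bound uses \(R_q=R_p\) on the support together with continuity of \(q\mapsto R_q\). Now take any \(a\in\cA_0(x)\) with \(a\ne a_p(x,i)\). By the margin in Condition~\ref{as:LP2},
\[
R_{q_n}(x,a,i)<R_p(x,a,i)+\tfrac{\Delta_{\min}^{\mathcal P}}2\le R_p(x,a_p,i)-\tfrac{\Delta_{\min}^{\mathcal P}}2<R_{q_n}(x,a_p,i).
\]
So the argmax of \(R_{q_n}\) is uniquely \(a_p\), and any selector returns \(a_p\). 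The same argument shows \(a_{p_n}=a_p\) almost surely. Therefore \(\mathcal G_{p_n}(R_{q_n})=0\), contradicting \(q_n\in\mathcal Q_{\mathrm{reg}}(p_n,r_n;\varepsilon_{\mathrm{iso}}^{\mathcal P})\), which requires \(\mathcal G_{p_n}(R_{q_n})\ge\varepsilon_{\mathrm{iso}}^{\mathcal P}>0\). Hence the liminf is positive.

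The main obstacle I anticipate is Step 3. There, the uniform (supportwise) convergence of the centered representations must be taken from Lemma~\ref{lem:compact_joint_gf_head}, and the passage from \(\pi_0\)-a.e.\ equality to equality on the whole topological support must be made carefully. This matters because the margin argument in Step 4 needs sup-norm closeness on \(\cA_0(x)\), not merely closeness in \(L^2(\pi_0)\).
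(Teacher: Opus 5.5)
Your proof is correct, but it takes a different route from the paper's. The paper never passes to the limit at the level of representations. It bounds $\EE[\Delta_{q_n\mid p_n}^2]\le 2L_{\lambda,\mathcal P}^2\EE\|g_{q_n\mid p_n}\|_2^2+2M_{\bar\phi,\mathcal P}^2r_n^2$. Via Lemma~\ref{lem:softmax_kl_upper_bound} it then gets $\mathcal L_{p_n}(R_{q_n})-\mathcal L_{p_n}(R_{p_n})\le\frac K2\EE[\Delta_{q_n\mid p_n}^2]\to0$. After extracting a convergent subsequence, it combines the zero-loss identifiability result (Lemma~\ref{lem:zero_loss_implies_zero_regret_P}) with continuity of $(p,q)\mapsto\mathcal G_p(R_q)$ (Lemma~\ref{lem:continuity_truth_centered_maps}) to contradict $\mathcal G_{p_n}(R_{q_n})\ge\varepsilon_{\mathrm{iso}}^{\mathcal P}$.

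You bypass the MNL likelihood entirely. You obtain $\Lambda_q=\Lambda_p$ and $\bar\phi_{h_{\theta_q}}=\bar\phi_{h_{\theta_p}}$ almost everywhere directly. This also spares you the odds-ratio/variance/centering step inside the zero-loss lemma, since positive-definiteness gives equality outright rather than equality up to an additive constant. You then redo the selector-stability step by hand. The paper's route reuses infrastructure it needs anyway for Theorem~\ref{thm:fixed_scale_characterization}. Yours is more elementary and shows transparently that full-rank heads identify the representation itself.

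Two small repairs are needed.
\begin{itemize}
\item The support upgrade you want is Lemma~\ref{lem:support_upgrade_personalized}, not the centering lemma. That lemma also supplies the fact that $\pi_0(\cdot\mid x)$ gives $\cA\setminus\cA_0(x)$ zero mass. You need this to conclude that $U\cap\cA_0(x)$ has positive mass.
\item In Step~4 you cannot assert $\|R_{q_n}-R_p\|_{\infty,\supp(\pi_0)}<\Delta_{\min}^{\mathcal P}/2$. You only know $R_q=R_p$ on $\cA_0(x)$ for $d_0\times\rho$-a.e.\ $(x,i)$, whereas that norm takes a supremum over all $x$. State instead that $\sup_{a\in\cA_0(x)}|R_{q_n}(x,a,i)-R_p(x,a,i)|\le\|R_{q_n}-R_q\|_{\infty,\supp(\pi_0)}<\Delta_{\min}^{\mathcal P}/2$ for a.e.\ $(x,i)$. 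That is all your argmax comparison and the expectation defining $\mathcal G_{p_n}$ require. Alternatively, note that $\mathcal G_p(R_q)=0$ and cite Lemma~\ref{lem:continuity_truth_centered_maps}.
\end{itemize}
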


\begin{proof}
See Appendix~\ref{app:deferred_proofs}.
\end{proof}

This full-rank covariance condition in Lemma \ref{lem:pd_user_heads_suffice_for_diversity} says that the user-head population spans every latent representation direction, so no nonzero representation perturbation can be invisible to all users. It is the personalized-alignment analog of task-diversity assumptions in transfer learning, where source tasks must span the representation directions needed for downstream prediction \citep{du2021fewshot}.

\subsection{Main bounded-regret result}
\label{sec:umr_corollary}

The main positive result is a bounded-regret theorem for the exact greedy personalized ERM learner. The condition is decision-relevant user diversity, as formalized in Definition~\ref{def:decision_relevant_user_diversity}.

Let \(\varepsilon_{\mathrm{iso}}^{\mathcal P}>0\) denote the positive-regret isolation constant guaranteed by Lemma~\ref{lem:automatic_positive_regret_isolation_P}. Write \(\mathrm{ERM}\) for the greedy learner from Section~\ref{sec:greedy_learning}, specialized to exact empirical MNL minimization over the compact realizable score class \(\mathcal F_{\mathcal P}\): \(\widehat R_0\equiv0\), \(\pi_{i,1}=\pi_0\), and, for \(t\ge1\),
\[
\widehat R_t
\in
\arg\min_{R\in\mathcal F_{\mathcal P}}
\frac1t\sum_{s=1}^t
-\log P_R(y_s\mid x_s,i_s,\mathbf a_s),
\]
with policies updated by \eqref{eq:greedy_policy_update_personalized_mnl}. By Condition~\ref{as:LP1} and Lemma~\ref{lem:compact_continuity_consequences_P}, the argmin is nonempty on the full-probability support event; fix a measurable ERM selection rule.

\begin{theorem}[Bounded regret for greedy personalized alignment]
\label{thm:bounded_regret_under_umr}
Assume \ref{as:LP1}--\ref{as:LP2}. If \(\mathcal P\) has decision-relevant user diversity in the sense of Definition~\ref{def:decision_relevant_user_diversity}, then
\[
\sup_{p\in\mathcal P}\sup_{T\ge 1}
\Regret_{0,p}^{\mathrm{ERM}}(T)<\infty.
\]
More precisely, let \(\gamma_{\mathcal P,\varepsilon_{\mathrm{iso}}^{\mathcal P}}>0\) be the fixed-scale loss-gap witness supplied by Theorem~\ref{thm:fixed_scale_characterization} at \(\varepsilon_0=\varepsilon_{\mathrm{iso}}^{\mathcal P}\), and define
\[
C_{\mathrm{LR}}\coloneqq e^{2\eta B_{\mathcal P}},
\qquad
\gamma\coloneqq C_{\mathrm{LR}}^{-K}\gamma_{\mathcal P,\varepsilon_{\mathrm{iso}}^{\mathcal P}},
\]
\begin{align}
    N_\gamma
&\coloneqq
\mathcal N\!\big(\mathcal F_{\mathcal P},\gamma/32,\|\cdot\|_{\infty,\supp(\pi_0)}\big), \notag
\\
c_\gamma&\coloneqq \frac{\gamma^2}{128(\log K+2B_{\mathcal P})^2}. \notag
\end{align}
Then
\begin{align}
   &\sup_{p\in\mathcal P}\sup_{T\ge 1}
\Regret_{0,p}^{\mathrm{ERM}}(T) \notag
\\
&\le
\Delta_{\max}^{\mathcal P}
\left(
1
+
\left\lceil \frac{1}{c_\gamma}\log(2N_\gamma)\right\rceil
+
\frac{1}{e^{c_\gamma}-1}
\right). \notag
\end{align}
\end{theorem}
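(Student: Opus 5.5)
The plan is to reduce regret control to a uniform high-probability identification statement for the ERM, using the fixed-scale loss-gap witness from Theorem~\ref{thm:fixed_scale_characterization}. Under decision-relevant user diversity at \(\varepsilon_0=\varepsilon_{\mathrm{iso}}^{\mathcal P}\), that theorem supplies \(\gamma_{\mathcal P,\varepsilon_{\mathrm{iso}}^{\mathcal P}}>0\) such that, for every truth \(p\) and every \(q\in\mathcal P\) with \(\mathcal G_p(R_q)\ge\varepsilon_{\mathrm{iso}}^{\mathcal P}\), the reference-slate excess loss satisfies
\[
\mathcal L_p(R_q)-\mathcal L_p(R_p)\ge \gamma_{\mathcal P,\varepsilon_{\mathrm{iso}}^{\mathcal P}}.
\]
By Lemma~\ref{lem:automatic_positive_regret_isolation_P}, any \(q\) with \(\mathcal G_p(R_q)>0\) already has \(\mathcal G_p(R_q)\ge\varepsilon_{\mathrm{iso}}^{\mathcal P}\). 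Consequently, every score incurring positive one-step regret is separated in \(\pi_0\)-slate loss by a fixed margin. Each round costs at most \(\Delta_{\max}^{\mathcal P}=2B_{\mathcal P}\), so it suffices to bound \(\sum_t\PP_p(\mathcal G_p(\widehat R_t)>0)\) uniformly in \(p\).

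The first step transfers the gap from the reference slate distribution to the on-policy one. The deployed policy \(\pi_{i,t+1}\propto\pi_0e^{\eta\widehat R_t}\) with \(\|\widehat R_t\|_{\infty,\supp(\pi_0)}\le B_{\mathcal P}\) has density ratio to \(\pi_0\) in \([C_{\mathrm{LR}}^{-1},C_{\mathrm{LR}}]\), where \(C_{\mathrm{LR}}=e^{2\eta B_{\mathcal P}}\). The \(K\)-fold product slate therefore has ratio at least \(C_{\mathrm{LR}}^{-K}\). The conditional excess loss \(\EE[\ell(R_q)-\ell(R_p)\mid x,i,\mathbf a]\) is a KL divergence between MNL choice laws, hence nonnegative. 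Its expectation under the round-\(s\) slate law is thus at least \(C_{\mathrm{LR}}^{-K}\) times its \(\pi_0\)-slate expectation, which is at least \(\gamma\). This holds conditionally on the past, for every round and every bad \(q\).

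The second step is a martingale concentration argument over a finite cover. Take a \(\gamma/32\)-net of \(\mathcal F_{\mathcal P}\) in \(\|\cdot\|_{\infty,\supp(\pi_0)}\), of size \(N_\gamma\). For each net element \(q'\) close to a bad \(q\), form the increments \(\ell(R_{q'})-\ell(R_p)\) minus their conditional means. These are martingale differences bounded in magnitude by \(2(\log K+2B_{\mathcal P})\), because the MNL loss is 1-Lipschitz in sup-norm and bounded by \(\log K+2B_{\mathcal P}\). Azuma–Hoeffding combined with a union bound over the net shows that, outside an event of probability at most \(2N_\gamma e^{-c_\gamma t}\), every bad \(q\) satisfies \(\widehat{\mathcal L}_t(q)-\widehat{\mathcal L}_t(p)>0\). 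The \(\gamma/32\) discretization errors and the deviation term \(\gamma/4\) are absorbed into the margin \(\gamma\), which fixes the constant \(c_\gamma=\gamma^2/(128(\log K+2B_{\mathcal P})^2)\). Because the truth \(p\) is feasible, the ERM cannot select a bad \(q\) on this event, so \(\PP_p(\mathcal G_p(\widehat R_t)>0)\le\min\{1,2N_\gamma e^{-c_\gamma t}\}\).

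Summing these bounds gives the result. Round \(0\) contributes at most \(\Delta_{\max}^{\mathcal P}\). For \(t\le \lceil c_\gamma^{-1}\log(2N_\gamma)\rceil\) the trivial bound \(1\) is used. For larger \(t\), the geometric tail is at most \(\sum_{m\ge1}e^{-c_\gamma m}=1/(e^{c_\gamma}-1)\). Multiplying by \(\Delta_{\max}^{\mathcal P}\) yields the displayed bound, which is uniform in \(p\) and \(T\).

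The main obstacle is the concentration step. The data are adaptively collected, so the per-round excess-loss lower bound must hold conditionally on the past, and the fixed-\(q\) inequality must be made uniform over the continuum of bad \(q\) via the net. Only after that can the bound be combined with feasibility of the truth for the exact ERM.
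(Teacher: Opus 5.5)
Your proposal is correct and follows the paper's own proof. The paper takes the fixed-scale loss gap from Theorem~\ref{thm:fixed_scale_characterization} together with isolation (Lemma~\ref{lem:automatic_positive_regret_isolation_P}), transfers it on-policy with the factor $C_{\mathrm{LR}}^{-K}$ via Lemmas~\ref{lem:greedy_kl_tilt_lr} and~\ref{lem:slate_domination_likelihood_ratio}, applies net-based Azuma concentration with exact ERM (Theorem~\ref{thm:eps_substantial_rounds_from_loss_gap}), and sums $\min\{1,2N_\gamma e^{-c_\gamma t}\}$ exactly as you do. The only differences are minor: the paper bounds the uniform deviation $b_t$ rather than your one-sided pairwise excess losses against $R_p$, and the MNL loss is $2$-Lipschitz in $\|\cdot\|_\infty$ (Lemma~\ref{lem:mnl_loss_basic}) rather than $1$-Lipschitz. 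The resulting discretization error of $\gamma/8$ is still absorbed by your margin, so your constants go through.
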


\begin{proof}[Proof sketch.]
Compactness isolates positive selector regret, user diversity yields a fixed truth-centered loss gap, KL-tilt likelihood-ratio control transfers that gap to the observed on-policy slates, and exact ERM plus concentration make substantial-regret rounds summable. For details, see Appendix~\ref{app:deferred_proofs}.
\end{proof}

\begin{theorem}[Logarithmic lower bound without user diversity]
\label{thm:log_lower_without_diversity}
Assume \ref{as:LP1}--\ref{as:LP2}. If decision-relevant user diversity fails in the sense of Definition~\ref{def:decision_relevant_user_diversity}, then, for all sufficiently large \(T\),
\[
\inf_{\mathsf A}
\sup_{p\in\mathcal P}
\Regret_{0,p}^{\mathsf A}(T)
\ge
\frac{\Delta_{\min}^{\mathcal P}\varepsilon_{\mathrm{iso}}^{\mathcal P}}
{8\Delta_{\max}^{\mathcal P}}\,
\log T.
\]
Here the infimum is over the uniformly reference-covered adaptive learner class formalized in Appendix~\ref{app:compact_continuity_details}, specialized to the same likelihood-ratio envelope \(e^{2\eta B_{\mathcal P}}\).
\end{theorem}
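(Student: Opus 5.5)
The plan is a two-point (Le Cam / change-of-measure) argument in the style of Lai--Robbins, driven by a sequence of hard pairs extracted from the failure of decision-relevant user diversity.

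\textbf{Extracting hard pairs.} Since \(\liminf_{r\downarrow0}\underline{\mathfrak d}_{\mathrm{reg}}(r;\varepsilon_{\mathrm{iso}}^{\mathcal P})=0\), we can choose \(r_n\downarrow0\), truths \(p_n\in\mathcal P\), and alternatives \(q_n\in\mathcal Q_{\mathrm{reg}}(p_n,r_n;\varepsilon_{\mathrm{iso}}^{\mathcal P})\) such that
\[
\EE\big[g_{q_n\mid p_n}^\top G_{\lambda,p_n}g_{q_n\mid p_n}\big]=:\delta_n\to0 .
\]
By construction \(\mathcal G_{p_n}(R_{q_n})\ge\varepsilon_{\mathrm{iso}}^{\mathcal P}\), so the optimal selectors of \(p_n\) and \(q_n\) disagree on a set of \((x,i)\) of nonnegligible mass. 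Using \ref{as:LP2} with \(\mathcal G\le\Delta_{\max}^{\mathcal P}\), this mass is at least \(\varepsilon_{\mathrm{iso}}^{\mathcal P}/\Delta_{\max}^{\mathcal P}\).

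\textbf{Bounding the KL between the two instances.} Next we bound the one-round KL between observation laws. Under a reference slate, the per-round KL between the MNL laws under \(p_n\) and \(q_n\) is at most a constant times \(\EE[(R_{p_n}-R_{q_n})^2]\) on \(\pi_0\)-drawn actions; this uses local smoothness of log-sum-exp on the bounded envelope \(B_{\mathcal P}\). The score difference splits as
\[
\langle\lambda_{i,q}-\lambda_{i,p},\bar\phi_{q}\rangle+\langle\lambda_{i,p},g_{q\mid p}\rangle .
\]
The first term is \(O(r_n)\). The second has mean square exactly \(\delta_n\) after averaging over \(I\sim\rho\). Hence the per-round KL is \(\kappa_n\lesssim r_n^2+\delta_n\to0\). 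For an arbitrary learner in the uniformly reference-covered class, the likelihood ratio of its slate distribution to \(\pi_0^{\otimes K}\) is bounded by \(C_{\mathrm{LR}}^K\). Consequently the KL divergence over \(T\) rounds is at most \(T\,C_{\mathrm{LR}}^{K}\kappa_n\).

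\textbf{Regret tradeoff.} Every round, a learner's deployed selector must be wrong under \(p_n\) or under \(q_n\) on the disagreement set. By \ref{as:LP2}, each mistake costs at least \(\Delta_{\min}^{\mathcal P}\), so the sum of expected regrets under the two instances is at least order \(\Delta_{\min}^{\mathcal P}\varepsilon_{\mathrm{iso}}^{\mathcal P}/\Delta_{\max}^{\mathcal P}\) times the testing error. The Bretagnolle--Huber inequality bounds the testing error below by \(\tfrac14\exp(-\mathrm{KL})\). Applying this per round \(t\), with KL at most \(t\,C\kappa_n\), and summing gives a lower bound of order
\[
\frac{\Delta_{\min}^{\mathcal P}\varepsilon_{\mathrm{iso}}^{\mathcal P}}{\Delta_{\max}^{\mathcal P}}\sum_{t\le T}e^{-Ct\kappa_n}\;\approx\;\frac{\Delta_{\min}^{\mathcal P}\varepsilon_{\mathrm{iso}}^{\mathcal P}}{\Delta_{\max}^{\mathcal P}}\min\{T,1/\kappa_n\}.
\]
This is a \(\min(T,1/\kappa_n)\)-type bound and is too strong to be uniform as stated. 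To obtain exactly \(\log T\) with the constant \(1/8\), I would instead run the standard argument that a learner achieving sublogarithmic regret on \(p_n\) must, by change of measure, allocate observations that are informative enough to distinguish \(q_n\). Since \(\kappa_n\) can be made as small as desired by choosing \(n=n(T)\) with \(\kappa_n\le 1/T\), the pair stays indistinguishable through horizon \(T\). This yields \(\sum_t\) of a constant, and in particular \(\ge c\log T\) for large \(T\). The \(\tfrac18\) arises as \(\tfrac12\cdot\tfrac14\) from averaging the two instances together with Bretagnolle--Huber.

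\textbf{Main obstacle.} The delicate step is the KL control for adaptive learners. The slates come from learner-dependent policies, not from \(\pi_0\), so the \(\pi_0\)-weighted modulus must be transferred via the reference-covered envelope. The learner class in the appendix is designed precisely to provide this. A second point needing care is that the score perturbation in \(\bar\phi\) is centered at \(\pi_0\) while the slate is drawn from \(\pi\); I would handle this through the centering invariance of MNL, Lemma~\ref{lem:pi0_centering_wlog_psi}.
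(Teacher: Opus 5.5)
Your proposal is correct and follows essentially the paper's route: you extract hard pairs $(p_n,q_n)$ from the failed modulus with $\EE[(R_{q_n}-R_{p_n})^2]\to0$ and disagreement mass at least $s_0=\varepsilon_{\mathrm{iso}}^{\mathcal P}/\Delta_{\max}^{\mathcal P}$, bound the history KL by the chain rule using the coverage envelope $\beta=e^{2\eta B_{\mathcal P}}$, run a per-round two-point test, and choose $n=n(T)$ depending on $T$; the paper uses Pinsker rather than Bretagnolle--Huber, with $\kappa_{n(T)}\le s_0^2/\log T$ over the first $\lfloor\log T\rfloor$ rounds. Your worry that the $\min\{T,1/\kappa_n\}$ bound is ``too strong'' is unfounded: choosing $\kappa_{n(T)}\le s_0^2/T$ makes it uniform over learners and in fact yields linear minimax regret, which dominates the claimed $\log T$ bound, so the Lai--Robbins consistency detour and the centering-invariance remark are unnecessary.
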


\begin{proof}[Proof sketch.]
The lower half of Theorem~\ref{thm:fixed_scale_characterization}, applied at \(\varepsilon_0=\varepsilon_{\mathrm{iso}}^{\mathcal P}\), constructs two head-close truths whose score separation is small but whose temperature-zero recommendations disagree on a set of positive \(d_0\times\rho\)-measure. A standard two-instance testing argument then forces \(\Omega(\log T)\) cumulative regret for any learner in the uniformly reference-covered class. The complete proof is in Appendix~\ref{app:bounded_regret_pf}.
\end{proof}

\section{Offline alignment}
\label{sec:offline_exact_erm_pi0}

We now analyze the offline analogue of the exact-ERM learner. To keep the result under the
same decision framework and structural assumptions as Section~\ref{sec:decision_framework}, we assume that the logged
slates are generated by the reference policy \(\pi_0\). Under this design, the offline population
objective coincides exactly with the truth-centered population loss \(\mathcal L_p\) defined in
Section~\ref{sec:truth_centered_population_risk}, so no additional coverage or concentrability assumption
is needed.

\subsection{Offline model and learner}

Recall the compact score class \(\mathcal F_{\mathcal P}=\{R_p:p\in\mathcal P\}\). Fix any \(R^\circ\in \mathcal F_{\mathcal P}\). Fix \(n\ge 1\) and a truth \(p\in\mathcal P\). The offline dataset is
\[
D_n=\{(X_s,I_s,A_{s,1},\dots,A_{s,K},Y_s)\}_{s=1}^n,
\]
where the observations are i.i.d. and satisfy
\begin{align}
    &(X_s,I_s)\sim d_0\times\rho, \notag
\\
\mathbf A_s&=(A_{s,1},\dots,A_{s,K})\sim \pi_0(\cdot\mid X_s)^{\otimes K}, \notag
\\
Y_s&\sim P_{R_p}(\cdot\mid X_s,I_s,\mathbf A_s). \notag
\end{align}
For \(R\in\mathcal F_{\mathcal P}\), define
\[
\ell_s(R)\coloneqq -\log P_R(Y_s\mid X_s,I_s,\mathbf A_s),
\;\;
\widehat{\mathcal L}_n(R)\coloneqq \frac1n\sum_{s=1}^n \ell_s(R).
\]
By Condition~\ref{as:LP1}, the class \(\mathcal F_{\mathcal P}\) is compact under
\(\|\cdot\|_{\infty,\supp(\pi_0)}\). Since \(A_{s,k}\in \supp(\pi_0(\cdot\mid X_s))\) almost surely
for every \(s\) and \(k\), each map \(R\mapsto \ell_s(R)\) is continuous under this norm on a
full-probability event. Hence, on that event, the empirical objective \(\widehat{\mathcal L}_n\) is
continuous and the offline exact ERM
\begin{equation}
\widehat R_n
\in
\arg\min_{R\in\mathcal F_{\mathcal P}} \widehat{\mathcal L}_n(R)
\label{eq:offline_exact_erm_over_P}
\end{equation}
exists. On the complementary null event, set \(\widehat R_n\coloneqq R^\circ\). We fix a measurable
exact-ERM selection rule in \eqref{eq:offline_exact_erm_over_P} on the full-probability event.

For every \(R\in\mathcal F_{\mathcal P}\),
\begin{equation}
\EE_p[\ell_s(R)]
=
\mathcal L_p(R),
\qquad s=1,\dots,n,
\label{eq:offline_population_risk_identity}
\end{equation}
because each offline observation is an i.i.d. draw from the reference-slate law used in the definition of
\(\mathcal L_p\).

\subsection{Logarithmic accuracy complexity}

\begin{theorem}[Offline exact-ERM: exponential control at scale \(\varepsilon_0\)]
\label{thm:offline_exact_erm_scale_eps}
Assume \ref{as:LP1}--\ref{as:LP2}, and fix \(\varepsilon_0>0\). Define
\[
K_{\varepsilon_0}
\coloneqq
\{(p,q)\in\mathcal P^2:\mathcal G_p(R_q)\ge \varepsilon_0\}.
\]
Set $\gamma_{\mathcal P,\varepsilon_0}
\coloneqq$
\[
\begin{cases}
1, & K_{\varepsilon_0}=\emptyset,\\[0.6em]
\displaystyle
\min\left\{
1,\,
\min_{(p,q)\in K_{\varepsilon_0}}
\bigl(\mathcal L_p(R_q)-\mathcal L_p(R_p)\bigr)
\right\},
& K_{\varepsilon_0}\neq\emptyset.
\end{cases}
\]
Then \(\gamma_{\mathcal P,\varepsilon_0}>0\). Define
\begin{align}
\ell_{\max}&\coloneqq \log K+2B_{\mathcal P}, \notag
\\
N_\gamma
&\coloneqq
\mathcal N\!\big(\mathcal F_{\mathcal P},\gamma_{\mathcal P,\varepsilon_0}/32,
\|\cdot\|_{\infty,\supp(\pi_0)}\big), \notag
\\
c_\gamma&\coloneqq
\frac{\gamma_{\mathcal P,\varepsilon_0}^2}{128\,\ell_{\max}^2}. \notag
\end{align}

Then \(N_\gamma<\infty\), \(c_\gamma>0\), and for every truth \(p\in\mathcal P\) and every
\(n\ge 1\),
\[
\PP_p\big(\mathcal G_p(\widehat R_n)\ge \varepsilon_0\big)
\le
2N_\gamma e^{-c_\gamma n}.
\]
\end{theorem}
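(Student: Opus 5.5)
Three things have to be shown: \(\gamma_{\mathcal P,\varepsilon_0}>0\) (which is where the work lies), finiteness of \(N_\gamma\), and an exponential bound obtained from a finite-net uniform concentration argument.

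\textbf{Step 1: positivity of the gap.} If \(K_{\varepsilon_0}=\emptyset\) there is nothing to prove. Otherwise the first task is to show \(K_{\varepsilon_0}\) is compact and that the minimum is attained and strictly positive.

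For positivity at a fixed pair: for \((p,q)\in K_{\varepsilon_0}\), \(\mathcal L_p(R_q)-\mathcal L_p(R_p)\) equals the expected KL divergence between the MNL laws \(P_{R_p}\) and \(P_{R_q}\) over reference slates. This is \(\ge0\), with equality iff \(R_q(X,\cdot,I)-R_p(X,\cdot,I)\) is constant on the slate for a.e. slate drawn from \(\pi_0^{\otimes K}\). Since both scores are \(\pi_0\)-centered and continuous on \(\cA_0(x)\) (Condition~\ref{as:LP1}), equality forces \(R_q=R_p\) on \(\cA_0(x)\) for a.e. \((x,i)\). In the topological-support upgrade the constant is a.s. zero by centering, and continuity extends agreement from a full-measure subset to the whole support. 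Then the supportwise argmaxes coincide, so \(\mathcal G_p(R_q)=0<\varepsilon_0\), a contradiction. Hence each term is strictly positive.

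For compactness and attainment:
\begin{enumerate}[label=(\roman*)]
\item \(\mathcal P\) is compact by Lemma~\ref{lem:compact_joint_gf_head}, and \(p\mapsto R_p\) is continuous in \(\|\cdot\|_{\infty,\supp(\pi_0)}\).
\item \((p,q)\mapsto\mathcal L_p(R_q)\) is continuous: the log-sum-exp loss is Lipschitz in the scores and the label law depends continuously on \(R_p\), with everything bounded by \(B_{\mathcal P}\).
\item \((p,q)\mapsto\mathcal G_p(R_q)\) is upper semicontinuous. This should follow from the uniform margin in Condition~\ref{as:LP2}: if \(\|R_{q_n}-R_q\|_\infty<\Delta_{\min}^{\mathcal P}/2\) and \(\|R_{p_n}-R_p\|_\infty<\Delta_{\min}^{\mathcal P}/2\), the selectors agree a.s., so \(\mathcal G\) is locally continuous.
\end{enumerate}
Item (iii) makes \(K_{\varepsilon_0}\) closed, hence compact, and item (ii) lets the continuous positive function attain a positive minimum on it. I expect item (iii) to be the main obstacle, specifically handling the selector \(a_{R_q}\) when \(R_q\) is itself a truth in \(\mathcal P\) with its unique argmax. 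If it proves awkward, I would instead invoke the compactness argument already behind Lemma~\ref{lem:automatic_positive_regret_isolation_P}.

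\textbf{Step 2: covering number and constants.} \(N_\gamma<\infty\) because \(\mathcal F_{\mathcal P}\) is compact (Lemma~\ref{lem:compact_continuity_consequences_P}), and \(c_\gamma>0\) is immediate.

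\textbf{Step 3: concentration and conclusion.} Take a \(\gamma/32\)-net \(\{R^{(j)}\}_{j=1}^{N_\gamma}\) of \(\mathcal F_{\mathcal P}\). Each loss \(\ell_s(R)\) lies in \([0,\ell_{\max}]\), since \(\ell(\mathbf v,y)\le \log K+\max_k v_k-\min_k v_k\). The loss is also \(2\)-Lipschitz in sup-norm: moving every score by \(\delta\) changes \(\ell\) by at most \(2\delta\). So both empirical and population losses move by at most \(\gamma/16\) between a function and its net point.

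Apply Hoeffding to each net point, and to \(R_p\) (or its net point), at deviation \(\gamma/8\). A union bound gives the event
\[
\sup_j\bigl|\widehat{\mathcal L}_n(R^{(j)})-\mathcal L_p(R^{(j)})\bigr|\le\gamma/8
\]
with probability at least \(1-2N_\gamma e^{-2n(\gamma/8)^2/\ell_{\max}^2}\). This exponent is at least \(c_\gamma n\), so the constant \(1/128\) leaves slack.

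On this event, suppose \(\mathcal G_p(\widehat R_n)\ge\varepsilon_0\). Then \((p,\widehat R_n)\in K_{\varepsilon_0}\), so \(\mathcal L_p(\widehat R_n)\ge\mathcal L_p(R_p)+\gamma\). Transferring through the net,
\[
\widehat{\mathcal L}_n(\widehat R_n)\ \ge\ \widehat{\mathcal L}_n(R_p)+\gamma-2(\gamma/8+\gamma/16)\ >\ \widehat{\mathcal L}_n(R_p).
\]
This contradicts the fact that \(\widehat R_n\) minimizes \(\widehat{\mathcal L}_n\), which gives the claimed bound \(\PP_p\big(\mathcal G_p(\widehat R_n)\ge \varepsilon_0\big)\le 2N_\gamma e^{-c_\gamma n}\).
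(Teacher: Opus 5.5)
Your proposal is correct and follows the paper's proof essentially step for step. You establish positivity of $\gamma_{\mathcal P,\varepsilon_0}$ through compactness of $K_{\varepsilon_0}$, continuity of the regret and risk maps via the uniform margin, and the centering-plus-support-upgrade identifiability argument; you then conclude with exact ERM and a $\gamma/32$-net, using Hoeffding directly on the i.i.d.\ logged data where the paper cites its Azuma-based net lemma with $\epsilon=\gamma/32$, $u=\gamma/8$. One bookkeeping slip that does not affect the conclusion: each side of your transfer chain loses $\gamma/8+2\cdot\gamma/16=\gamma/4$, because the net step is paid for both the empirical and the population loss, so the total loss is $\gamma/2$ rather than $3\gamma/8$, which still leaves a strict inequality.
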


\noindent\emph{Proof, including the positivity of \(\gamma_{\mathcal P,\varepsilon_0}\), deferred
to Appendix~\ref{app:offline_fixed_scale}.}

Taking \(\varepsilon_0\coloneqq \varepsilon_{\mathrm{iso}}^{\mathcal P}\) in
Theorem~\ref{thm:offline_exact_erm_scale_eps} and applying
Lemma~\ref{lem:automatic_positive_regret_isolation_P} yields the logarithmic accuracy
bounds below.

Consistent with Section~\ref{sec:temperature_zero_regret}, define the expected
temperature-zero regret of the offline output by
\[
\mathcal R_{0,p}^{\mathrm{off}}(n)
\coloneqq
\EE_p\big[\mathcal G_p(\widehat R_n)\big].
\]

\begin{corollary}[Offline logarithmic accuracy complexity]
\label{cor:offline_log_eps_umr}
Assume \ref{as:LP1}--\ref{as:LP2}. Let \(N_\gamma\) and \(c_\gamma\) be the constants from
Theorem~\ref{thm:offline_exact_erm_scale_eps} corresponding to
$
\varepsilon_0\coloneqq \varepsilon_{\mathrm{iso}}^{\mathcal P}.
$
Then for every \(n\ge 1\),
\[
\sup_{p\in\mathcal P}\mathcal R_{0,p}^{\mathrm{off}}(n)
\le
2\Delta_{\max}^{\mathcal P}N_\gamma e^{-c_\gamma n}.
\]
Consequently, for every
\(\varepsilon\in(0,2\Delta_{\max}^{\mathcal P}N_\gamma]\),
\[
n
\ge
\frac{1}{c_\gamma}
\log\!\left(\frac{2\Delta_{\max}^{\mathcal P}N_\gamma}{\varepsilon}\right)
\quad\Longrightarrow\quad
\sup_{p\in\mathcal P}\mathcal R_{0,p}^{\mathrm{off}}(n)\le \varepsilon.
\]
In particular, the offline sample complexity for expected temperature-zero regret at most
\(\varepsilon\) is \(O(\log(1/\varepsilon))\).
\end{corollary}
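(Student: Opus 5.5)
The plan is to combine the exponential tail bound of Theorem~\ref{thm:offline_exact_erm_scale_eps}, taken at scale \(\varepsilon_0=\varepsilon_{\mathrm{iso}}^{\mathcal P}\), with the dichotomy of Lemma~\ref{lem:automatic_positive_regret_isolation_P} and the envelope bound on one-step regret. Fix a truth \(p\in\mathcal P\) and \(n\ge1\). On the full-probability event where the exact ERM exists, \(\widehat R_n\in\mathcal F_{\mathcal P}\). On the null complement, \(\widehat R_n=R^\circ\in\mathcal F_{\mathcal P}\). So in either case \(\widehat R_n=R_q\) for some \(q\in\mathcal P\), and Lemma~\ref{lem:automatic_positive_regret_isolation_P} gives \(\mathcal G_p(\widehat R_n)\in\{0\}\cup[\varepsilon_{\mathrm{iso}}^{\mathcal P},\infty)\) almost surely.

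Next I bound \(\mathcal G_p\) uniformly. For every score \(R_q\in\mathcal F_{\mathcal P}\), the integrand in \eqref{eq:truth_centered_expected_regret_def} is a difference of two values of \(R_p\) at actions in \(\cA_0(X)\). This holds because both selectors are supportwise argmaxes and so lie in \(\cA_0(X)\) on a full-measure set. Hence the integrand is bounded by \(2\|R_p\|_{\infty,\supp(\pi_0)}\le 2B_{\mathcal P}=\Delta_{\max}^{\mathcal P}\), and it is nonnegative because \(a_p\) maximizes \(R_p\). Therefore \(0\le\mathcal G_p(\widehat R_n)\le\Delta_{\max}^{\mathcal P}\) almost surely.

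Combining the two facts,
\[
\mathcal G_p(\widehat R_n)\le \Delta_{\max}^{\mathcal P}\,\mathbbm 1\{\mathcal G_p(\widehat R_n)\ge\varepsilon_{\mathrm{iso}}^{\mathcal P}\}.
\]
Taking expectations and applying Theorem~\ref{thm:offline_exact_erm_scale_eps} gives
\[
\mathcal R^{\mathrm{off}}_{0,p}(n)\le\Delta_{\max}^{\mathcal P}\cdot 2N_\gamma e^{-c_\gamma n}.
\]
The constants \(N_\gamma\) and \(c_\gamma\) do not depend on \(p\), so taking the supremum over \(p\in\mathcal P\) gives the first claim. Measurability of \(\mathcal G_p(\widehat R_n)\) follows from the fixed measurable selection rule; I treat this as routine.

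For the sample-complexity statement, note that \(2\Delta_{\max}^{\mathcal P}N_\gamma e^{-c_\gamma n}\le\varepsilon\) holds exactly when \(n\ge c_\gamma^{-1}\log(2\Delta_{\max}^{\mathcal P}N_\gamma/\varepsilon)\). The restriction \(\varepsilon\le 2\Delta_{\max}^{\mathcal P}N_\gamma\) only ensures that this threshold is nonnegative. Since \(c_\gamma>0\) and \(N_\gamma<\infty\) are fixed constants, the threshold is \(O(\log(1/\varepsilon))\).

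The main point needing care is the almost-sure dichotomy: it requires \(\widehat R_n\) to always lie in \(\mathcal F_{\mathcal P}\), including on the null event, which the default choice \(R^\circ\) guarantees. It also requires Lemma~\ref{lem:automatic_positive_regret_isolation_P} to apply with whatever measurable selector \(a_{\widehat R_n}\) is used. Everything else is direct substitution.
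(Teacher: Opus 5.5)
Your proposal is correct and is essentially the paper's proof: the isolation lemma at \(\varepsilon_0=\varepsilon_{\mathrm{iso}}^{\mathcal P}\) and the envelope \(0\le\mathcal G_p(\widehat R_n)\le\Delta_{\max}^{\mathcal P}\) give the pointwise bound \(\mathcal G_p(\widehat R_n)\le\Delta_{\max}^{\mathcal P}\mathbf 1\{\mathcal G_p(\widehat R_n)\ge\varepsilon_{\mathrm{iso}}^{\mathcal P}\}\). Taking expectations, applying Theorem~\ref{thm:offline_exact_erm_scale_eps} with \(p\)-independent constants, and solving for \(n\) then finish the argument. Your explicit handling of the null event via the default \(R^\circ\in\mathcal F_{\mathcal P}\) is a point the paper only spells out later, in Corollary~\ref{cor:offline_zero_regret_identification}.
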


\noindent\emph{Proof deferred to Appendix~\ref{app:offline_umr_log_accuracy}.}

The matching offline lower bound is deferred to
Appendix~\ref{sec:offline_matching_lower_bound}; it shows that the
\(O(\log(1/\varepsilon))\) dependence in Corollary~\ref{cor:offline_log_eps_umr} matches the lower bound.

\begin{figure*}[t]
    \centering
    \includegraphics[width=0.92\textwidth]{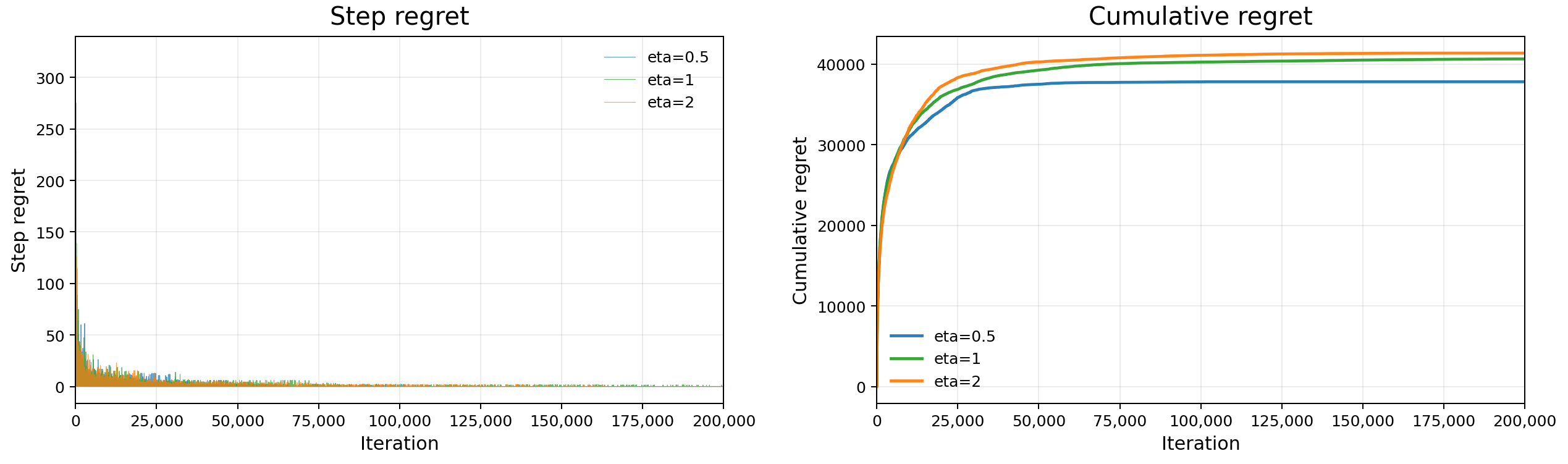}
    \caption{Online personalized alignment on the same full-diversity random instance (DRD=17.46) for
    \(\eta\in\{0.5,1,2\}\). The left panel shows temperature-zero regret per round and the
    right panel shows cumulative temperature-zero regret. The trajectories run for
    \(400{,}000\) rounds; the figure clips the horizontal axis to the first \(200{,}000\)
    rounds, where the plateau is already visible.}
    \label{fig:random_eta_comparison}
\end{figure*}

\section{Experiments}
\label{sec:experiments}

\subsection{Simulation experiments}
\label{sec:simulation_experiments}

To isolate the paper's main claim, we conduct controlled Bradley-Terry experiments in the literature \cite{wu2025greedy, kang2026demystifying}, except for the personalization component. The experiment is designed to test the qualitative prediction of
Theorem~\ref{thm:bounded_regret_under_umr}: when user heads provide decision-relevant diversity, greedy personalized learning should
accumulate regret only during an initial identification phase.

\paragraph{Setup.}
We use a bilinear personalized reward model with context/action dimension \(d=5\), latent
dimension \(J=10\), and \(U=10\) users. The ground-truth reward is
\[
R_i^\star(x,a)=\langle \lambda_i^\star,\phi^\star(x,a)\rangle,
\phi^\star(x,a)
=
\big(x^\top W_1^\star a,\ldots,x^\top W_J^\star a\big).
\]
For each user, the simulator constructs \(100\) contexts and \(100\) candidate actions, and then
fixes this bank for the full run. This constitutes 100,000 user-context-action cases in total. Appendix~\ref{app:simulation_details} gives the
implementation details.

We then run the greedy personalized alignment loop. At each round, the learner observes a user
and context, deploys the KL-tilted sampling policy
$
\pi_t(a\mid x_t,i_t)
\propto
\exp\{\eta\,\widehat R_t(x_t,a,i_t)\},
$
compares one sampled action against a uniformly sampled reference action, observes a binary
Bradley--Terry preference, and refits the shared bilinear reward model. We evaluate the induced temperature-zero regret and report \(\eta\in\{0.5,1,2\}\), horizon \(T=400{,}000\), and one trajectory per value of
\(\eta\) on the same problem instance.

\paragraph{Diversity diagnostic.}
To verify that the instance satisfies the decision-relevant user diversity condition
(Definition~\ref{def:decision_relevant_user_diversity}), we compute a finite-sample proxy of the
diversity modulus.  Let \(\mathcal{H}\) denote the \(10\%\) of realized user-context pairs with the
smallest top-two reward gap \(\max_a R_i^\star(x,a)-\max_{a\neq a^\star}R_i^\star(x,a)\)
(the hardest cases for action identification).
For each \((x_h,i_h)\in\mathcal{H}\), let
\(\Delta\phi_h = \phi^\star(x_h,a^{\star}_h)-\phi^\star(x_h,a^{(2)}_h)\in\mathbb{R}^J\)
be the representation difference between the best and second-best actions.  Define the empirical
hard-case matrix
$
\widehat{H}
=
\frac{1}{|\mathcal{H}|}\sum_{h\in\mathcal{H}}\Delta\phi_h\,\Delta\phi_h^\top,
$
and let \(\widehat{\Sigma}_\lambda\) be the centered empirical covariance of the true user heads
\(\{\lambda_i^\star\}_{i=1}^U\).  The decision-relevant diversity (DRD) diagnostic is
\[
\widehat{\mathfrak{d}}
=
\operatorname{tr}\!\left(\widehat{\Sigma}_\lambda\,\widehat{H}\right).
\]
On the realized instance the DRD diagnostic equals
\(17.46\), confirming that the
diversity condition holds.
\paragraph{Results.}
Figure~\ref{fig:random_eta_comparison} shows the same qualitative pattern for all three
regularization levels. One-step temperature-zero regret is concentrated early in training and
becomes sparse afterward, while cumulative regret quickly flattens. The experiment therefore
matches the main qualitative implication of the theory: after the personalized reward estimate
identifies the correct top action on most realized user-context pairs, continued softened
sampling does not translate into continued temperature-zero regret.

We also run an offline sample-size sweep to test the logarithmic accuracy complexity prediction
of Corollary~\ref{cor:offline_log_eps_umr}. Using the same bilinear Bradley--Terry setup with
\(d=5\), \(J=10\), \(100\) contexts and \(100\) actions per user, we vary the number of users
\(U\in\{10,50,100\}\).
Figure~\ref{fig:offline} shows mean temperature-zero regret on a log scale as a function of
sample size. All three curves decay approximately log-linearly, consistent with the exponential
bound \(\mathcal R_{0,p}^{\mathrm{off}}(n)\le 2\Delta_{\max}^{\mathcal P}N_\gamma e^{-c_\gamma n}\)
from Corollary~\ref{cor:offline_log_eps_umr}. 

\begin{figure}[htbp]
    \centering
    \includegraphics[width=1\columnwidth]{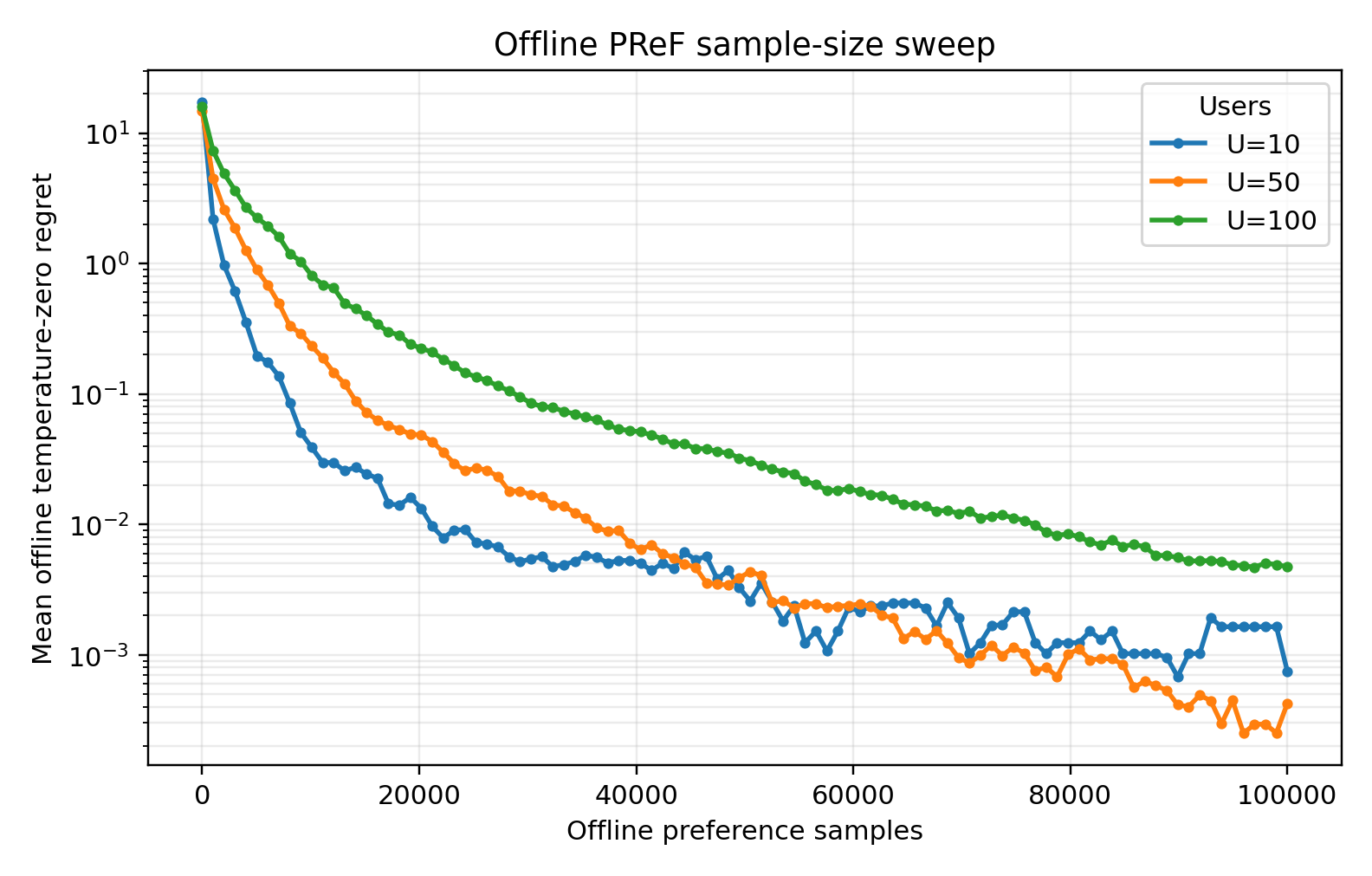}
    \caption{Offline sample-size sweep for \(U\in\{10,50,100\}\) users (\(d=5\), \(J=10\),
    100 contexts, 100 actions per user, log-linear \(y\)-axis). Mean temperature-zero regret
    decays exponentially with sample size across all user counts.}
    \label{fig:offline}
\end{figure}

\section{Conclusion}

We studied personalized alignment gave a sharp
characterization of when efficient learning is possible: the decision-relevant user diversity
condition is both necessary and sufficient for bounded online regret and \(O(\log(1/\varepsilon))\)
offline sample complexity. Simulations confirm that greedy personalized ERM accumulates regret only
during an initial identification phase, and that offline regret decays exponentially with sample size.

\bibliography{references}
\bibliographystyle{icml2026}

\newpage
\appendix
\onecolumn

\section{Online simulation reproducibility details}
\label{app:simulation_details}

This appendix records the exact simulation underlying
Figure~\ref{fig:random_eta_comparison}. 

\paragraph{Problem instance.}
All three trajectories use the same accepted problem seed \(1000\). The simulator first samples
bilinear components \(W_1^\star,\ldots,W_{10}^\star\), random user heads, and then constructs
per-user context and action banks with \(100\) contexts and \(100\) actions per user. The
minimum-gap constructor enforces a raw gap target \(0.01\), and the true heads are scaled by
\(100\). On the realized bank, the minimum top-two reward gap is \(1.0029\), the fifth
percentile is \(1.1200\), the median is \(2.2178\), and the mean is \(2.8848\). The user-head
second moment has minimum eigenvalue \(3.0451\). The reported decision-relevant diversity
diagnostic is \(17.4568\), or \(0.001746\) after dividing by the squared head scale.
The diagnostic is computed as \(\widehat{\mathfrak{d}}=\operatorname{tr}(\widehat{\Sigma}_\lambda\,\widehat{H})\),
where \(\widehat{\Sigma}_\lambda\) is the empirical centered user-head covariance and \(\widehat{H}\)
is the average outer product of representation differences \(\Delta\phi_h=\phi^\star(x_h,a^\star_h)-\phi^\star(x_h,a^{(2)}_h)\)
restricted to the \(10\%\) of user-context pairs with the smallest top-two reward gap (the ``hard'' subset).

\paragraph{Online data collection and fitting.}
At each round, the user is sampled uniformly from the ten users and the context is sampled
uniformly from that user's context bank. The first action is sampled from the learner's
KL-tilted policy with the specified value of \(\eta\), and the second action is sampled from the
uniform reference distribution. The binary label is drawn from the Bradley--Terry probability
\[
\sigma\!\left(R_i^\star(x,a_1)-R_i^\star(x,a_2)\right).
\]
In these runs, \(\eta\) affects only the online sampling distribution; it is not multiplied into
the preference label logit. The learner fits the rank-\(10\) bilinear Bradley--Terry model by
maximum likelihood using the gradient-SVD initializer, alternating representation/head updates,
Newton updates for the user heads, ridge \(10^{-3}\), maximum \(40\) representation updates and
\(25\) head updates per fit, and tolerance \(10^{-9}\). The refit schedule is proportional with
progress divisor \(5000\), and the fit history is not truncated.

\paragraph{Evaluation.}
Regret is evaluated on the realized online arrivals. For each round \(t\), the stored one-step
quantity is
\[
r_t
=
\max_{a\in\cA_{i_t}}R_{i_t}^\star(x_t,a)
-
R_{i_t}^\star\!\left(x_t,\arg\max_{a\in\cA_{i_t}}\widehat R_t(x_t,a,i_t)\right),
\]
and cumulative regret is \(G_T=\sum_{t=1}^T r_t\). Table~\ref{tab:random_eta_summary_app}
summarizes the checkpoints used in Figure~\ref{fig:random_eta_comparison}. Since each
\(\eta\) is represented by one long trajectory, the figure is intended as a qualitative
sanity check of the bounded-regret behavior rather than as an uncertainty-quantified benchmark.

\begin{table}[h]
\centering
\caption{Summary of the three online trajectories in Figure~\ref{fig:random_eta_comparison}.}
\label{tab:random_eta_summary_app}
\begin{tabular}{lrrrr}
\toprule
\(\eta\) & \(G_{200\mathrm{k}}\) & \(G_{400\mathrm{k}}\) & last \(r_t>0\) & subst. frac. \\
\midrule
0.5 & 37,805 & 37,805 & 102,781 & 1.48\% \\
1.0 & 40,630 & 40,902 & 379,720 & 1.80\% \\
2.0 & 41,351 & 41,362 & 225,496 & 1.88\% \\
\bottomrule
\end{tabular}
\end{table}

\section{Offline sample-size sweep reproducibility details}
\label{app:offline_simulation_details}

This appendix records the exact simulation underlying Figure~\ref{fig:offline}.

\paragraph{Problem instances.}
Each of the three curves uses a separate problem instance constructed with the same accepted
problem seed \(1000\) and the same bilinear structure: context/action dimension \(d=5\), latent
dimension \(J=10\), and \(100\) contexts and \(100\) candidate actions per user.  The number of
users varies across the three instances: \(U\in\{10,50,100\}\).  The minimum-gap constructor
enforces a raw gap target of \(0.01\) at the base scale, and the true user heads are rescaled
to a common head scale of \(20\) at run time.
Table~\ref{tab:offline_instance_stats} reports the realized bank statistics for each instance.

\begin{table}[h]
\centering
\caption{Realized problem-instance statistics for the offline sweep (Figure~\ref{fig:offline}).
All gap and DRD values are reported at the run-time head scale of \(20\).}
\label{tab:offline_instance_stats}
\begin{tabular}{lrrr}
\toprule
 & \(U=10\) & \(U=50\) & \(U=100\) \\
\midrule
Min top-two gap          & 0.2006 & 0.2000 & 0.2001 \\
5th-pct top-two gap      & 0.2240 & 0.2247 & 0.2221 \\
Median top-two gap       & 0.4436 & 0.4686 & 0.4473 \\
Mean top-two gap         & 0.5770 & 0.5959 & 0.5773 \\
Head 2nd-moment min eig  & 0.1218 & 13.125 & 22.757 \\
DRD \(\widehat{\mathfrak{d}}\) & 0.6983 & 1.4750 & 1.5473 \\
DRD (scale-free)         & 0.001746 & 0.003687 & 0.003868 \\
\bottomrule
\end{tabular}
\end{table}

\paragraph{Offline data collection and fitting.}
Each instance uses \(n=100{,}000\) preference samples logged from the uniform reference
distribution over the per-user action bank.  Each preference is a binary Bradley--Terry label
drawn from
\[
\sigma\!\left(R_i^\star(x,a_1)-R_i^\star(x,a_2)\right),
\]
where both actions are drawn uniformly at random from the user's action bank.  The learner fits
the rank-\(10\) bilinear Bradley--Terry model by maximum likelihood using the gradient-SVD
initializer, alternating representation and head updates, Newton updates for user heads, ridge
\(10^{-3}\), maximum \(40\) representation updates and \(25\) head updates per fit, and
tolerance \(10^{-9}\).  The refit schedule is proportional with progress divisor \(5{,}000\).

\paragraph{Evaluation.}
For each of \(100\) evenly spaced prefix sizes \(n\in\{0,1010,2020,\ldots,100{,}000\}\), the
model is refit on the first \(n\) samples and temperature-zero regret is evaluated on every
realized user-context-action triple in the bank.  The quantity reported in
Figure~\ref{fig:offline} is mean temperature-zero regret, averaged uniformly over all
user-context pairs.

\section{Related works}
\label{sec:related_works}

\paragraph{Alignment from preference feedback.}
Reinforcement learning from human feedback learns from pairwise or slate-level human comparisons and then optimizes a language-model policy against the learned preference signal, often under a KL constraint to a reference model \citep{christiano2017deep,ziegler2019fine,stiennon2020learning,ouyang2022training}. Direct preference optimization and related objectives remove the explicit reward-modeling stage by exploiting the closed-form relation between KL-regularized rewards and optimal policies, while recent theoretical work studies broader pairwise-preference and game-theoretic formulations \citep{rafailov2023direct,azar2024general,munos2024nash}. Our paper keeps an explicit reward representation because the central question is not only how to optimize a policy from preferences, but when personalized preference data statistically identifies the decision-relevant directions of a shared reward representation.

\paragraph{Pluralistic and personalized alignment.}
A growing body of work argues that standard alignment to a single aggregate preference distribution is insufficient when users disagree systematically \citep{bakker2022fine,sorensen2024roadmap,conitzer2024social,kirk2024prism}. Empirical and benchmark work further shows that heterogeneous, culturally dependent, or individual-specific preferences create evaluation and data-collection challenges that are not captured by one-size-fits-all reward modeling \citep{zollo2025personallm}. Our work takes this pluralistic motivation as the starting point, but focuses on a narrower statistical question: when does user heterogeneity help or fail to help a greedy alignment learner identify the temperature-zero recommendation for each user?

\paragraph{Low-dimensional personalized reward models.}
Several recent methods model personalized rewards through a low-dimensional latent structure, including low-rank reward modeling, reward factorization, latent-variable personalization, ideal-point or mixture models, and shared reward features \citep{bose2025lore,shenfeld2025language,poddar2024personalizing,chen2024pal,barreto2025capturing}. These works provide strong empirical and architectural evidence that user-specific rewards can often be represented through shared factors and user-specific coefficients. Our model uses the same shared-representation/user-head inductive bias, but our contribution is theoretical: we characterize the decision-relevant user-diversity condition under which that structure is sufficient for bounded online temperature-zero regret.

\paragraph{Policy-level and inference-time personalization.}
A complementary line personalizes the policy or decoding procedure directly, for example by training multiple preference-specialized policies and merging them, conditioning on group preferences, optimizing multiple objectives, or steering generation at inference time \citep{jang2023personalized,zhao2023group,zhou2024beyond,yang2024rewards,chen2024pad}. These approaches address the implementation side of personalization: how to produce outputs that reflect a specified user, group, or reward tradeoff. Our analysis is orthogonal to that design problem: we study the sampling and identifiability problem induced by online preference feedback, and show that the relevant diversity is not generic demographic variation but variation spanning the representation directions that can alter greedy recommendations.

\paragraph{Online and iterative alignment.}
Recent online RLHF theory studies iterative preference learning under KL constraints, both in reward-based Bradley--Terry settings and in more general preference-oracle models \citep{xiong2024iterative,ye2024online}. Greedy or empirically driven sampling has also been shown to be unexpectedly effective for RLHF-style objectives, and recent temperature-zero analyses separate the cost of identifying the best response from the exploration induced by softened deployment policies \citep{wu2025greedy,kang2026demystifying}. We extend this decision-centric view to personalized alignment: the learner must not only collect informative actions, but also observe a user population whose heads reveal the shared representation directions that matter for individualized recommendations.

\paragraph{Implicit exploration and diversity in contextual bandits.}
The possibility that greedy learning succeeds without explicit optimism is closely related to contextual-bandit results under covariate diversity, smoothed contexts, natural exploration, good representations, and anti-concentration \citep{bastani2021mostly,kannan2018smoothed,hao2020adaptive,papini2021leveraging,raghavan2020greedy,kim2024local}. These conditions show that randomness in contexts or representation geometry can supply the exploration that a greedy policy does not deliberately enforce. Our user-diversity modulus is the personalized-preference analogue of such self-exploration conditions, but it is restricted to alternatives that both remain statistically nearby and change the temperature-zero decision.

\paragraph{Preference models, ambiguous comparisons, and margins.}
The Bradley--Terry and multinomial-logit models are standard probabilistic models for comparisons and choices \citep{bradley1952rank,luce1959individual}. In modern RLHF data, however, pairwise labels can be noisy, low-margin, tied, or ambiguous, and recent work proposes data-quality adjustment, reward-margin modeling, granular feedback, or explicit treatment of ties \citep{wang2024reward,qin2024towards,kim2024margin,liu2024rewardties}. Our uniform supportwise gap condition should be read in this decision-theoretic spirit: the theory isolates stable recommendation decisions and does not require that every weak human comparison be perfectly separable.

\paragraph{Offline preference learning and coverage.}
Offline preference optimization methods such as DPO and related theoretical frameworks rely on the logged preference distribution containing enough information about the policies or responses being evaluated \citep{rafailov2023direct,azar2024general,xiong2024iterative}. In our offline setting, slates are logged from the reference policy \(\pi_0\), so the population objective coincides with the truth-centered loss used in the online proof and no additional concentrability factor is needed. This yields exponential fixed-scale control and \(O(\log(1/\varepsilon))\) expected temperature-zero regret complexity, while the matching two-instance lower bound shows that the logarithmic dependence is sharp on nontrivial subclasses.

\paragraph{Summary of the distinction.}
Prior work establishes that personalized alignment is empirically important, that low-dimensional reward structure is a useful modeling principle, and that online preference optimization can be statistically efficient. This paper connects these threads by proving a necessary-and-sufficient condition for personalized greedy alignment: bounded online temperature-zero regret holds exactly when nearby decision-relevant representation alternatives are separated by the user-head diversity of the population. The same compact decision-theoretic framework also explains the offline analogue, where reference-logged preference data gives logarithmic accuracy complexity and an exponential testing lower bound.

\section{Technical lemmas}
\label{app:supporting_lemmas}
\label{app:linear_neural_examples}

Several technical lemmas in this appendix are user-indexed versions of the centering, MNL-loss, slate-domination, support-upgrade, and exact-ERM concentration lemmas in \citet{kang2026demystifying}. We retain the statements to make the personalized notation unambiguous. When the proof is unchanged after replacing a context \(x\) by a user-context pair \((x,i)\), we cite the non-personalized proof rather than repeat it.

\begin{lemma}[$\pi_0$-centering is w.l.o.g.\ for MNL likelihoods]
\label{lem:pi0_centering_wlog_psi}
Fix any score function $R:\cX\times\cA\times\cU\to\R$ and define its $\pi_0$-centered version
\[
\bar R(x,a,i)\ \coloneqq\ R(x,a,i)-m_R(x,i),
\qquad
m_R(x,i)\ \coloneqq\ \EE_{a'\sim\pi_0(\cdot\mid x)}[R(x,a',i)].
\]
Then for every $(x,i)$ and every slate $\mathbf a=(a_1,\dots,a_K)$:
\begin{enumerate}[label=(\roman*),leftmargin=2em]
\item $P_R(\cdot\mid x,i,\mathbf a)=P_{\bar R}(\cdot\mid x,i,\mathbf a)$,
\item $\pi_R(\cdot\mid x,i)=\pi_{\bar R}(\cdot\mid x,i)$ for KL-tilts,
\item $\arg\max_{a\in\supp(\pi_0(\cdot\mid x))}R(x,a,i)=\arg\max_{a\in\supp(\pi_0(\cdot\mid x))}\bar R(x,a,i)$.
\end{enumerate}
\end{lemma}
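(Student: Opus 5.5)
The plan is to observe that, for fixed \((x,i)\), \(\bar R(\cdot,\cdot,i)\) differs from \(R\) only by the action-independent constant \(m_R(x,i)\). Each of the three objects in the statement is invariant under adding a constant that does not depend on \(a\). One caveat comes first: \(m_R(x,i)\) must be finite for \(\bar R\) to make sense. I will assume \(R(x,\cdot,i)\in L^1(\pi_0(\cdot\mid x))\). In the paper's uses this holds because the scores are bounded on \(\cA_0(x)\) by the envelope \(B_{\mathcal P}\), and \(\pi_0(\cdot\mid x)\) puts full mass on \(\cA_0(x)\), since the topological support of a Borel probability measure on a separable metric space has full measure.

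For (i), I fix a slate \(\mathbf a\) and write \(c=m_R(x,i)\). Then
\[
P_{\bar R}(y=k\mid x,i,\mathbf a)=\frac{e^{R(x,a_k,i)-c}}{\sum_{\ell}e^{R(x,a_\ell,i)-c}},
\]
and the factor \(e^{-c}\) cancels between numerator and denominator, which gives \(P_R\). Equivalently, \(\ell(\mathbf v-c\mathbf 1,y)=\ell(\mathbf v,y)\).

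For (ii), the KL-tilt is \(\pi_R(a\mid x,i)=\pi_0(a\mid x)e^{\eta R(x,a,i)}/Z_R(x,i)\). Replacing \(R\) by \(\bar R\) multiplies the numerator by \(e^{-\eta c}\). It also multiplies the normalizer by the same factor, because
\[
Z_{\bar R}=\int\pi_0(da'\mid x)\,e^{\eta R(x,a',i)-\eta c}=e^{-\eta c}Z_R.
\]
The two policies therefore coincide whenever \(Z_R\in(0,\infty)\), which is exactly the condition for \(\pi_R\) to be well defined. Both measures are absolutely continuous with respect to \(\pi_0\) with the same density, so they agree as measures.

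For (iii), the map \(a\mapsto R(x,a,i)-c\) on \(\supp(\pi_0(\cdot\mid x))\) is a strictly order-preserving shift of \(a\mapsto R(x,a,i)\). The two maps therefore have the same maximizers, and one argmax is empty exactly when the other is. Consequently, any measurable selector for one score is a selector for the other.

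There is no real obstacle here. The only delicate point is the well-definedness and finiteness of \(m_R(x,i)\) and \(Z_R(x,i)\), which I handle through the integrability assumption stated at the outset.
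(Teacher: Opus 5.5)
Your proposal is correct and follows the paper's own argument: for fixed $(x,i)$ the shift $m_R(x,i)$ does not depend on the action, so it cancels in the MNL ratio and in the KL-tilt normalizer, and it leaves the supportwise argmax unchanged. Your integrability caveat, requiring $m_R(x,i)$ and $Z_R(x,i)$ to be finite, makes explicit an assumption the paper leaves implicit; it holds in the paper's setting because its scores are bounded on the reference support.
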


\begin{proof}
This is the user-indexed version of the \(\pi_0\)-centering invariance lemma in \citet{kang2026demystifying}. Conditional on a fixed \((x,i)\), the centering term \(m_R(x,i)\) is independent of the action. It therefore cancels in every MNL probability ratio and in the normalizing constant of the KL tilt, and leaves supportwise argmax sets unchanged.
\end{proof}

\begin{lemma}[Boundedness and Lipschitzness of the MNL log-loss]
\label{lem:mnl_loss_basic}
Let $\ell(\mathbf v,y)=\log(\sum_{k=1}^K e^{v_k})-v_y$ be the MNL negative log-likelihood.
If $\max_k v_k-\min_k v_k\le 2B$, then for every $y$,
\[
0\ \le\ \ell(\mathbf v,y)\ \le\ \log K + 2B.
\]
Moreover, for any $\mathbf v,\mathbf v'\in\R^K$ and any $y$,
\[
|\ell(\mathbf v,y)-\ell(\mathbf v',y)|\ \le\ 2\|\mathbf v-\mathbf v'\|_\infty.
\]
\end{lemma}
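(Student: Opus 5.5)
Both claims are elementary properties of the log-sum-exp function, so I would treat them separately and directly. Write $\mathrm{LSE}(\mathbf v)\coloneqq\log\sum_{k=1}^K e^{v_k}$, so that $\ell(\mathbf v,y)=\mathrm{LSE}(\mathbf v)-v_y$.

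\emph{Boundedness.} For the lower bound, note that $\sum_k e^{v_k}\ge e^{v_y}$. Hence $\mathrm{LSE}(\mathbf v)\ge v_y$, which gives $\ell\ge 0$.

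For the upper bound, use $\sum_k e^{v_k}\le K e^{\max_k v_k}$. This gives $\mathrm{LSE}(\mathbf v)\le \log K+\max_k v_k$, and therefore
\[
\ell(\mathbf v,y)\le \log K+\max_k v_k-v_y\le \log K+\max_k v_k-\min_k v_k\le \log K+2B.
\]
Here the last step is exactly the hypothesis on the range of $\mathbf v$.

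\emph{Lipschitzness.} The triangle inequality gives
\[
|\ell(\mathbf v,y)-\ell(\mathbf v',y)|\le |\mathrm{LSE}(\mathbf v)-\mathrm{LSE}(\mathbf v')|+|v_y-v'_y|.
\]
The second term is at most $\|\mathbf v-\mathbf v'\|_\infty$, so it suffices to show that $\mathrm{LSE}$ is $1$-Lipschitz in $\|\cdot\|_\infty$.

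I would do this by a monotonicity argument rather than via gradients. Set $\delta\coloneqq\|\mathbf v-\mathbf v'\|_\infty$. Then $v_k\le v'_k+\delta$ for every $k$, so
\[
\sum_k e^{v_k}\le e^{\delta}\sum_k e^{v'_k},
\]
which gives $\mathrm{LSE}(\mathbf v)\le \mathrm{LSE}(\mathbf v')+\delta$. Exchanging the roles of $\mathbf v$ and $\mathbf v'$ gives the reverse inequality. Hence $|\mathrm{LSE}(\mathbf v)-\mathrm{LSE}(\mathbf v')|\le\delta$. (Alternatively, the gradient of $\mathrm{LSE}$ is the softmax vector, whose $\ell_1$ norm is $1$, and the mean value theorem gives the same bound.)

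Adding the two terms yields the claimed constant $2$. There is no real obstacle; the only point requiring any care is choosing the monotonicity argument so that the $\ell_\infty$ Lipschitz constant of $\mathrm{LSE}$ comes out as exactly $1$.
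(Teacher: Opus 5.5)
Your proof is correct and essentially matches the paper's. The boundedness argument is identical, and for the Lipschitz bound the paper uses the gradient route you mention as an alternative: $\nabla_{\mathbf v}\ell=\softmax(\mathbf v)-e_y$ has $\ell_1$ norm at most $2$, and the mean value theorem finishes. Your split into the LSE term plus the $v_y$ term, with the shift-monotonicity bound on LSE, reaches the same constant without differentiation.
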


\begin{proof}
This is the MNL log-loss envelope and Lipschitz lemma used in \citet{kang2026demystifying}. The bound follows from \(\log\sum_k e^{v_k}\le \log K+\max_k v_k\) and \(v_y\ge \min_k v_k\). The Lipschitz claim follows because \(\nabla_{\mathbf v}\ell(\mathbf v,y)=\softmax(\mathbf v)-e_y\) has \(\ell_1\)-norm at most \(2\), so the mean-value theorem gives the displayed \(2\|\cdot\|_\infty\) bound.
\end{proof}

\begin{lemma}[One-step excess loss equals choice-model KL]
\label{lem:excess_is_kl_support}
Fix any realized $(x,i,\mathbf a)$, any truth score $S^\star$, and any candidate score $S$.
Let
\[
p^\star=P_{S^\star}(\cdot\mid x,i,\mathbf a),
\qquad
p=P_S(\cdot\mid x,i,\mathbf a).
\]
Then
\[
\EE_{y\sim p^\star}\big[\ell(\mathbf v_S,y)-\ell(\mathbf v_{S^\star},y)\big]
=
\KL(p^\star\|p),
\]
where
\[
\mathbf v_S=(S(x,a_1,i),\dots,S(x,a_K,i)),
\qquad
\mathbf v_{S^\star}=(S^\star(x,a_1,i),\dots,S^\star(x,a_K,i)).
\]
\end{lemma}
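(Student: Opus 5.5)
The identity follows by expanding the MNL log-loss at the realized slate and taking the conditional expectation over the label. I work with a fixed realized triple \((x,i,\mathbf a)\); nothing depends on centering or on the model class.

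\emph{Step 1: write the loss as a negative log-probability.} For any score \(S\) and label \(y\), the definition of \(\ell\) together with the MNL formula for \(P_S\) gives
\[
\ell(\mathbf v_S,y)=\log\Big(\sum_{k=1}^K e^{S(x,a_k,i)}\Big)-S(x,a_y,i)=-\log p_y .
\]
In the same way, \(\ell(\mathbf v_{S^\star},y)=-\log p^\star_y\). Both are finite, because every MNL probability is strictly positive: each coordinate is a ratio of exponentials of real numbers.

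\emph{Step 2: take the expectation.} The loss difference is
\[
\ell(\mathbf v_S,y)-\ell(\mathbf v_{S^\star},y)=\log\frac{p^\star_y}{p_y}.
\]
Averaging over \(y\sim p^\star\) gives
\[
\sum_{k=1}^K p^\star_k\log\frac{p^\star_k}{p_k}=\KL(p^\star\|p).
\]
This is the definition of KL divergence on the finite set \(\{1,\dots,K\}\).

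There is no real obstacle. The only point to check is well-definedness. Full support of both \(p\) and \(p^\star\) guarantees that the KL divergence is finite and that no \(0\log 0\) convention is needed. The expectation is a finite sum, so no integrability issues arise. For the same reason as in Lemma~\ref{lem:pi0_centering_wlog_psi}, the identity holds verbatim for centered or uncentered scores.
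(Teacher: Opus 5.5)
Your proof is correct and follows the paper's argument: both write the MNL loss as $-\log$ of the MNL choice probability and average the log-ratio $\log(p^\star_y/p_y)$ under $y\sim p^\star$ to obtain $\KL(p^\star\|p)$. Your added remark that all MNL probabilities are strictly positive, so the KL is finite and no $0\log 0$ convention is needed, is a harmless extra detail.
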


\begin{proof}
This is the standard log-score identity used in \citet{kang2026demystifying}: under the true MNL law \(p^\star\), the expected excess negative log-likelihood of a candidate law \(p\) equals \(\sum_k p_k^\star\log(p_k^\star/p_k)=\KL(p^\star\|p)\).
\end{proof}

\begin{lemma}[Slate expectation domination]
\label{lem:slate_domination_likelihood_ratio}
Fix any $x$ and any constant $C\ge1$. Let $\pi(\cdot\mid x)$ be a distribution satisfying
\[
\pi(a\mid x)\ge C^{-1}\pi_0(a\mid x)
\qquad
\pi_0(\cdot\mid x)\text{-a.s.}
\]
Let $\mathbf A\sim \pi(\cdot\mid x)^{\otimes K}$ and
$\mathbf A_0\sim \pi_0(\cdot\mid x)^{\otimes K}$.
Then for every nonnegative measurable $g$,
\[
\EE\big[g(\mathbf A)\big]\ \ge\ C^{-K}\,\EE\big[g(\mathbf A_0)\big].
\]
\end{lemma}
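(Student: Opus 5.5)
The plan is a change of measure, slate by slate. Write the densities with respect to $\pi_0(\cdot\mid x)$: since $\pi(\cdot\mid x)\ge C^{-1}\pi_0(\cdot\mid x)$ holds $\pi_0$-a.s., the measure $\pi(\cdot\mid x)$ dominates $C^{-1}\pi_0(\cdot\mid x)$ as measures on $\cA$. I read the hypothesis as the measure inequality $\pi(E\mid x)\ge C^{-1}\pi_0(E\mid x)$ for every measurable $E\subseteq\cA$. This is what the density statement means when $\pi$ is given through its Radon--Nikodym derivative on the part absolutely continuous with respect to $\pi_0$; any singular part of $\pi$ only adds mass and so only helps.

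First, I would lift the inequality to the product measure. For measurable rectangles $E_1\times\cdots\times E_K$ we have
\[
\pi^{\otimes K}(E_1\times\cdots\times E_K)=\prod_{k}\pi(E_k\mid x)\ge C^{-K}\prod_k\pi_0(E_k\mid x)=C^{-K}\pi_0^{\otimes K}(E_1\times\cdots\times E_K).
\]
Then I would extend from rectangles to all measurable subsets of $\cA^K$. The cleanest way is to avoid monotone-class subtleties (differences of two measures are not monotone) by decomposing $\pi=C^{-1}\pi_0+\nu$ with $\nu\ge0$ a measure. This decomposition is valid precisely because $\pi-C^{-1}\pi_0$ is a nonnegative measure. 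Expanding the product then gives
\[
\pi^{\otimes K}=(C^{-1}\pi_0+\nu)^{\otimes K}=C^{-K}\pi_0^{\otimes K}+(\text{a sum of nonnegative product measures}),
\]
so $\pi^{\otimes K}-C^{-K}\pi_0^{\otimes K}$ is itself a nonnegative measure on $\cA^K$.

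Finally, for nonnegative measurable $g$ I integrate against this nonnegative remainder measure:
\[
\EE[g(\mathbf A)]=\int g\,d\pi^{\otimes K}=C^{-K}\int g\,d\pi_0^{\otimes K}+\int g\,d(\text{remainder})\ge C^{-K}\EE[g(\mathbf A_0)].
\]
Allowing the value $+\infty$ causes no issue, since all terms are nonnegative.

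The only delicate step is interpreting the ``$\pi_0$-a.s.'' density hypothesis as a genuine measure inequality when $\pi$ may fail to be absolutely continuous with respect to $\pi_0$. I would handle it through the Lebesgue decomposition of $\pi$ relative to $\pi_0$, applying the hypothesis to the absolutely continuous part. Everything after that is the multilinear expansion above.
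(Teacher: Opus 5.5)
Your proof is correct, but it runs through a slightly different mechanism than the paper's. The paper works with densities. The coordinatewise bound $d\pi/d\pi_0\ge C^{-1}$ gives
\[
\frac{d\pi^{\otimes K}}{d\pi_0^{\otimes K}}(a_1,\dots,a_K)=\prod_{k=1}^K\frac{d\pi}{d\pi_0}(a_k)\ge C^{-K},
\]
and integrating $g\ge0$ against the two product measures finishes the proof. You work with measures instead: you write $\pi=C^{-1}\pi_0+\nu$ with $\nu\ge0$ and expand $\pi^{\otimes K}$ multilinearly, which isolates $C^{-K}\pi_0^{\otimes K}$ plus nonnegative cross terms.

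What each route buys:
\begin{itemize}
\item The paper's version is a one-liner. It tacitly uses $\pi\ll\pi_0$, which holds in every application in the paper (KL tilts and $\beta$-admissible kernels). It also uses the fact that the Radon--Nikodym derivative of a product measure is the product of the coordinate derivatives.
\item Your version needs only that products of finite measures distribute over sums, which follows from uniqueness on rectangles. It also absorbs any singular part of $\pi$ relative to $\pi_0$ at no extra cost.
\end{itemize}

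One small correction to your aside about monotone-class subtleties. For finite measures, the class of sets $E\subseteq\cA^K$ with $\pi^{\otimes K}(E)\ge C^{-K}\pi_0^{\otimes K}(E)$ is closed under increasing and decreasing limits. It also contains the algebra of finite disjoint unions of rectangles, so the rectangle inequality would in fact extend directly by the monotone class theorem. Only the Dynkin $\pi$--$\lambda$ route fails, because the class need not be closed under proper differences. Your decomposition sidesteps the issue either way, so the argument is unaffected.
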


\begin{proof}
This is the slate-domination lemma from \citet{kang2026demystifying}. The coordinatewise lower bound \(d\pi/d\pi_0\ge C^{-1}\) implies \(d\pi^{\otimes K}/d\pi_0^{\otimes K}\ge C^{-K}\) on slates. Integrating the nonnegative function \(g\) with respect to the two product measures gives the claim.
\end{proof}

\begin{lemma}[Likelihood-ratio bound for KL tilts]
\label{lem:greedy_kl_tilt_lr}
Fix a tilt parameter \(\eta>0\). Let \(S:\cX\times\cA\times\cU\to\R\) satisfy
\[
|S(x,a,i)|\le B
\qquad
\pi_0(\cdot\mid x)\text{-a.s. }a,
\quad \forall (x,i)\in\cX\times\cU.
\]
Let
\[
\pi_S(a\mid x,i)
=
\frac{\pi_0(a\mid x)\exp(\eta S(x,a,i))}
{\int_{\cA}\pi_0(a'\mid x)\exp(\eta S(x,a',i))\,da'}.
\]
Then
\[
e^{-2\eta B}
\le
\frac{d\pi_S(\cdot\mid x,i)}{d\pi_0(\cdot\mid x)}(a)
\le
e^{2\eta B},
\qquad
\pi_0(\cdot\mid x)\text{-a.s.}
\]
for every \(x\in\cX\) and \(i\in\cU\).
\end{lemma}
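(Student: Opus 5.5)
The plan is to compute the Radon--Nikodym derivative explicitly and bound its numerator and denominator separately. Fix \(x\in\cX\) and \(i\in\cU\). By construction, \(\pi_S(\cdot\mid x,i)\) has density with respect to \(\pi_0(\cdot\mid x)\) given by
\[
\frac{d\pi_S(\cdot\mid x,i)}{d\pi_0(\cdot\mid x)}(a)
=
\frac{\exp(\eta S(x,a,i))}{Z_S(x,i)},
\qquad
Z_S(x,i)\coloneqq \EE_{A'\sim\pi_0(\cdot\mid x)}\big[\exp(\eta S(x,A',i))\big].
\]
Here the integral against \(\pi_0(a'\mid x)\,da'\) in the statement is read as integration with respect to the measure \(\pi_0(\cdot\mid x)\). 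Before using this formula, I will check that \(Z_S(x,i)\) is finite and positive. This holds because the integrand is bounded between \(e^{-\eta B}\) and \(e^{\eta B}\) \(\pi_0\)-a.s. Hence \(\pi_S\) is a well-defined probability measure with the displayed density.

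Next, I bound the numerator. Since \(|S(x,a,i)|\le B\) for \(\pi_0(\cdot\mid x)\)-a.e. \(a\), we have \(e^{-\eta B}\le \exp(\eta S(x,a,i))\le e^{\eta B}\) on a \(\pi_0\)-full set. The denominator is an expectation of the same quantity over the same \(\pi_0\)-full set, so \(e^{-\eta B}\le Z_S(x,i)\le e^{\eta B}\) by monotonicity of the integral.

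Combining the two bounds on the same \(\pi_0\)-full set gives
\[
e^{-2\eta B}=\frac{e^{-\eta B}}{e^{\eta B}}\le \frac{\exp(\eta S(x,a,i))}{Z_S(x,i)}\le \frac{e^{\eta B}}{e^{-\eta B}}=e^{2\eta B}.
\]
This is the claim for each fixed \((x,i)\), and since \((x,i)\) was arbitrary it holds for all of them.

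There is no real obstacle here. The only point needing care is measure-theoretic: the bounds hold only \(\pi_0\)-a.s. I will intersect the exceptional null sets for the numerator and the denominator. The denominator is unaffected by null sets since it is an integral, so a single \(\pi_0\)-null exceptional set suffices. I will also note that the density is unique only up to \(\pi_0\)-null sets, which is why the conclusion is stated \(\pi_0\)-a.s.
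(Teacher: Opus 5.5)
Your proposal is correct and follows essentially the same argument as the paper: you bound the normalizer $Z_S(x,i)$ between $e^{-\eta B}$ and $e^{\eta B}$ using the $\pi_0$-a.s.\ bound on $S$, then divide the pointwise bounds on $e^{\eta S(x,a,i)}$ by it to get the density bounds. Your extra remarks on the well-definedness of $Z_S$ and on the null-set bookkeeping are harmless refinements of the same proof.
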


\begin{proof}
Fix \((x,i)\). Since \(|S(x,a,i)|\le B\) for \(\pi_0(\cdot\mid x)\)-a.e.\ \(a\), the normalizing constant
\[
Z_{S}(x,i)
\coloneqq
\int_{\cA}\pi_0(a'\mid x)\exp(\eta S(x,a',i))\,da'
\]
satisfies
\[
e^{-\eta B}\le Z_S(x,i)\le e^{\eta B}.
\]
Therefore, for \(\pi_0(\cdot\mid x)\)-a.e.\ \(a\),
\[
e^{-2\eta B}
\le
\frac{e^{\eta S(x,a,i)}}{Z_S(x,i)}
\le
e^{2\eta B}.
\]
Since
\[
\frac{d\pi_S(\cdot\mid x,i)}{d\pi_0(\cdot\mid x)}(a)
=
\frac{e^{\eta S(x,a,i)}}{Z_S(x,i)},
\]
the claim follows.
\end{proof}

\begin{lemma}[Expected choice-KL lower bound]
\label{lem:expected_kl_ge_var_support}
Fix $(x,i)$ and a distribution $q(\cdot\mid x)$ on $\cA$ such that
$q(\cdot\mid x)\ll \pi_0(\cdot\mid x)$.
Sample a slate $\mathbf a=(A_1,\dots,A_K)$ i.i.d.\ from $q(\cdot\mid x)$.
Let
\[
\Delta(a)=S(x,a,i)-S^\star(x,a,i)
\]
for a truth score $S^\star$ and a candidate score $S$, and assume
\[
\sup_{a\in \supp(\pi_0(\cdot\mid x))} |S(x,a,i)|\le B,
\qquad
\sup_{a\in \supp(\pi_0(\cdot\mid x))} |S^\star(x,a,i)|\le B.
\]
Then
\[
\EE_{\mathbf a\sim q(\cdot\mid x)^{\otimes K}}
\Big[
\KL\big(P_{S^\star}(\cdot\mid x,i,\mathbf a)\,\|\,P_S(\cdot\mid x,i,\mathbf a)\big)
\Big]
\ \ge\
c_{\mathrm{mnl}}\,
\Var_{a\sim q(\cdot\mid x)}\!\big(\Delta(a)\big),
\]
with
\[
c_{\mathrm{mnl}}=\frac{e^{-2B}}{2}\cdot\frac{K-1}{K}.
\]
\end{lemma}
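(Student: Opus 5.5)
The plan is to reduce the slate-level KL to a pointwise lower bound in terms of the squared differences of the \(\Delta\)-values within a slate. Averaging over i.i.d.\ slate draws will then turn that bound into a variance.

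Fix a slate \(\mathbf a\) and write \(v^\star_k=S^\star(x,a_k,i)\), \(v_k=S(x,a_k,i)\), and \(\delta_k=v_k-v^\star_k=\Delta(a_k)\). Since \(q\ll\pi_0\), every sampled action lies in \(\supp(\pi_0(\cdot\mid x))\) almost surely, so all scores involved are bounded by \(B\).

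The first step is a local strong-convexity bound for the log-partition function. Set \(F(\mathbf v)=\log\sum_k e^{v_k}\). By Lemma~\ref{lem:excess_is_kl_support},
\[
\KL(p^\star\|p)=F(\mathbf v^\star+\boldsymbol\delta)-F(\mathbf v^\star)-\langle\nabla F(\mathbf v^\star),\boldsymbol\delta\rangle ,
\]
which is a Bregman divergence of \(F\). I will write it in Taylor integral form,
\[
\int_0^1(1-s)\,\boldsymbol\delta^\top\nabla^2F(\mathbf v^\star+s\boldsymbol\delta)\,\boldsymbol\delta\,ds ,
\]
where \(\nabla^2F(\mathbf w)=\diag(\pi)-\pi\pi^\top\) with \(\pi=\softmax(\mathbf w)\). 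This gives
\[
\boldsymbol\delta^\top\nabla^2F\,\boldsymbol\delta=\Var_{\pi}(\boldsymbol\delta)=\tfrac12\sum_{k,l}\pi_k\pi_l(\delta_k-\delta_l)^2 .
\]
Along the segment \(\mathbf w=\mathbf v^\star+s\boldsymbol\delta\), every coordinate is a convex combination of a \(B\)-bounded \(S^\star\)-value and a \(B\)-bounded \(S\)-value, so \(|w_k|\le B\). Hence \(\pi_k\ge e^{-2B}/K\), since each ratio \(e^{w_k}/e^{w_l}\) is at least \(e^{-2B}\). This yields
\[
\KL\ge \tfrac12\cdot\tfrac12\cdot\frac{e^{-4B}}{K^2}\sum_{k,l}(\delta_k-\delta_l)^2 .
\]
That constant is weaker than the claimed one, so it needs sharpening.

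To reach \(c_{\mathrm{mnl}}=\tfrac{e^{-2B}}{2}\tfrac{K-1}{K}\), I would bound the Hessian quadratic form more carefully. I would use \(\pi_k\pi_l\ge e^{-2B}/K^2\cdot(\text{something})\), or alternatively compare \(\pi\) with the uniform distribution via a single factor \(e^{-2B}\) on the pair weights: \(\pi_k\pi_l\ge e^{-2B}\pi_k\pi_l\)-type bounds relative to the uniform \(1/K^2\). Matching the exact constant may require using that centered scores have range at most \(2B\) and bounding \(\pi_k\pi_l\ge e^{-2B}/K^2\) by pairing. This is where the precise constant is determined, and I expect it to be the main obstacle: the \(e^{-2B}\) versus \(e^{-4B}\) bookkeeping. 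If needed, I would first establish the bound with the cruder constant and then tighten it via the pairwise ratio \(\pi_k\pi_l/(\pi_k+\pi_l)^2\) over two-point sub-comparisons.

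The final step takes expectations over the slate. With \(A_k\) i.i.d.\ from \(q\), the relevant identity is
\[
\EE\Big[\tfrac1{K^2}\sum_{k,l}(\Delta(A_k)-\Delta(A_l))^2\Big]=\frac{K(K-1)}{K^2}\cdot 2\Var_q(\Delta),
\]
because diagonal terms vanish and each off-diagonal pair contributes \(2\Var\). Plugging this into the pointwise bound gives
\[
\EE\,\KL\ge \frac{e^{-2B}}{2}\cdot\frac{K-1}{K}\cdot\Var_q(\Delta),
\]
once the per-pair weight is shown to be at least \(e^{-2B}/K^2\) times \(\tfrac12\sum_{k,l}\). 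The \(\tfrac{K-1}{K}\) factor arises exactly from discarding the diagonal. Measurability and integrability are immediate from boundedness.
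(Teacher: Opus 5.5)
Your skeleton matches the paper's: the Bregman/Taylor form of the KL, $\delta^\top\nabla^2F(w)\delta=\Var_{\pi}(\delta)$ with $\pi=\softmax(w)$, the bound $\pi_k\ge e^{-2B}/K$ along the segment, and the slate-averaging identity giving $\frac{K-1}{K}\Var_q(\Delta)$ are all correct. The gap is the middle step, which you leave open. The pairwise form $\Var_\pi(\delta)=\frac12\sum_{k,l}\pi_k\pi_l(\delta_k-\delta_l)^2$ with termwise bounds only gives $e^{-4B}$, so as written you prove the lemma with $e^{-4B}$ in place of $e^{-2B}$.

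Your final display assumes the termwise bound $\pi_k\pi_l\ge e^{-2B}/K^2$, and that bound is false in general. Take $w_k=w_l=-B$ and the other $K-2$ coordinates equal to $B$. Then $\pi_k\pi_l=(2+(K-2)e^{2B})^{-2}$. The inequality $\pi_k\pi_l<e^{-2B}/K^2$ is equivalent to $(e^B-1)\bigl((K-2)e^B-2\bigr)>0$. This holds for every $B>0$ when $K\ge 4$, and for $B>\log 2$ when $K=3$. Your two-point-ratio idea fails for the same reason: $\pi_k\pi_l/(\pi_k+\pi_l)^2\ge e^{-2B}/4$ is true, but $(\pi_k+\pi_l)^2$ can itself be of order $e^{-4B}/K^2$. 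Any argument that lower-bounds the pair weights $\pi_k\pi_l$ one at a time pays the factor $e^{-2B}$ twice. Your route does work for $K=2$, since there $\pi_1\pi_2=1/(4\cosh^2((w_1-w_2)/2))\ge e^{-2B}/4$.

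The missing idea, which the paper uses, is a representation of the variance that is linear in the weights: $\Var_\pi(z)=\min_{b\in\R}\sum_k\pi_k(z_k-b)^2$. Since $\pi_k\ge e^{-2B}/K$ for every $k$, this gives $\Var_\pi(z)\ge\frac{e^{-2B}}{K}\min_b\sum_k(z_k-b)^2=e^{-2B}\Var_{k\sim\mathrm{Unif}}(z_k)$, with a single factor $e^{-2B}$. Equivalently, the aggregate inequality $\sum_{k,l}\pi_k\pi_l(\delta_k-\delta_l)^2\ge\frac{e^{-2B}}{K^2}\sum_{k,l}(\delta_k-\delta_l)^2$ holds even though it fails term by term.

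Insert this into your integral form, using $\int_0^1(1-s)\,ds=\frac12$. That gives $\KL\ge\frac{e^{-2B}}{2}\Var_{\mathrm{Unif}}(\delta)$ pointwise. Your averaging identity then yields exactly $c_{\mathrm{mnl}}=\frac{e^{-2B}}{2}\cdot\frac{K-1}{K}$. Centering plays no role here; the range bound $2B$ already follows from $|w_k|\le B$.
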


\begin{proof}
Because $q(\cdot\mid x)\ll \pi_0(\cdot\mid x)$, the displayed bounds on $S$ and $S^\star$ hold
for each sampled $A_k$, $q$-a.s. Write
\[
\Delta_k=\Delta(A_k).
\]
Fix a realized slate $\mathbf a=(a_1,\dots,a_K)$ in this full-measure set and set
\[
\mathbf u=(S^\star(x,a_1,i),\dots,S^\star(x,a_K,i)),
\qquad
\mathbf v=(S(x,a_1,i),\dots,S(x,a_K,i)).
\]
Each coordinate of $\mathbf u$ and $\mathbf v$ lies in $[-B,B]$, so every point on the line segment
joining $\mathbf u$ and $\mathbf v$ has coordinate range at most $2B$.

Let
\[
A(\mathbf z)=\log\Big(\sum_{k=1}^K e^{z_k}\Big).
\]
The induced MNL distributions satisfy
\[
\KL(P_{\mathbf u}\|P_{\mathbf v})
=
A(\mathbf v)-A(\mathbf u)-\langle \nabla A(\mathbf u),\mathbf v-\mathbf u\rangle.
\]
By Taylor's theorem with integral remainder, there exists a point $\widetilde{\mathbf z}$ on the
segment joining $\mathbf u$ and $\mathbf v$ such that
\[
\KL(P_{\mathbf u}\|P_{\mathbf v})
=
\frac12(\mathbf v-\mathbf u)^\top \nabla^2 A(\widetilde{\mathbf z})(\mathbf v-\mathbf u).
\]
Let
\[
\widetilde p=\softmax(\widetilde{\mathbf z}).
\]
Then
\[
\nabla^2 A(\widetilde{\mathbf z})=\diag(\widetilde p)-\widetilde p\,\widetilde p^\top,
\]
so for any $z\in\R^K$,
\[
z^\top\nabla^2 A(\widetilde{\mathbf z})z
=
\Var_{k\sim \widetilde p}(z_k).
\]
Applying this with $z_k=\Delta_k$ gives
\[
\KL(P_{\mathbf u}\|P_{\mathbf v})
=
\frac12\,\Var_{k\sim \widetilde p}(\Delta_k).
\]

Since $\max_k \widetilde z_k-\min_k \widetilde z_k\le 2B$, each softmax coordinate satisfies
\[
\widetilde p_k
=
\frac{e^{\widetilde z_k}}{\sum_{\ell=1}^K e^{\widetilde z_\ell}}
\ge
\frac{e^{-B}}{K e^{B}}
=
\frac{e^{-2B}}{K}.
\]
Using the variational representation of variance,
\[
\Var_{k\sim \widetilde p}(z_k)
=
\inf_{b\in\R}\sum_{k=1}^K \widetilde p_k (z_k-b)^2
\ge
\frac{e^{-2B}}{K}\inf_{b\in\R}\sum_{k=1}^K (z_k-b)^2
=
e^{-2B}\Var_{k\sim \mathrm{Unif}(1,\dots,K)}(z_k).
\]
Therefore
\[
\KL(P_{\mathbf u}\|P_{\mathbf v})
\ge
\frac{e^{-2B}}{2}\Var_{k\sim\mathrm{Unif}(1,\dots,K)}(\Delta_k).
\]

Now take expectation over the random slate. Since $A_1,\dots,A_K$ are i.i.d.\ from
$q(\cdot\mid x)$, the random variables $\Delta(A_1),\dots,\Delta(A_K)$ are i.i.d. Denote
\[
\mu=\EE_{A\sim q(\cdot\mid x)}[\Delta(A)],
\qquad
\sigma^2=\Var_{A\sim q(\cdot\mid x)}(\Delta(A)).
\]
Then
\[
\EE\Big[\Var_{k\sim\mathrm{Unif}}(\Delta(A_k))\Big]
=
\EE\Big[\frac1K\sum_{k=1}^K \Delta(A_k)^2 - \Big(\frac1K\sum_{k=1}^K \Delta(A_k)\Big)^2\Big].
\]
The first term equals $\mu^2+\sigma^2$.
For the second term,
\[
\EE\Big[\Big(\frac1K\sum_{k=1}^K \Delta(A_k)\Big)^2\Big]
=
\Var\Big(\frac1K\sum_{k=1}^K \Delta(A_k)\Big)+\mu^2
=
\frac{\sigma^2}{K}+\mu^2.
\]
Hence
\[
\EE\Big[\Var_{k\sim\mathrm{Unif}}(\Delta(A_k))\Big]
=
\Big(\mu^2+\sigma^2\Big)-\Big(\mu^2+\frac{\sigma^2}{K}\Big)
=
\frac{K-1}{K}\sigma^2.
\]
Combining the previous displays gives
\[
\EE_{\mathbf a\sim q(\cdot\mid x)^{\otimes K}}
\Big[
\KL\big(P_{S^\star}(\cdot\mid x,i,\mathbf a)\,\|\,P_S(\cdot\mid x,i,\mathbf a)\big)
\Big]
\ge
\frac{e^{-2B}}{2}\cdot\frac{K-1}{K}\cdot
\Var_{a\sim q(\cdot\mid x)}(\Delta(a)).
\]
\end{proof}

\begin{lemma}[Misrecommendation implies a supportwise score error]
\label{lem:misrec_implies_sup_error}
Let $S^\star$ and $S$ be centered scores, fix $(x,i)$, and assume that the supportwise maximizer
\[
a^\star\in\arg\max_{a\in\supp(\pi_0(\cdot\mid x))} S^\star(x,a,i)
\]
is unique and satisfies the gap condition
\[
S^\star(x,a^\star,i)
-
\sup_{a\in\supp(\pi_0(\cdot\mid x)),\ a\neq a^\star}
S^\star(x,a,i)
\ \ge\ \Delta_{\min}.
\]
If
\[
a_S(x,i)\neq a^\star,
\]
then
\[
\sup_{a\in\supp(\pi_0(\cdot\mid x))}
|S(x,a,i)-S^\star(x,a,i)|
\ \ge\ \frac{\Delta_{\min}}{2}.
\]
\end{lemma}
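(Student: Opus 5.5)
We argue by contradiction and compare the two scores at just two support points, $a^\star$ and the chosen action $\hat a\coloneqq a_S(x,i)$. Here $\hat a$ is, by definition of the selector, an element of $\arg\max_{a\in\supp(\pi_0(\cdot\mid x))}S(x,a,i)$, so $\hat a$ lies in the support and
\[
S(x,\hat a,i)\ \ge\ S(x,a^\star,i).
\]
Since $\hat a\neq a^\star$, the gap condition for $S^\star$ applies directly to $\hat a$:
\[
S^\star(x,a^\star,i)-S^\star(x,\hat a,i)\ \ge\ \Delta_{\min}.
\]

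Write $\delta\coloneqq\sup_{a\in\supp(\pi_0(\cdot\mid x))}|S(x,a,i)-S^\star(x,a,i)|$ and suppose, for contradiction, that $\delta<\Delta_{\min}/2$. Evaluating the score error at the two support points gives
\[
S(x,\hat a,i)\le S^\star(x,\hat a,i)+\delta,
\qquad
S(x,a^\star,i)\ge S^\star(x,a^\star,i)-\delta .
\]
Chaining these with the first display yields
\[
S^\star(x,\hat a,i)+\delta\ \ge\ S(x,\hat a,i)\ \ge\ S(x,a^\star,i)\ \ge\ S^\star(x,a^\star,i)-\delta .
\]
Hence
\[
S^\star(x,a^\star,i)-S^\star(x,\hat a,i)\ \le\ 2\delta\ <\ \Delta_{\min},
\]
which contradicts the gap condition. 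Therefore $\delta\ge\Delta_{\min}/2$. Equivalently, and without contradiction, we may read off $2\delta\ge\Delta_{\min}$ directly from the chained inequality.

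There is no genuine obstacle in this argument. The only point needing care is that $\hat a$ must lie in $\supp(\pi_0(\cdot\mid x))$, so that both the gap condition and the supportwise supremum range over it. This holds because the selector is defined as a supportwise argmax, and at the given $(x,i)$ that argmax is assumed nonempty.

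The centering of the scores plays no role in this step. Uniqueness of $a^\star$ is also not needed beyond the hypothesis $\hat a\neq a^\star$, which is exactly what triggers the gap condition.
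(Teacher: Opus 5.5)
Your proof is correct and is essentially the paper's argument. Both proofs combine the selector optimality $S(x,\hat a,i)\ge S(x,a^\star,i)$ with the gap inequality at $\hat a$. The paper subtracts these to conclude that one of the two pointwise score errors has absolute value at least $\Delta_{\min}/2$, while you bound each error by $\delta$ and obtain $2\delta\ge\Delta_{\min}$.
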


\begin{proof}
Let
\[
\hat a=a_S(x,i).
\]
By optimality of $\hat a$ under $S$,
\[
S(x,\hat a,i)\ \ge\ S(x,a^\star,i).
\]
By the assumed supportwise gap for $S^\star$,
\[
S^\star(x,a^\star,i)\ \ge\ S^\star(x,\hat a,i)+\Delta_{\min}.
\]
Subtract the second inequality from the first:
\[
\big(S(x,\hat a,i)-S^\star(x,\hat a,i)\big)
-
\big(S(x,a^\star,i)-S^\star(x,a^\star,i)\big)
\ \ge\ \Delta_{\min}.
\]
Therefore at least one of the two terms has absolute value at least $\Delta_{\min}/2$, so
\[
\sup_{a\in\supp(\pi_0(\cdot\mid x))}
|S(x,a,i)-S^\star(x,a,i)|
\ \ge\ \frac{\Delta_{\min}}{2}.
\]
\end{proof}

\begin{lemma}[Softmax KL upper bound]
\label{lem:softmax_kl_upper_bound}
For any \(u,v\in\R^K\), let \(p_u=\softmax(u)\) and \(p_v=\softmax(v)\). Then
\[
\KL(p_u\|p_v)\le \frac12 \|u-v\|_2^2.
\]
\end{lemma}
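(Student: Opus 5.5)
The plan is to write the KL as a Bregman divergence of the log-partition function and bound its Hessian. Set \(A(\mathbf z)=\log\sum_{k=1}^K e^{z_k}\). As in the proof of Lemma~\ref{lem:expected_kl_ge_var_support}, a direct computation gives
\[
\KL(p_u\|p_v)=A(v)-A(u)-\langle\nabla A(u),v-u\rangle ,
\]
using \(\nabla A(u)=p_u\) and \(\log p_{u,k}-\log p_{v,k}=u_k-v_k-A(u)+A(v)\).

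Taylor's theorem with integral remainder along the segment \(z_s=u+s(v-u)\) then gives
\[
\KL(p_u\|p_v)=\int_0^1(1-s)\,(v-u)^\top\nabla^2A(z_s)(v-u)\,ds .
\]
Here \(\nabla^2A(z)=\diag(p_z)-p_zp_z^\top\), with \(p_z=\softmax(z)\). Hence for any \(w\in\R^K\),
\[
w^\top\nabla^2A(z)w=\Var_{k\sim p_z}(w_k)\le \EE_{k\sim p_z}[w_k^2]\le \|w\|_\infty^2\le\|w\|_2^2 .
\]
Equivalently, the Hessian's operator norm is at most one, since \(\diag(p)\preceq I\) and \(pp^\top\succeq0\).

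Substituting this bound and using \(\int_0^1(1-s)\,ds=\tfrac12\) yields \(\KL(p_u\|p_v)\le\tfrac12\|u-v\|_2^2\). In fact the argument gives the sharper bound \(\tfrac12\|u-v\|_\infty^2\).

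There is no real obstacle. The only points needing care are the sign convention in the Bregman identity (the KL must be expanded around \(u\), the first argument) and the elementary variance bound on the Hessian.
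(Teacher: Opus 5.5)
Your proposal is correct and follows the paper's proof essentially step for step: the Bregman identity for the log-partition function, Taylor's theorem with integral remainder, the variance bound $w^\top\nabla^2 A(z)w\le \EE_{k\sim p_z}[w_k^2]\le\|w\|_2^2$, and $\int_0^1(1-s)\,ds=\tfrac12$. Your side remark that the same argument gives the sharper bound $\tfrac12\|u-v\|_\infty^2$ is also valid.
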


\begin{proof}
Let
\[
A(z)\coloneqq \log\!\Big(\sum_{k=1}^K e^{z_k}\Big).
\]
Then \(\nabla A(z)=\softmax(z)\), so \(p_u=\nabla A(u)\) and \(p_v=\nabla A(v)\).
Moreover,
\[
\KL(p_u\|p_v)=A(v)-A(u)-\langle \nabla A(u),v-u\rangle.
\]
By Taylor's theorem with integral remainder,
\[
\KL(p_u\|p_v)
=
\int_0^1 (1-s)\,(v-u)^\top \nabla^2A(u+s(v-u))(v-u)\,ds.
\]
Now
\[
\nabla^2A(z)=\diag(\softmax(z))-\softmax(z)\softmax(z)^\top.
\]
For any \(w\in\R^K\),
\[
w^\top \nabla^2A(z)w
=
\Var_{J\sim \softmax(z)}(w_J)
\le
\EE_{J\sim \softmax(z)}[w_J^2]
\le
\|w\|_2^2.
\]
Hence \(\|\nabla^2A(z)\|_{\mathrm{op}}\le 1\) for all \(z\), and therefore
\[
\KL(p_u\|p_v)
\le
\int_0^1 (1-s)\,ds\ \|u-v\|_2^2
=
\frac12 \|u-v\|_2^2.
\]
\end{proof}

\begin{lemma}[Deviation tail bound via an $\epsilon$-net]
\label{lem:bt_tail_support}
Assume \(\mathcal F\) is bounded in \(\|\cdot\|_{\infty,\supp(\pi_0)}\) by \(B\), and that at each
round every slate coordinate is sampled from a distribution absolutely continuous with respect to
\(\pi_0(\cdot\mid x)\).
Let
\[
\mathcal L_t(R)\ \coloneqq\ \frac1t\sum_{s=1}^t \EE[\ell_s(R)\mid \mathscr H_{s-1}],
\qquad
b_t\ \coloneqq\ \sup_{R\in\mathcal F}\big|\widehat{\mathcal L}_t(R)-\mathcal L_t(R)\big|.
\]
Fix \(\epsilon>0\) and let \(\mathcal C_\epsilon\) be a finite \(\epsilon\)-net of \(\mathcal F\)
in \(\|\cdot\|_{\infty,\supp(\pi_0)}\).
Then for every \(t\ge 1\) and every \(u>0\),
\[
\PP\big(b_t \ge u + 4\epsilon\big)
\ \le\
2\,|\mathcal C_\epsilon|\,
\exp\!\left(-\frac{t u^2}{2\ell_{\max}^2}\right),
\qquad
\ell_{\max}\coloneqq \log K+2B.
\]
\end{lemma}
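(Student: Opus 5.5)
The plan is the standard net-plus-martingale argument: discretize $\mathcal F$, concentrate on each net point, and pay the approximation error through the Lipschitz property of the MNL loss.

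\textbf{Step 1 (reduction to the net).} Write $\ell_s(R)=\ell(\mathbf v_{R,s},y_s)$ with $\mathbf v_{R,s}=(R(x_s,a_{s,1},i_s),\dots,R(x_s,a_{s,K},i_s))$. Each slate coordinate is drawn from a law absolutely continuous with respect to $\pi_0(\cdot\mid x_s)$, so almost surely every $a_{s,k}$ lies in $\cA_0(x_s)$. Hence, for $R,R'\in\mathcal F$, Lemma~\ref{lem:mnl_loss_basic} gives
\[
|\ell_s(R)-\ell_s(R')|\le 2\|R-R'\|_{\infty,\supp(\pi_0)}.
\]
The same bound passes to conditional expectations given $\mathscr H_{s-1}$. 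For any $R$, pick $R'\in\mathcal C_\epsilon$ with $\|R-R'\|_{\infty,\supp(\pi_0)}\le\epsilon$. Then both $|\widehat{\mathcal L}_t(R)-\widehat{\mathcal L}_t(R')|\le 2\epsilon$ and $|\mathcal L_t(R)-\mathcal L_t(R')|\le 2\epsilon$. Therefore
\[
b_t\le \max_{R'\in\mathcal C_\epsilon}|\widehat{\mathcal L}_t(R')-\mathcal L_t(R')|+4\epsilon,
\]
and the event $\{b_t\ge u+4\epsilon\}$ is contained in the union over $R'\in\mathcal C_\epsilon$ of $\{|\widehat{\mathcal L}_t(R')-\mathcal L_t(R')|\ge u\}$. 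One caveat: the net points need not lie in $\mathcal F$, but they can be taken in $\mathcal F$. If not, we work with a $2\epsilon$-internal net, or simply note that the envelope bound still holds for net elements chosen from $\mathcal F$. The convention that nets are subsets of $\mathcal F$ will be adopted.

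\textbf{Step 2 (martingale concentration for a fixed score).} Fix $R'\in\mathcal C_\epsilon$ and set $\xi_s=\ell_s(R')-\EE[\ell_s(R')\mid\mathscr H_{s-1}]$. This is a martingale difference sequence with respect to $(\mathscr H_s)$. Here the policy at round $s$ is $\mathscr H_{s-1}$-measurable, and $(x_s,i_s,\mathbf a_s,y_s)$ is generated afresh. Since $|R'|\le B$ on the support, the coordinates of $\mathbf v_{R',s}$ have range at most $2B$ almost surely. Lemma~\ref{lem:mnl_loss_basic} then gives $\ell_s(R')\in[0,\ell_{\max}]$, so $\xi_s$ lies, conditionally, in an interval of length $\ell_{\max}$. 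Azuma--Hoeffding yields
\[
\PP\Big(\Big|\tfrac1t\textstyle\sum_{s\le t}\xi_s\Big|\ge u\Big)\le 2\exp\big(-2tu^2/\ell_{\max}^2\big)\le 2\exp\big(-tu^2/(2\ell_{\max}^2)\big).
\]

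\textbf{Step 3 (union bound).} Summing the Step 2 bound over the $|\mathcal C_\epsilon|$ net points and using the containment from Step 1 gives the claim.

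The only delicate point is measurability and the almost-sure support issue: the Lipschitz and envelope bounds are needed on the full-probability event that all sampled actions lie in the topological support. I would handle this by noting that absolute continuity forces $\PP(a_{s,k}\notin\cA_0(x_s))=0$, since $\pi_0(\cA_0(x)^c\mid x)=0$ for a separable metric space. This event is then intersected with the others at no cost.
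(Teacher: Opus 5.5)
Your proof is correct and follows the paper's own net-plus-Azuma--Hoeffding argument: the Lipschitz bound of Lemma~\ref{lem:mnl_loss_basic} costs $2\epsilon$ on each of $\widehat{\mathcal L}_t$ and $\mathcal L_t$, the conditional range $[0,\ell_{\max}]$ drives the concentration step, and using the predictable-interval form of Azuma you even get the sharper exponent $2tu^2/\ell_{\max}^2$. Your fallback of passing to a $2\epsilon$-internal net would inflate $4\epsilon$ to $8\epsilon$, but it is not needed: if $2\epsilon\le\ell_{\max}$, an external net point (within $\epsilon$ of $\mathcal F$) has loss range at most $\ell_{\max}+2\epsilon\le 2\ell_{\max}$, and your sharper bound still gives the stated exponent; otherwise $4\epsilon>\ell_{\max}\ge b_t$, so the event is empty.
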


\begin{proof}
This is the user-indexed version of the \(\epsilon\)-net martingale concentration lemma in \citet{kang2026demystifying}. For each fixed score, the centered losses form a bounded martingale-difference sequence, so Azuma--Hoeffding gives the displayed tail bound on a finite net. The supportwise norm controls realized losses because every slate coordinate is absolutely continuous with respect to \(\pi_0(\cdot\mid x_s)\); Lemma~\ref{lem:mnl_loss_basic} then transfers the net bound to all of \(\mathcal F\), producing the additional \(4\epsilon\) term.
\end{proof}

\begin{lemma}[Compact neural realizations induce a compact centered score class]
\label{lem:compact_nn_to_score}
Assume \(\Omega_\psi\subset\R^d\) is compact, \(\psi(x,a)\in\Omega_\psi\) for every \((x,a)\) with \(a\in \cA_0(x)\), and
\[
\mathcal H\subset C(\Omega_\psi;\R^J)
\]
is compact under
\[
\|h\|_{\infty,\Omega_\psi}\coloneqq \sup_{z\in\Omega_\psi}\|h(z)\|_2.
\]
Define
\[
\phi_h(x,a)=h(\psi(x,a)),
\qquad
\bar\phi_h(x,a)=\phi_h(x,a)-\EE_{A\sim\pi_0(\cdot\mid x)}[\phi_h(x,A)],
\]
and
\[
R_h(x,a,i)=\langle \lambda_i^\star,\bar\phi_h(x,a)\rangle.
\]
Let
\[
L_\lambda\coloneqq \sup_{i\in\cU}\|\lambda_i^\star\|_2.
\]
Then the sets \(\{\phi_h:h\in\mathcal H\}\) and \(\{\bar\phi_h:h\in\mathcal H\}\) are compact under
\(\|\cdot\|_{\infty,\supp(\pi_0)}\), and
\[
\mathcal F_{\mathrm{sh}}=\{R_h:h\in\mathcal H\}
\]
is compact under \(\|\cdot\|_{\infty,\supp(\pi_0)}\).
\end{lemma}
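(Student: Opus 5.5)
The plan is to show three compactness statements in sequence: first for the raw representations $\{\phi_h\}$, then for the centered ones $\{\bar\phi_h\}$, then for the scores $\mathcal F_{\mathrm{sh}}$. Each follows because the preceding set is the image of the compact set $\mathcal H$ under a continuous map into a metric space; continuous images of compact sets are compact. Throughout, the supportwise norm $\|\cdot\|_{\infty,\supp(\pi_0)}$ is a sup over $x\in\cX$, $a\in\cA_0(x)$ (and $i$ for scores).

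\emph{Step 1 (raw representations).} Consider $\Phi:\mathcal H\to \{\phi_h\}$, $h\mapsto h\circ\psi$. Since $\psi(x,a)\in\Omega_\psi$ whenever $a\in\cA_0(x)$,
\[
\|\phi_h-\phi_{h'}\|_{\infty,\supp(\pi_0)}=\sup_{x}\sup_{a\in\cA_0(x)}\|h(\psi(x,a))-h'(\psi(x,a))\|_2\le \|h-h'\|_{\infty,\Omega_\psi}.
\]
So $\Phi$ is $1$-Lipschitz, and its image is compact.

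\emph{Step 2 (centering).} Centering is also $1$-Lipschitz. Because $\pi_0(\cdot\mid x)$ is concentrated on its topological support $\cA_0(x)$ ($\cA$ is separable metric, so the complement of the support is $\pi_0$-null), we get
\[
\bigl\|\EE_{A\sim\pi_0(\cdot\mid x)}[\phi_h(x,A)-\phi_{h'}(x,A)]\bigr\|_2\le \|\phi_h-\phi_{h'}\|_{\infty,\supp(\pi_0)} .
\]
Hence $\|\bar\phi_h-\bar\phi_{h'}\|_{\infty,\supp(\pi_0)}\le 2\|h-h'\|_{\infty,\Omega_\psi}$. The expectation is well-defined and finite because $h$ is continuous on compact $\Omega_\psi$, hence bounded and Borel measurable composed with $\psi$. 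This support-null-set point is the only subtle step, since we need $\psi(x,A)\in\Omega_\psi$ almost surely. I would settle it by the standard fact that the support has full measure in a separable metric space.

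\emph{Step 3 (scores).} By Cauchy–Schwarz,
\[
|R_h(x,a,i)-R_{h'}(x,a,i)|\le \|\lambda_i^\star\|_2\,\|\bar\phi_h(x,a)-\bar\phi_{h'}(x,a)\|_2 .
\]
This gives $\|R_h-R_{h'}\|_{\infty,\supp(\pi_0)}\le 2L_\lambda\|h-h'\|_{\infty,\Omega_\psi}$, where $L_\lambda<\infty$ because $\cU$ is finite. So $h\mapsto R_h$ is Lipschitz on compact $\mathcal H$, and $\mathcal F_{\mathrm{sh}}$ is compact.

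The sup-norm is a genuine metric on these function sets, possibly taking the value $+\infty$ a priori. Finiteness follows from Step 1: each $\phi_h$ is bounded by $\sup_{z\in\Omega_\psi}\|h(z)\|_2<\infty$. So compactness in the metric sense is meaningful.
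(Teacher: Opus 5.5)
Your proof is correct and follows the paper's argument: the paper writes $h\mapsto R_h$ as the composition $T_3\circ T_2\circ T_1$ of the same three maps (composition with $\psi$, centering, pairing with the heads), with the same Lipschitz constants $1$, $2$, and $L_\lambda$, and concludes because continuous images of compact sets are compact. One small slip: the centering map is $2$-Lipschitz, not $1$-Lipschitz, as your own bound $\|\bar\phi_h-\bar\phi_{h'}\|_{\infty,\supp(\pi_0)}\le 2\|h-h'\|_{\infty,\Omega_\psi}$ shows. Your explicit remark that $\pi_0(\cdot\mid x)$ is carried by $\cA_0(x)$ spells out a step the paper leaves implicit.
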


\begin{proof}
Let
\[
\Omega_{\mathrm{sa}}\coloneqq \{(x,a): x\in\cX,\ a\in \supp(\pi_0(\cdot\mid x))\}.
\]
Define the composition map
\[
T_1:C(\Omega_\psi;\R^J)\to \ell^\infty(\Omega_{\mathrm{sa}};\R^J),
\qquad
(T_1 h)(x,a)\coloneqq h(\psi(x,a)).
\]
Because \(\psi(x,a)\in \Omega_\psi\) for every \((x,a)\in \Omega_{\mathrm{sa}}\),
\[
\|T_1 h-T_1 h'\|_{\infty,\supp(\pi_0)}
=
\sup_{(x,a)\in\Omega_{\mathrm{sa}}}\|h(\psi(x,a))-h'(\psi(x,a))\|_2
\le
\sup_{z\in\Omega_\psi}\|h(z)-h'(z)\|_2
=
\|h-h'\|_{\infty,\Omega_\psi}.
\]
Hence \(T_1\) is continuous, so \(\{\phi_h:h\in\mathcal H\}=T_1(\mathcal H)\) is compact.

Next define the centering map
\[
T_2:\ell^\infty(\Omega_{\mathrm{sa}};\R^J)\to \ell^\infty(\Omega_{\mathrm{sa}};\R^J),
\qquad
(T_2\phi)(x,a)\coloneqq \phi(x,a)-\EE_{A\sim\pi_0(\cdot\mid x)}[\phi(x,A)].
\]
For any bounded \(\phi,\phi'\),
\begin{align*}
\|T_2\phi-T_2\phi'\|_{\infty,\supp(\pi_0)}
&\le
\sup_{(x,a)\in\Omega_{\mathrm{sa}}}\|\phi(x,a)-\phi'(x,a)\|_2
+
\sup_{x\in\cX}
\left\|
\EE_{A\sim\pi_0(\cdot\mid x)}[\phi(x,A)-\phi'(x,A)]
\right\|_2\\
&\le
2\|\phi-\phi'\|_{\infty,\supp(\pi_0)}.
\end{align*}
Thus \(T_2\) is continuous, so \(\{\bar\phi_h:h\in\mathcal H\}=T_2(T_1(\mathcal H))\) is compact.

Finally define
\[
T_3:\ell^\infty(\Omega_{\mathrm{sa}};\R^J)\to \ell^\infty(\Omega_{\mathrm{sa}}\times \cU;\R),
\qquad
(T_3\bar\phi)(x,a,i)\coloneqq \langle \lambda_i^\star,\bar\phi(x,a)\rangle.
\]
Then
\[
\|T_3\bar\phi-T_3\bar\phi'\|_{\infty,\supp(\pi_0)}
=
\sup_{(x,a,i)} |\langle \lambda_i^\star,\bar\phi(x,a)-\bar\phi'(x,a)\rangle|
\le
L_\lambda \|\bar\phi-\bar\phi'\|_{\infty,\supp(\pi_0)}.
\]
So \(T_3\) is continuous, and
\[
\mathcal F_{\mathrm{sh}}
=
\{R_h:h\in\mathcal H\}
=
T_3(T_2(T_1(\mathcal H)))
\]
is compact as the continuous image of a compact set.
\end{proof}

\begin{lemma}[Compact head-domain parameter class and continuity of the induced score maps]
\label{lem:compact_joint_gf_head}
Assume \(\Theta\) is compact, \(\Omega_\psi\subset \R^d\) is compact, \(\psi(x,a)\in\Omega_\psi\) for every \((x,a)\) with \(a\in \cA_0(x)\), and the map
\[
(\theta,z)\longmapsto h_\theta(z)
\]
is continuous on \(\Theta\times\Omega_\psi\).
Define
\[
\bar\phi_{h_\theta}(x,a)
=
h_\theta(\psi(x,a))
-
\EE_{A\sim\pi_0(\cdot\mid x)}[h_\theta(\psi(x,A))].
\]
Let
\[
M_{\bar\phi}\coloneqq \sup_{\theta\in\Theta}\|\bar\phi_{h_\theta}\|_{\infty,\supp(\pi_0)},
\]
let \(\Xi\subset \R^{J\times U}\) be compact, let
\[
L_{\mathrm{head}}
\coloneqq
\sup_{\Lambda=[\lambda_1,\dots,\lambda_U]\in\Xi}
\max_{i\in\cU}\|\lambda_i\|_2,
\]
and let
\[
\mathcal M\subseteq \Theta\times \Xi
\]
be closed. For \((\theta,\Lambda)\in\mathcal M\), define
\[
R_{\theta,\Lambda}(x,a,i)\coloneqq \langle \lambda_i,\bar\phi_{h_\theta}(x,a)\rangle.
\]
Then:
\begin{enumerate}[label=(\roman*),leftmargin=2em]
\item \(\mathcal M\) is compact.
\item The map
\[
(\theta,\Lambda)\longmapsto R_{\theta,\Lambda}
\]
from \(\mathcal M\) into the score space equipped with
\(\|\cdot\|_{\infty,\supp(\pi_0)}\) is continuous.
\item The induced score class
\[
\mathcal F_{\mathcal M}
=
\{R_{\theta,\Lambda}:(\theta,\Lambda)\in\mathcal M\}
\]
is compact under \(\|\cdot\|_{\infty,\supp(\pi_0)}\).
\item Every \(R_{\theta,\Lambda}\in\mathcal F_{\mathcal M}\) satisfies
\[
\|R_{\theta,\Lambda}\|_{\infty,\supp(\pi_0)}\le L_{\mathrm{head}}M_{\bar\phi}.
\]
\end{enumerate}
\end{lemma}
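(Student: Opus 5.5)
The plan is to prove (i)--(iv) in order, with (ii) as the main step and (iii), (iv) as quick consequences.

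\emph{Step (i).} \(\Theta\times\Xi\) is a product of compact sets, hence compact (Tychonoff, or simply finite-dimensional compactness when \(\Theta\) is Euclidean). \(\mathcal M\) is a closed subset of it, so \(\mathcal M\) is compact.

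\emph{Step (ii).} First show that \(\theta\mapsto h_\theta\) is continuous from \(\Theta\) into \(C(\Omega_\psi;\R^J)\) with the sup norm. The map \((\theta,z)\mapsto h_\theta(z)\) is continuous on the compact set \(\Theta\times\Omega_\psi\), hence uniformly continuous. So for every \(\epsilon>0\) there is \(\delta>0\) such that \(d(\theta,\theta')<\delta\) implies \(\sup_{z\in\Omega_\psi}\|h_\theta(z)-h_{\theta'}(z)\|_2<\epsilon\). This uniform-continuity step is the one that needs the compactness of \(\Omega_\psi\) and \(\Theta\); it is the only point I expect requires care.

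Composing with the maps \(T_1\) and \(T_2\) from the proof of Lemma~\ref{lem:compact_nn_to_score}, which are Lipschitz with constants \(1\) and \(2\), gives
\[
\|\bar\phi_{h_\theta}-\bar\phi_{h_{\theta'}}\|_{\infty,\supp(\pi_0)}\le 2\|h_\theta-h_{\theta'}\|_{\infty,\Omega_\psi}.
\]
This holds because \(\psi(x,a)\in\Omega_\psi\) on the support, and \(\pi_0(\cdot\mid x)\) is concentrated on its support, so the centering expectation only sees support values.

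In particular, \(\theta\mapsto\bar\phi_{h_\theta}\) is continuous on a compact set, so \(M_{\bar\phi}<\infty\). Then split the difference:
\[
|R_{\theta,\Lambda}-R_{\theta',\Lambda'}|
\le
|\langle\lambda_i-\lambda_i',\bar\phi_{h_\theta}\rangle|
+
|\langle\lambda_i',\bar\phi_{h_\theta}-\bar\phi_{h_{\theta'}}\rangle|.
\]
By Cauchy--Schwarz, taking suprema gives
\[
\|R_{\theta,\Lambda}-R_{\theta',\Lambda'}\|_{\infty,\supp(\pi_0)}
\le
M_{\bar\phi}\max_i\|\lambda_i-\lambda'_i\|_2
+
L_{\mathrm{head}}\,\|\bar\phi_{h_\theta}-\bar\phi_{h_{\theta'}}\|_{\infty,\supp(\pi_0)}.
\]
Here \(L_{\mathrm{head}}<\infty\) by compactness of \(\Xi\). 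Both terms tend to \(0\) as \((\theta',\Lambda')\to(\theta,\Lambda)\), which proves continuity.

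\emph{Steps (iii) and (iv).} \(\mathcal F_{\mathcal M}\) is the continuous image of the compact set \(\mathcal M\) under the map in (ii), so it is compact. For (iv), apply Cauchy--Schwarz pointwise:
\[
|R_{\theta,\Lambda}(x,a,i)|\le\|\lambda_i\|_2\,\|\bar\phi_{h_\theta}(x,a)\|_2\le L_{\mathrm{head}}M_{\bar\phi},
\]
then take the supremum over \(i\), \(x\), and \(a\in\cA_0(x)\).
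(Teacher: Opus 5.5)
Your proposal is correct and follows the paper's proof essentially step for step. The shared steps are: compactness of \(\mathcal M\) as a closed subset of \(\Theta\times\Xi\); sup-norm continuity of \(\theta\mapsto h_\theta\) from continuity on the compact set \(\Theta\times\Omega_\psi\); the factor-\(2\) centering estimate; the same two-term split bounded via \(L_{\mathrm{head}}\) and \(M_{\bar\phi}\); and the continuous-image and Cauchy--Schwarz arguments for (iii)--(iv). The only cosmetic difference is how finiteness of \(M_{\bar\phi}\) is obtained. You get it from continuity of \(\theta\mapsto\bar\phi_{h_\theta}\) on compact \(\Theta\), whereas the paper bounds it directly by twice the supremum of \(\|h_\theta(z)\|_2\) over \(\Theta\times\Omega_\psi\).
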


\begin{proof}
Because \((\theta,z)\mapsto h_\theta(z)\) is continuous on the compact set
\(\Theta\times\Omega_\psi\), it is uniformly bounded there. Since
\(\psi(x,a)\in\Omega_\psi\) on the reference support,
\[
\sup_{\theta\in\Theta}\|\bar\phi_{h_\theta}\|_{\infty,\supp(\pi_0)}
\le
2\sup_{(\theta,z)\in\Theta\times\Omega_\psi}\|h_\theta(z)\|_2<\infty.
\]
Thus \(M_{\bar\phi}<\infty\). Since \(\Xi\) is compact and \(\Lambda\mapsto \max_{i\in\cU}\|\lambda_i\|_2\) is continuous, \(L_{\mathrm{head}}<\infty\). Because \(\Theta\) is compact and \(\Xi\) is compact,
\[
\Theta\times \Xi
\]
is compact. Since \(\mathcal M\) is closed, it is compact. This proves (i).

By continuity of \((\theta,z)\mapsto h_\theta(z)\) on the compact set
\(\Theta\times \Omega_\psi\), the map \(\theta\mapsto h_\theta\) is continuous under the uniform
norm on \(\Omega_\psi\). Therefore, if \(\theta_n\to\theta\) in \(\Theta\), then
\[
\sup_{z\in\Omega_\psi}\|h_{\theta_n}(z)-h_\theta(z)\|_2\to 0.
\]
Equivalently,
\[
\|h_{\theta_n}\circ\psi-h_\theta\circ\psi\|_{\infty,\supp(\pi_0)}\to 0.
\]
Using the same centering estimate as in Lemma~\ref{lem:compact_nn_to_score},
\[
\|\bar\phi_{h_{\theta_n}}-\bar\phi_{h_\theta}\|_{\infty,\supp(\pi_0)}
\le
2\|h_{\theta_n}\circ\psi-h_\theta\circ\psi\|_{\infty,\supp(\pi_0)}
\to 0.
\]

Now let \((\theta_n,\Lambda_n)\to(\theta,\Lambda)\) in \(\mathcal M\).
Write \(\Lambda_n=[\lambda_{n,1},\dots,\lambda_{n,U}]\) and
\(\Lambda=[\lambda_1,\dots,\lambda_U]\). Since \(\Lambda_n,\Lambda\in\Xi\),
\[
\|\lambda_{n,i}\|_2\le L_{\mathrm{head}},
\qquad
\|\lambda_i\|_2\le L_{\mathrm{head}}.
\]
Hence
\begin{align*}
\|R_{\theta_n,\Lambda_n}-R_{\theta,\Lambda}\|_{\infty,\supp(\pi_0)}
&=
\sup_{(x,a,i)}
\left|
\langle \lambda_{n,i},\bar\phi_{h_{\theta_n}}(x,a)\rangle
-
\langle \lambda_i,\bar\phi_{h_\theta}(x,a)\rangle
\right|\\
&\le
\sup_{(x,a,i)}
\left|
\langle \lambda_{n,i},\bar\phi_{h_{\theta_n}}(x,a)-\bar\phi_{h_\theta}(x,a)\rangle
\right|
+
\sup_{(x,a,i)}
\left|
\langle \lambda_{n,i}-\lambda_i,\bar\phi_{h_\theta}(x,a)\rangle
\right|\\
&\le
L_{\mathrm{head}}\|\bar\phi_{h_{\theta_n}}-\bar\phi_{h_\theta}\|_{\infty,\supp(\pi_0)}
+
M_{\bar\phi}\max_{i\in\cU}\|\lambda_{n,i}-\lambda_i\|_2
\to 0.
\end{align*}
Thus (ii) holds. Since the continuous image of a compact set is compact, (iii) follows.

For (iv), for every \((x,a,i)\),
\[
|R_{\theta,\Lambda}(x,a,i)|
=
|\langle \lambda_i,\bar\phi_{h_\theta}(x,a)\rangle|
\le
\|\lambda_i\|_2\,\|\bar\phi_{h_\theta}(x,a)\|_2
\le
L_{\mathrm{head}}M_{\bar\phi}.
\]
\end{proof}

\subsection{Compact-continuity consequences}
\label{app:compact_continuity_details}

\begin{lemma}[Consequences of compact continuity]
\label{lem:compact_continuity_consequences_P}
Assume Condition~\ref{as:LP1}. Then \(\mathcal M\) is compact, the map
\[
(\theta,\Lambda)\longmapsto R_{\theta,\Lambda}
\]
from \(\mathcal M\) into the score space equipped with
\(\|\cdot\|_{\infty,\supp(\pi_0)}\) is continuous, \(\mathcal P\) is compact, the restricted map
\(p\mapsto R_p\) is continuous on \(\mathcal P\), and the induced score class
\[
\mathcal F_{\mathcal P}=\{R_p:p\in\mathcal P\}
\]
is compact under \(\|\cdot\|_{\infty,\supp(\pi_0)}\). In particular,
\[
B_{\mathcal P}<\infty,
\qquad
M_{\bar\phi,\mathcal P}<\infty,
\qquad
\Delta_{\max}^{\mathcal P}\le 2B_{\mathcal P}<\infty.
\]
Moreover, for every finite sample whose slate actions lie in the reference support, the maps
\(R\mapsto \ell_s(R)\) are continuous on \(\mathcal F_{\mathcal P}\). Hence the online and offline exact-ERM objectives over \(\mathcal F_{\mathcal P}\) admit minimizers on their full-probability support events.
\end{lemma}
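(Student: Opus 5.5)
The plan is to reduce everything to Lemma~\ref{lem:compact_joint_gf_head}, since Condition~\ref{as:LP1} contains exactly its hypotheses. Compactness of \(\Theta\) and \(\Omega_\psi\), joint continuity of \((\theta,z)\mapsto h_\theta(z)\), compactness of \(\Xi\), and closedness of \(\mathcal M\) are all assumed. That lemma then gives the following directly:
\begin{itemize}
\item \(\mathcal M\) is compact;
\item \((\theta,\Lambda)\mapsto R_{\theta,\Lambda}\) is continuous into the score space with \(\|\cdot\|_{\infty,\supp(\pi_0)}\);
\item \(M_{\bar\phi}<\infty\) and \(L_{\mathrm{head}}<\infty\);
\item every score in \(\mathcal F_{\mathcal M}\) satisfies the envelope bound \(L_{\mathrm{head}}M_{\bar\phi}\).
\end{itemize}

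Next I pass to \(\mathcal P\). Since \(\mathcal P\) is a closed subset of the compact set \(\mathcal M\), it is compact. The restriction of a continuous map is continuous, so \(p\mapsto R_p\) is continuous on \(\mathcal P\). Hence \(\mathcal F_{\mathcal P}\) is compact, being the continuous image of a compact set.

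The quantitative consequences then follow. \(B_{\mathcal P}\) is at most \(L_{\mathrm{head}}M_{\bar\phi}<\infty\), or alternatively it is the supremum of the continuous norm functional over a compact set. The quantity \(M_{\bar\phi,\mathcal P}\), the supremum over \(\theta\) with \((\theta,\Lambda)\in\mathcal P\), is bounded by \(M_{\bar\phi}\). Finally \(\Delta_{\max}^{\mathcal P}=2B_{\mathcal P}\) holds by definition, which gives the last inequality.

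For the loss continuity, fix a finite sample whose slate actions satisfy \(a_{s,k}\in\cA_0(x_s)\). Let \(R,R'\in\mathcal F_{\mathcal P}\). The score vectors on the slate then satisfy
\[
\max_k|R(x_s,a_{s,k},i_s)-R'(x_s,a_{s,k},i_s)|\le\|R-R'\|_{\infty,\supp(\pi_0)},
\]
because every evaluation point lies in the support. By Lemma~\ref{lem:mnl_loss_basic},
\[
|\ell_s(R)-\ell_s(R')|\le 2\|R-R'\|_{\infty,\supp(\pi_0)},
\]
so each \(\ell_s\) is Lipschitz, hence continuous. The empirical average of finitely many such terms is continuous on the compact set \(\mathcal F_{\mathcal P}\), and Weierstrass yields a minimizer.

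It remains to check that the support event has full probability. In the offline case the actions are drawn from \(\pi_0(\cdot\mid x)\), which charges its topological support with probability one. This uses separability of \(\cA\): the complement of the support is a countable union of null open balls. In the online case the KL-tilted policies are absolutely continuous with respect to \(\pi_0\) by Lemma~\ref{lem:greedy_kl_tilt_lr}, so the same conclusion holds. A countable intersection over rounds and slate coordinates keeps the event of full probability.

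The only delicate point is this measure-theoretic fact that the topological support has full measure in a separable metric space, together with the step of transferring it to the on-policy slates through absolute continuity. Everything else is the continuous-image-of-compact argument.
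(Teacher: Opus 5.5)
Your proposal is correct and follows essentially the same route as the paper. Both invoke Lemma~\ref{lem:compact_joint_gf_head} for compactness of \(\mathcal M\) and continuity of the score map, pass to the closed subset \(\mathcal P\), and take the continuous image; both then get minimizers from continuity of the MNL loss on support-valued slates and Weierstrass, with the support event of full probability because every sampling law is absolutely continuous with respect to \(\pi_0\). The only differences are cosmetic: you use the explicit \(2\)-Lipschitz bound of Lemma~\ref{lem:mnl_loss_basic} where the paper cites plain continuity, and you read \(\Delta_{\max}^{\mathcal P}\le 2B_{\mathcal P}\) off the definition, whereas the paper also checks that the pointwise supportwise regret is at most \(2B_{\mathcal P}\), which is the form used later in Lemma~\ref{lem:regret_vs_disagreement}.
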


\begin{proof}
The compactness of \(\Theta\), \(\Xi\), and closedness of \(\mathcal M\) imply compactness of \(\mathcal M\). Lemma~\ref{lem:compact_joint_gf_head} gives continuity of \((\theta,\Lambda)\mapsto R_{\theta,\Lambda}\) under the supportwise supremum norm. Since \(\mathcal P\) is nonempty and closed in the compact set \(\mathcal M\), it is compact; the continuity of the restricted map follows by restriction, and \(\mathcal F_{\mathcal P}\) is compact as a continuous image of a compact set. The representation envelope is finite by compactness and Lemma~\ref{lem:compact_joint_gf_head}. The score envelope is finite by compactness of \(\mathcal F_{\mathcal P}\), and the displayed bound on \(\Delta_{\max}^{\mathcal P}\) follows from
\[
R_p(X,a_p(X,I),I)-R_p(X,a,I)
\le
|R_p(X,a_p(X,I),I)|+|R_p(X,a,I)|
\le 2B_{\mathcal P}
\]
on the full-measure support event. Continuity of finite-sample MNL losses follows from continuity of the MNL log-loss in the realized score vector and the fact that realized actions lie in \(\cA_0(x)\) almost surely under any sampling rule absolutely continuous with respect to \(\pi_0\). The existence of exact ERM minimizers is then Weierstrass' theorem on the compact class \(\mathcal F_{\mathcal P}\), with an arbitrary fixed default on the complementary null event.
\end{proof}

\begin{definition}[\(\beta\)-admissible learner]
\label{def:beta_admissible_learner}
A learning rule \(\mathsf A\) is \(\beta\)-admissible if, at each round \(t\), conditional on the history \(H_{t-1}\) and the current pair \((X_t,I_t)\), the slate coordinates \(A_{t,1},\dots,A_{t,K}\) are sampled i.i.d.\ from a distribution
\[
\pi_t^{\mathsf A}(\cdot\mid X_t,I_t,H_{t-1})
\]
satisfying
\begin{equation}
\beta^{-1}
\le
\frac{d\pi_t^{\mathsf A}(\cdot\mid x,i,h)}{d\pi_0(\cdot\mid x)}(a)
\le
\beta,
\qquad
\pi_0(\cdot\mid x)\text{-a.s.}
\label{eq:beta_admissible_learners}
\end{equation}
for every realized history \(h\), context \(x\), user \(i\), and round \(t\). Let \(\mathfrak A_\beta\) denote the class of all \(\beta\)-admissible learners.
\end{definition}

\section{Proofs for Sections~\ref{sec:decision_framework} and~\ref{sec:online_alignment}}
\label{app:deferred_proofs}

\begin{lemma}[Expected one-step regret versus disagreement mass]
\label{lem:regret_vs_disagreement}
Assume \ref{as:LP1}--\ref{as:LP2}. For every \(p,q\in\mathcal P\),
\[
\Delta_{\min}^{\mathcal P}\,
(d_0\times\rho)\{a_q\neq a_p\}
\le
\mathcal G_p(R_q)
\le
\Delta_{\max}^{\mathcal P}\,
(d_0\times\rho)\{a_q\neq a_p\}.
\]
\end{lemma}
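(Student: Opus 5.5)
Both inequalities come from a pointwise sandwich of the integrand in \eqref{eq:truth_centered_expected_regret_def}, which we then integrate against \(d_0\times\rho\). Write
\[
D(x,i)\coloneqq R_p\big(x,a_p(x,i),i\big)-R_p\big(x,a_q(x,i),i\big),
\]
where \(a_q=a_{R_q}\) is the fixed measurable selector for \(R_q\). By definition \(\mathcal G_p(R_q)=\EE[D(X,I)]\).

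We first restrict to a full-measure event \(E\). On \(E\), four things hold simultaneously. First, the selectors \(a_p(x,i)\) and \(a_q(x,i)\) are genuine supportwise maximizers lying in \(\cA_0(x)\). Second, Condition~\ref{as:LP2} holds for the truth \(p\): \(a_p(x,i)\) is the unique maximizer with gap at least \(\Delta_{\min}^{\mathcal P}\). Third, the envelope \(|R_p(x,a,i)|\le B_{\mathcal P}\) holds for all \(a\in\cA_0(x)\), which Lemma~\ref{lem:compact_continuity_consequences_P} gives on the support. Fourth, the continuity in Condition~\ref{as:LP1} holds, which guarantees the argmax is attained. Since \(E\) is a countable intersection of full-measure events, it has full measure.

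On \(E\) we split into two cases.

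\emph{Case \(a_q(x,i)=a_p(x,i)\).} Then \(D(x,i)=0\).

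\emph{Case \(a_q(x,i)\neq a_p(x,i)\).} Since \(a_q(x,i)\in\cA_0(x)\) and differs from \(a_p(x,i)\), the gap in Condition~\ref{as:LP2} gives
\[
D(x,i)\ \ge\ R_p\big(x,a_p,i\big)-\sup_{a\in\cA_0(x),\,a\neq a_p}R_p(x,a,i)\ \ge\ \Delta_{\min}^{\mathcal P}.
\]
In the other direction, the triangle inequality and the envelope give \(D(x,i)\le 2B_{\mathcal P}=\Delta_{\max}^{\mathcal P}\), exactly as in the proof of Lemma~\ref{lem:compact_continuity_consequences_P}.

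Together the two cases give, on \(E\),
\[
\Delta_{\min}^{\mathcal P}\,\mathbf 1\{a_q\neq a_p\}\ \le\ D\ \le\ \Delta_{\max}^{\mathcal P}\,\mathbf 1\{a_q\neq a_p\}.
\]
Taking expectations under \(d_0\times\rho\) yields both claimed inequalities. The integrand is bounded and measurable because the selectors are measurable and \(R_p\) is jointly measurable, so the expectations are well defined.

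The only delicate point is bookkeeping. The selector \(a_q\) is defined only up to null sets and may be extended arbitrarily off its full-measure set. We must therefore make sure \(a_q(x,i)\) actually lies in \(\cA_0(x)\), so that both the gap bound and the envelope bound apply to it. Intersecting with the event where the selector for \(R_q\) is a true supportwise argmax, which is part of the definition of \(E\), settles this. Everything else is routine.
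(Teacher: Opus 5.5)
Your proposal is correct and follows the paper's proof essentially verbatim. Both arguments establish the pointwise sandwich $\Delta_{\min}^{\mathcal P}\mathbf 1\{a_q\neq a_p\}\le D\le \Delta_{\max}^{\mathcal P}\mathbf 1\{a_q\neq a_p\}$ on a full-measure set where the gap of Condition~\ref{as:LP2} holds, and then integrate; your explicit check that $a_q(x,i)\in\cA_0(x)$ spells out a step the paper leaves implicit. One small correction to your fourth item: continuity on $\cA_0(x)$ alone does not guarantee that the argmax is attained, since the support need not be compact. This does not affect the argument, because attainment on a full-measure set is already part of the selector definition, which your first item covers.
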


\begin{proof}[Proof of Lemma~\ref{lem:regret_vs_disagreement}]
Fix \(p,q\in\mathcal P\). Define
\[
E_{p,q}
\coloneqq
\{(x,i)\in\cX\times\cU: a_q(x,i)\neq a_p(x,i)\}
\]
and the pointwise regret
\[
g_{p,q}(x,i)
\coloneqq
R_p(x,a_p(x,i),i)-R_p(x,a_q(x,i),i).
\]
By Condition~\ref{as:LP2}, there is a measurable set \(G_p\subseteq\cX\times\cU\) with
\((d_0\times\rho)(G_p)=1\) such that the stated supportwise gap for truth \(p\) holds at every
\((x,i)\in G_p\). If \((x,i)\notin E_{p,q}\), then \(a_q(x,i)=a_p(x,i)\) and hence
\(g_{p,q}(x,i)=0\). If \((x,i)\in E_{p,q}\cap G_p\), then the gap condition gives
\[
g_{p,q}(x,i)\ge \Delta_{\min}^{\mathcal P}.
\]
By definition of \(\Delta_{\max}^{\mathcal P}\), after possibly intersecting \(G_p\) with another full \(d_0\times\rho\)-measure set, one also has
\[
g_{p,q}(x,i)\le \Delta_{\max}^{\mathcal P}
\]
on \(G_p\). Therefore, on \(G_p\),
\[
\Delta_{\min}^{\mathcal P}\,\mathbf 1_{E_{p,q}}(x,i)
\le
g_{p,q}(x,i)
\le
\Delta_{\max}^{\mathcal P}\,\mathbf 1_{E_{p,q}}(x,i).
\]
Since \(G_p\) has full \(d_0\times\rho\)-measure, the same display holds \(d_0\times\rho\)-a.s.
Taking expectations with respect to \((X,I)\sim d_0\times\rho\) yields the claim.
\end{proof}

\begin{lemma}[Stability of the temperature-zero selector on \(\mathcal P\)]
\label{lem:stability_selector_P}
Assume \ref{as:LP2}. If \(p,q\in\mathcal P\) satisfy
\[
\|R_q-R_p\|_{\infty,\supp(\pi_0)}<\frac{\Delta_{\min}^{\mathcal P}}{2},
\]
then
\[
a_q(X,I)=a_p(X,I)
\qquad d_0\times\rho\text{-a.s.}
\]
\end{lemma}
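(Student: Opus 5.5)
The plan is to reduce the statement to Lemma~\ref{lem:misrec_implies_sup_error}, applied pointwise with truth score \(S^\star=R_p\) and candidate \(S=R_q\). First, Condition~\ref{as:LP2} applied to the truth \(p\) gives a measurable set \(G_p\subseteq\cX\times\cU\) of full \(d_0\times\rho\)-measure on which three things hold. The supportwise maximizer \(a_p(x,i)\) of \(R_p(x,\cdot,i)\) over \(\cA_0(x)\) is unique. The gap from \(a_p(x,i)\) to the supremum over all other support actions is at least \(\Delta_{\min}^{\mathcal P}\). The selector \(a_q(x,i)\) is a genuine maximizer of \(R_q(x,\cdot,i)\) over \(\cA_0(x)\). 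For this last property, we intersect with the full-measure set on which the selector for \(R_q\) is defined, as stipulated in Section~\ref{sec:temperature_zero_regret}.

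Next, fix \((x,i)\in G_p\) and argue by contradiction. Suppose \(a_q(x,i)\neq a_p(x,i)\). All hypotheses of Lemma~\ref{lem:misrec_implies_sup_error} then hold at \((x,i)\): the truth maximizer is unique with gap \(\Delta_{\min}^{\mathcal P}\), and the candidate selector lies in the support and maximizes \(R_q\) there. The lemma yields
\[
\sup_{a\in\cA_0(x)}|R_q(x,a,i)-R_p(x,a,i)|\ge \Delta_{\min}^{\mathcal P}/2.
\]
The left side is bounded by \(\|R_q-R_p\|_{\infty,\supp(\pi_0)}\), since that norm takes the supremum over all \(x\), \(i\), and support actions. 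The hypothesis makes this strictly less than \(\Delta_{\min}^{\mathcal P}/2\), which is a contradiction. Hence \(a_q=a_p\) on \(G_p\), and therefore \(d_0\times\rho\)-a.s.

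The only delicate point is measure-theoretic bookkeeping. The selectors \(a_q\) and \(a_p\) are defined only on full-measure sets, and may be extended arbitrarily off them. The gap condition also holds only almost surely. The conclusion is therefore stated almost surely, obtained by intersecting finitely many (here, two or three) full-measure sets. Note also that the strict inequality in the hypothesis is what produces the contradiction; a non-strict inequality would not suffice, because Lemma~\ref{lem:misrec_implies_sup_error} gives only \(\ge\). No continuity from Condition~\ref{as:LP1} is needed, which is consistent with the statement assuming only Condition~\ref{as:LP2}.
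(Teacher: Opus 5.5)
Your proposal is correct and follows essentially the same route as the paper: both apply Lemma~\ref{lem:misrec_implies_sup_error} with $S^\star=R_p$ and $S=R_q$ at a point of the full-measure gap set $G_p$, and both get a contradiction with the strict supportwise bound $\|R_q-R_p\|_{\infty,\supp(\pi_0)}<\Delta_{\min}^{\mathcal P}/2$. The only difference is cosmetic. You argue pointwise on $G_p$, intersected with the set where $a_q$ is a genuine maximizer, a detail the paper leaves implicit. The paper instead assumes the disagreement set has positive measure and picks a point in its intersection with $G_p$.
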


\begin{proof}
Let
\[
E_{p,q}\coloneqq\{(x,i):a_q(x,i)\neq a_p(x,i)\}.
\]
Assume, toward a contradiction, that \((d_0\times\rho)(E_{p,q})>0\). By
Condition~\ref{as:LP2}, there is a full-measure set \(G_p\) on which the supportwise gap for truth
\(p\) holds. Hence \((d_0\times\rho)(E_{p,q}\cap G_p)>0\), so choose a point in this intersection.
At that point Lemma~\ref{lem:misrec_implies_sup_error} applies with truth score \(S^\star=R_p\)
and candidate score \(S=R_q\), yielding
\[
\|R_q-R_p\|_{\infty,\supp(\pi_0)}
\ge
\frac{\Delta_{\min}^{\mathcal P}}{2},
\]
which contradicts the strict inequality in the hypothesis. Hence \(a_q=a_p\) almost surely.
\end{proof}

\begin{lemma}[Continuity of the truth-centered regret and risk maps]
\label{lem:continuity_truth_centered_maps}
Assume \ref{as:LP1}--\ref{as:LP2}. Then the maps
\[
(p,q)\longmapsto \mathcal G_p(R_q)
\qquad\text{and}\qquad
(p,q)\longmapsto \mathcal L_p(R_q)-\mathcal L_p(R_p)
\]
are continuous on \(\mathcal P^2\).
\end{lemma}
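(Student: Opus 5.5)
We prove continuity of each map by sequential continuity on the compact metric space \(\mathcal P^2\) (Lemma~\ref{lem:compact_continuity_consequences_P}). Fix \((p_n,q_n)\to(p,q)\) in \(\mathcal P^2\). By continuity of \(p\mapsto R_p\) under \(\|\cdot\|_{\infty,\supp(\pi_0)}\), we have \(\|R_{p_n}-R_p\|_{\infty,\supp(\pi_0)}\to0\) and \(\|R_{q_n}-R_q\|_{\infty,\supp(\pi_0)}\to0\).

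\textbf{The regret map.} For \(n\) large, both sup-norm distances fall below \(\Delta_{\min}^{\mathcal P}/2\). Lemma~\ref{lem:stability_selector_P}, applied to \((p,p_n)\) and \((q,q_n)\), then gives \(a_{p_n}=a_p\) and \(a_{q_n}=a_q\) \(d_0\times\rho\)-a.s.

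Hence, almost surely, the integrand of \(\mathcal G_{p_n}(R_{q_n})\) is
\[
R_{p_n}(X,a_p(X,I),I)-R_{p_n}(X,a_q(X,I),I).
\]
Both selected actions lie in \(\cA_0(X)\) on the full-measure set where the selectors are defined. Therefore this integrand differs from the integrand of \(\mathcal G_p(R_q)\) by at most \(2\|R_{p_n}-R_p\|_{\infty,\supp(\pi_0)}\). This gives
\[
|\mathcal G_{p_n}(R_{q_n})-\mathcal G_p(R_q)|\le 2\|R_{p_n}-R_p\|_{\infty,\supp(\pi_0)}\to0.
\]
The map is in fact eventually Lipschitz in \(p\) and locally constant in \(q\).

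The subtle point is that the fixed measurable selectors for \(R_{q_n}\) and \(R_q\) might differ on null sets. The gap condition~\ref{as:LP2} only asserts uniqueness of the argmax for truths in \(\mathcal P\). Since \(q_n,q\in\mathcal P\), the argmax for each is unique a.s., so any selectors agree a.s. with the unique maximizers, and Lemma~\ref{lem:stability_selector_P} covers this.

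\textbf{The loss map.} Write
\[
\mathcal L_p(R)=\EE\bigl[\textstyle\sum_k P_{R_p}(k\mid X,I,\mathbf A)\,\ell(\mathbf v_R,k)\bigr],
\]
with slates \(\mathbf A\sim\pi_0^{\otimes K}\), so all coordinates lie in \(\cA_0(X)\) a.s. Decompose
\[
\mathcal L_{p_n}(R_{q_n})-\mathcal L_p(R_q)
=\bigl[\mathcal L_{p_n}(R_{q_n})-\mathcal L_{p_n}(R_q)\bigr]+\bigl[\mathcal L_{p_n}(R_q)-\mathcal L_p(R_q)\bigr].
\]

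The first bracket is bounded by \(2\|R_{q_n}-R_q\|_{\infty,\supp(\pi_0)}\), by the Lipschitz part of Lemma~\ref{lem:mnl_loss_basic}, which holds pointwise in \(y\).

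For the second bracket, the loss is bounded by \(\ell_{\max}=\log K+2B_{\mathcal P}\). The softmax map is Lipschitz, since \(\|\softmax(u)-\softmax(v)\|_1\le 2\|u-v\|_\infty\) by the same gradient bound used in Lemma~\ref{lem:mnl_loss_basic}. So the second bracket is at most
\[
\ell_{\max}\cdot 2\|R_{p_n}-R_p\|_{\infty,\supp(\pi_0)}.
\]

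Applying the same argument with \(q=p\) handles \(\mathcal L_{p_n}(R_{p_n})\to\mathcal L_p(R_p)\). Subtracting the two limits yields continuity of \((p,q)\mapsto\mathcal L_p(R_q)-\mathcal L_p(R_p)\).

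The main obstacle is the selector issue in the regret map, namely that \(\mathcal G\) involves a discontinuous argmax. It is handled entirely by the uniform gap through Lemma~\ref{lem:stability_selector_P}. The loss part is routine bounded-convergence bookkeeping.
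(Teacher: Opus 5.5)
Your proof is correct and follows the paper's route: you use selector stabilization via Lemma~\ref{lem:stability_selector_P} for the regret map, and continuity and boundedness of the softmax and MNL loss on reference-support slates for the risk map. The only difference is that you replace the paper's dominated-convergence steps with explicit sup-norm Lipschitz estimates, which gives a quantitative modulus of continuity; note that the bound $\|\softmax(u)-\softmax(v)\|_1\le 2\|u-v\|_\infty$ comes from the softmax Jacobian $\diag(p)-pp^\top$ rather than literally from the gradient of $\ell$ in Lemma~\ref{lem:mnl_loss_basic}, though the inequality does hold.
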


\begin{proof}
This is the personalized analogue of the compact-continuity argument in \citet{kang2026demystifying}. 
By Condition~\ref{as:LP1}, the map
\[
p\longmapsto R_p
\]
is continuous from \(\mathcal P\) into the score space endowed with
\(\|\cdot\|_{\infty,\supp(\pi_0)}\).

We first prove continuity of \((p,q)\mapsto \mathcal G_p(R_q)\). Let \((p_n,q_n)\to(p,q)\) in
\(\mathcal P^2\). Then
\[
\|R_{p_n}-R_p\|_{\infty,\supp(\pi_0)}\to0,
\qquad
\|R_{q_n}-R_q\|_{\infty,\supp(\pi_0)}\to0.
\]
By Lemma~\ref{lem:stability_selector_P}, for all sufficiently large \(n\),
\[
a_{p_n}=a_p
\quad\text{and}\quad
a_{q_n}=a_q
\qquad
d_0\times\rho\text{-a.s.}
\]
Hence for all sufficiently large \(n\),
\[
\mathcal G_{p_n}(R_{q_n})
=
\EE\Big[
R_{p_n}(X,a_p(X,I),I)-R_{p_n}(X,a_q(X,I),I)
\Big].
\]
The integrand is uniformly bounded by \(2B_{\mathcal P}\) and converges pointwise almost surely to
\[
R_p(X,a_p(X,I),I)-R_p(X,a_q(X,I),I).
\]
Dominated convergence therefore yields
\[
\mathcal G_{p_n}(R_{q_n})\to \mathcal G_p(R_q).
\]

We next prove continuity of \((p,q)\mapsto \mathcal L_p(R_q)-\mathcal L_p(R_p)\). Fix
\((x,i,\mathbf a)\). Because both the softmax map and the MNL loss are continuous in the score
vector, the integrand
\[
(p,q,x,i,\mathbf a)
\longmapsto
\EE_{Y\sim P_{R_p}(\cdot\mid x,i,\mathbf a)}
\big[\ell(\mathbf v_{R_q},Y)-\ell(\mathbf v_{R_p},Y)\big]
\]
is continuous in \((p,q)\). By Lemma~\ref{lem:mnl_loss_basic}, this integrand is uniformly bounded
by \(2(\log K+2B_{\mathcal P})\). Another application of dominated convergence yields the claimed
continuity.
\end{proof}

\begin{proof}[Proof of Lemma~\ref{lem:automatic_positive_regret_isolation_P}]
This is the user-indexed selector-isolation argument adapted from \citet{kang2026demystifying}. 
Define an equivalence relation on \(\mathcal P\) by
\[
p\sim q
\qquad\Longleftrightarrow\qquad
a_p(X,I)=a_q(X,I)
\quad
d_0\times\rho\text{-a.s.}
\]
Let \([p]\) denote the corresponding selector class.

We first show that only finitely many selector classes can occur. Suppose, toward a contradiction,
that there are infinitely many distinct selector classes. Then we may choose
\[
p_1,p_2,p_3,\dots \in \mathcal P
\]
such that \([p_i]\neq [p_j]\) whenever \(i\neq j\). For each \(i\neq j\), one has
\[
(d_0\times\rho)\{a_{p_i}\neq a_{p_j}\}>0.
\]
Hence, by the contrapositive of Lemma~\ref{lem:stability_selector_P},
\[
\|R_{p_i}-R_{p_j}\|_{\infty,\supp(\pi_0)}
\ge
\frac{\Delta_{\min}^{\mathcal P}}{2}
\qquad
\text{for all }i\neq j.
\]
Thus \(\{R_{p_i}:i\ge 1\}\) is an infinite \(\Delta_{\min}^{\mathcal P}/2\)-separated subset of
\(\mathcal F_{\mathcal P}\). But \(\mathcal F_{\mathcal P}\) is compact under
\(\|\cdot\|_{\infty,\supp(\pi_0)}\), since \(\mathcal P\) is compact by Condition~\ref{as:LP1}
and \(p\mapsto R_p\) is continuous. Hence \(\mathcal F_{\mathcal P}\) is totally bounded, so no
such infinite separated subset can exist. Therefore only finitely many selector classes occur.

Let
\[
a^{(1)},\dots,a^{(m)}
\]
be representatives of these finitely many selector classes.

If \(m=1\), then every pair \(p,q\in\mathcal P\) satisfies
\[
a_p(X,I)=a_q(X,I)
\qquad
d_0\times\rho\text{-a.s.}
\]
and therefore
\[
\mathcal G_p(R_q)=0
\qquad
\forall p,q\in\mathcal P.
\]
In this case the conclusion holds for any positive choice of
\[
\varepsilon_{\mathrm{iso}}^{\mathcal P}>0;
\]
for concreteness, take
\[
\varepsilon_{\mathrm{iso}}^{\mathcal P}\coloneqq \Delta_{\min}^{\mathcal P}.
\]

Assume now that \(m\ge 2\). Define
\[
\delta_{\mathcal P}
\coloneqq
\min_{1\le i<j\le m} (d_0\times\rho)\{a^{(i)}\neq a^{(j)}\}.
\]
Because the classes are distinct modulo \(d_0\times\rho\)-a.s.\ equality, every term in this
finite minimum is strictly positive, hence
\[
\delta_{\mathcal P}>0.
\]
Set
\[
\varepsilon_{\mathrm{iso}}^{\mathcal P}
\coloneqq
\Delta_{\min}^{\mathcal P}\,\delta_{\mathcal P}.
\]

Fix any \(p,q\in\mathcal P\). If \(a_q=a_p\) \(d_0\times\rho\)-a.s., then by definition
\[
\mathcal G_p(R_q)=0.
\]
Otherwise \(a_q\) and \(a_p\) belong to two distinct selector classes, so
\[
(d_0\times\rho)\{a_q\neq a_p\}\ge \delta_{\mathcal P}.
\]
Applying Lemma~\ref{lem:regret_vs_disagreement} gives
\[
\mathcal G_p(R_q)
\ge
\Delta_{\min}^{\mathcal P}\,(d_0\times\rho)\{a_q\neq a_p\}
\ge
\Delta_{\min}^{\mathcal P}\,\delta_{\mathcal P}
=
\varepsilon_{\mathrm{iso}}^{\mathcal P}.
\]
Therefore
\[
\mathcal G_p(R_q)\in\{0\}\cup[\varepsilon_{\mathrm{iso}}^{\mathcal P},\infty),
\]
as claimed.
\end{proof}

\begin{lemma}[Supportwise upgrade of almost-sure equality]
\label{lem:support_upgrade_personalized}
Let \((\cA,d)\) be a separable metric space, let \(\mu\) be a Borel probability measure on \(\cA\),
and let \(S\coloneqq \supp(\mu)\) be its topological support. If \(f:S\to\R\) is continuous and
\(f=0\) \(\mu\)-a.s., then \(f\equiv 0\) on \(S\). More generally, if \(f,g:S\to\R\) are continuous
and \(f=g\) \(\mu\)-a.s., then \(f=g\) on \(S\).
\end{lemma}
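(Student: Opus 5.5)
The plan is the standard support argument: first the scalar case, then reduce the two-function case to it. For the scalar case, let \(Z\coloneqq\{a\in S: f(a)\neq 0\}\). Continuity of \(f\) on \(S\) makes \(Z\) relatively open in \(S\), so there is an open set \(O\subseteq\cA\) with \(Z=O\cap S\). The hypothesis \(f=0\) \(\mu\)-a.s.\ gives \(\mu(Z)=0\). We also need \(\mu(S)=1\), which is where separability enters: \(\cA\) is second countable, so the complement of the support \(S^c\) is the union of all \(\mu\)-null open sets and equals a countable union of \(\mu\)-null basic open sets. Hence \(\mu(S^c)=0\), which is the one place the separability hypothesis is used.

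Consequently \(\mu(O)=\mu(O\cap S)+\mu(O\cap S^c)\le \mu(Z)+\mu(S^c)=0\). Suppose \(Z\neq\emptyset\), and pick \(a\in Z\subseteq S\). Then \(O\) is an open neighborhood of \(a\) with \(\mu(O)=0\). This contradicts the definition of the topological support, which says every open neighborhood of a support point has positive mass. Therefore \(Z=\emptyset\), that is, \(f\equiv0\) on \(S\).

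For the general statement, apply the scalar case to \(f-g\). This function is continuous on \(S\) and vanishes \(\mu\)-a.s., so \(f=g\) everywhere on \(S\).

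The only genuinely delicate step is \(\mu(S)=1\), which may fail for non-separable spaces. I would state it explicitly via the countable-base argument above rather than take it for granted. Everything else is a direct unwinding of the definition of support together with relative openness of \(\{f\neq0\}\) in \(S\).
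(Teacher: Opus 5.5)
Your proof is correct and follows essentially the same route as the paper: you use second countability to get \(\mu(\cA\setminus S)=0\), then show that a point of \(S\) where \(f\neq0\) has an ambient open neighborhood that must be \(\mu\)-null, which contradicts the definition of support, and you handle \(f,g\) by applying the scalar case to \(f-g\). The only cosmetic difference is that you work with the whole relatively open set \(\{f\neq0\}\), whereas the paper uses a neighborhood of a single point on which \(|f|\ge\delta\).
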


\begin{proof}
The argument is the topological-support upgrade used in \citet{kang2026demystifying}, written here for the user-indexed application. Since \(\cA\) is separable metric, it is second countable. Let \(O\coloneqq \cA\setminus S\). For every \(a\in O\), by definition of topological support there is an open neighborhood \(V_a\) of \(a\) with \(\mu(V_a)=0\). The open cover \(\{V_a:a\in O\}\) has a countable subcover, so \(\mu(O)=0\).

Assume that \(f=0\) \(\mu\)-a.s. on \(S\), but that \(f(a_0)\neq0\) for some \(a_0\in S\). By continuity of \(f\) on the subspace \(S\), there are \(\delta>0\) and an open set \(U\subseteq\cA\) containing \(a_0\) such that \(|f(a)|\ge\delta\) for every \(a\in U\cap S\). Because \(a_0\in S=\supp(\mu)\), every open neighborhood of \(a_0\) has positive \(\mu\)-mass, so \(\mu(U)>0\). Since \(\mu(\cA\setminus S)=0\), we also have \(\mu(U\cap S)>0\), contradicting \(f=0\) \(\mu\)-a.s. Hence \(f\equiv0\) on \(S\). Applying this to \(f-g\) proves the equality statement.
\end{proof}

\begin{lemma}[Zero truth-centered loss identifies the personalized score on the reference support]
\label{lem:zero_loss_implies_zero_regret_P}
Assume \ref{as:LP1}. Then for every truth \(p\in\mathcal P\) and every \(q\in\mathcal P\),
\[
\mathcal L_p(R_q)=\mathcal L_p(R_p)
\quad\Longrightarrow\quad
R_q(x,a,i)=R_p(x,a,i)
\quad
\forall a\in \cA_0(x),
\quad d_0\times\rho\text{-a.e. }(x,i).
\]
In particular,
\[
\mathcal G_p(R_q)=0.
\]
\end{lemma}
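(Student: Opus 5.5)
The plan is to convert the zero excess loss into a pointwise statement about slates, read it off as equality of score differences on the support, and then use the centering to remove the additive constant. First, by the tower property and Lemma~\ref{lem:excess_is_kl_support},
\[
\mathcal L_p(R_q)-\mathcal L_p(R_p)
=\EE_{(X,I)\sim d_0\times\rho}\,\EE_{\mathbf A\sim\pi_0(\cdot\mid X)^{\otimes K}}
\Big[\KL\big(P_{R_p}(\cdot\mid X,I,\mathbf A)\,\|\,P_{R_q}(\cdot\mid X,I,\mathbf A)\big)\Big].
\]
The integrand is bounded, since all scores lie in $[-B_{\mathcal P},B_{\mathcal P}]$ on the support. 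Hence the left side vanishing forces the inner expectation to vanish for $d_0\times\rho$-a.e.\ $(x,i)$.

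For such a fixed $(x,i)$, I apply Lemma~\ref{lem:expected_kl_ge_var_support} with $q=\pi_0(\cdot\mid x)$, $S^\star=R_p$, $S=R_q$ and $B=B_{\mathcal P}$. Since $K\ge2$, the constant $c_{\mathrm{mnl}}>0$, so
\[
\Var_{a\sim\pi_0(\cdot\mid x)}\big(R_q(x,a,i)-R_p(x,a,i)\big)=0 .
\]
Thus $\Delta(a)\coloneqq R_q(x,a,i)-R_p(x,a,i)$ equals its $\pi_0$-mean $\pi_0(\cdot\mid x)$-a.s.

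Both scores are $\pi_0$-centered, because $\bar\phi_h$ has zero $\pi_0$-mean and the head is linear. So that mean is $\EE_{\pi_0}[R_q]-\EE_{\pi_0}[R_p]=0$, and $\Delta=0$ holds $\pi_0(\cdot\mid x)$-a.s. This centering step is exactly what excludes a nonzero additive constant, to which the MNL likelihood is blind.

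To upgrade from almost-sure to everywhere on $\cA_0(x)$, I invoke the continuity part of Condition~\ref{as:LP1}. The sections $a\mapsto R_p(x,a,i)$ and $a\mapsto R_q(x,a,i)$ are continuous on $\cA_0(x)$ for a.e.\ $(x,i)$. Intersecting the countably many full-measure sets involved (two parameters, finitely many users) and applying Lemma~\ref{lem:support_upgrade_personalized} with $\mu=\pi_0(\cdot\mid x)$ gives $R_q=R_p$ on all of $\cA_0(x)$ for a.e.\ $(x,i)$.

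Finally, on that full-measure set the supportwise argmax sets of $R_q$ and $R_p$ coincide. The argmax of $R_p$ is a singleton a.s.\ by Condition~\ref{as:LP2}, so $a_q=a_p$ a.s., the integrand of $\mathcal G_p(R_q)$ vanishes a.s., and $\mathcal G_p(R_q)=0$.

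One caveat: this last step uses Condition~\ref{as:LP2}, or at least a convention that the selector depends only on the score restricted to the support. Only Condition~\ref{as:LP1} is assumed in the statement, so I would either note that the selector is determined by the score values on $\cA_0(x)$, or invoke Condition~\ref{as:LP2} here as the surrounding results do.

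The main obstacle is the measure-theoretic bookkeeping. The continuity in Condition~\ref{as:LP1} holds only for a.e.\ $(x,i)$, and $R_q$ must be jointly measurable so that the iterated expectation and Fubini step are legitimate. I would handle this by working throughout on the intersection of the relevant full-measure sets.
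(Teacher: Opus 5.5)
Your proof is correct. Its skeleton matches the paper's: KL representation of the loss gap, zero $\pi_0(\cdot\mid x)$-variance of $\Delta$, centering to remove the additive constant, then Lemma~\ref{lem:support_upgrade_personalized}.

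The one step you do differently is how you obtain zero variance. The paper argues qualitatively. $\KL=0$ forces the two MNL laws to coincide on $\pi_0(\cdot\mid x)^{\otimes K}$-a.e.\ slate. Comparing the odds of coordinates $1$ and $2$ then gives $h_{x,i}(a_1)=h_{x,i}(a_2)$ for $\pi_0(\cdot\mid x)^{\otimes 2}$-a.e.\ pair, and the identity $\EE[(h_{x,i}(A)-h_{x,i}(B))^2]=2\Var(h_{x,i}(A))$ finishes. You instead invoke the curvature bound of Lemma~\ref{lem:expected_kl_ge_var_support} with sampling law $\pi_0(\cdot\mid x)$ and $B=B_{\mathcal P}$.

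\begin{itemize}
\item The paper's route is more elementary. Boundedness is used only to put $h_{x,i}$ in $L^2$, and no Taylor expansion or explicit curvature constant enters.
\item Your route is shorter given that lemma, and it is quantitative. Integrating it over $(x,i)$ and using centering gives $c_{\mathrm{mnl},\mathcal P}\,\EE[\Delta_{q\mid p}(X,A,I)^2]\le \mathcal L_p(R_q)-\mathcal L_p(R_p)$. This is the lower half of Proposition~\ref{prop:creg_Gammareg_comparison}, so the identification lemma becomes a direct corollary of it.
\end{itemize}

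A small point: it is nonnegativity of $\KL$, not boundedness of the integrand, that makes a vanishing outer expectation force the inner one to vanish a.e.

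Your closing caveat is unnecessary, because you never need $a_q=a_p$. On the full-measure set where $R_q(x,\cdot,i)=R_p(x,\cdot,i)$ on $\cA_0(x)$, the selector $a_q(x,i)$ lies in $\arg\max_{\cA_0(x)}R_q(x,\cdot,i)=\arg\max_{\cA_0(x)}R_p(x,\cdot,i)$. Hence $R_p(x,a_q(x,i),i)=\max_{a\in\cA_0(x)}R_p(x,a,i)=R_p(x,a_p(x,i),i)$ regardless of ties, and the integrand of $\mathcal G_p(R_q)$ vanishes a.s. This yields the ``in particular'' claim under Condition~\ref{as:LP1} alone. You need neither Condition~\ref{as:LP2} nor the tie-breaking convention the paper appeals to.
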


\begin{proof}
This is the user-indexed supportwise-identification argument adapted from \citet{kang2026demystifying}. 
Fix a truth \(p\in\mathcal P\) and a candidate \(q\in\mathcal P\). Define
\[
K_{p,q}(x,i,\mathbf a)
\coloneqq
\KL\!\Big(P_{R_p}(\cdot\mid x,i,\mathbf a)\,\Big\|\,P_{R_q}(\cdot\mid x,i,\mathbf a)\Big).
\]
By Lemma~\ref{lem:excess_is_kl_support},
\[
\mathcal L_p(R_q)-\mathcal L_p(R_p)
=
\EE\big[K_{p,q}(X,I,\mathbf A)\big],
\]
where
\[
(X,I)\sim d_0\times\rho,
\qquad
\mathbf A\sim \pi_0(\cdot\mid X)^{\otimes K}.
\]
Assume \(\mathcal L_p(R_q)=\mathcal L_p(R_p)\). Since \(K_{p,q}\ge0\), Tonelli's theorem gives a
full-measure set \(G\subseteq\cX\times\cU\) such that for every \((x,i)\in G\),
\[
K_{p,q}(x,i,\mathbf a)=0
\qquad
\pi_0(\cdot\mid x)^{\otimes K}\text{-a.e. }\mathbf a.
\]

Fix \((x,i)\in G\) such that both action sections
\[
a\longmapsto R_p(x,a,i),
\qquad
a\longmapsto R_q(x,a,i)
\]
are continuous on \(\cA_0(x)\). By Condition~\ref{as:LP1}, this holds for
\(d_0\times\rho\)-a.e.\ \((x,i)\). Define
\[
h_{x,i}(a)\coloneqq R_q(x,a,i)-R_p(x,a,i),
\qquad a\in \cA_0(x).
\]
For every slate \(\mathbf a=(a_1,\dots,a_K)\) in the full-measure set above, the two MNL laws are
equal. Comparing the odds of coordinates \(1\) and \(2\) gives
\[
\exp\{h_{x,i}(a_1)-h_{x,i}(a_2)\}=1,
\]
and hence
\[
h_{x,i}(a_1)-h_{x,i}(a_2)=0.
\]
The event \(\{h_{x,i}(a_1)\neq h_{x,i}(a_2)\}\) depends only on the first two slate coordinates. Integrating out coordinates \(3,\dots,K\) therefore gives
\[
\pi_0(\cdot\mid x)^{\otimes2}\big(h_{x,i}(a)\neq h_{x,i}(b)\big)=0.
\]
Thus, if \(A,B\overset{\mathrm{i.i.d.}}{\sim}\pi_0(\cdot\mid x)\), then
\[
h_{x,i}(A)=h_{x,i}(B)
\qquad\text{a.s.}
\]
By the compactness consequence of Condition~\ref{as:LP1}, both \(R_p\) and \(R_q\) are supportwise bounded, so \(h_{x,i}(A)\in L^2\). Hence
\[
0=\EE\big[(h_{x,i}(A)-h_{x,i}(B))^2\big]
=2\,\Var(h_{x,i}(A)).
\]
Thus there exists \(c_{x,i}\in\R\) such that
\[
h_{x,i}(a)=c_{x,i}
\qquad
\pi_0(\cdot\mid x)\text{-a.s. }a.
\]
Both scores are \(\pi_0\)-centered, so
\[
0=
\EE_{A\sim\pi_0(\cdot\mid x)}[h_{x,i}(A)]
=c_{x,i}.
\]
Therefore \(h_{x,i}=0\) \(\pi_0(\cdot\mid x)\)-a.s. By
Lemma~\ref{lem:support_upgrade_personalized},
\[
h_{x,i}(a)=0
\qquad
\forall a\in \cA_0(x).
\]
This proves the claimed supportwise equality for \(d_0\times\rho\)-a.e.\ \((x,i)\). Consequently
\(a_q(X,I)=a_p(X,I)\) under the common measurable tie-breaking convention, and hence
\(\mathcal G_p(R_q)=0\).
\end{proof}

\begin{lemma}[Positive far-pair truth-centered risk gap]
\label{lem:positive_far_pair_risk_gap_P}
Assume \ref{as:LP1}--\ref{as:LP2}. For every \(r_0>0\), define
\[
\gamma_{\mathrm{far}}^{\mathcal P}(r_0;\varepsilon_0)
\coloneqq
\inf_{\substack{
p,q\in\mathcal P\\
\max_i\|\lambda_{i,q}-\lambda_{i,p}\|_2\ge r_0\\
\mathcal G_p(R_q)\ge \varepsilon_0
}}
\Big(
\mathcal L_p(R_q)-\mathcal L_p(R_p)
\Big),
\qquad
\inf\emptyset\coloneqq +\infty.
\]
Then
\[
\gamma_{\mathrm{far}}^{\mathcal P}(r_0;\varepsilon_0)>0.
\]
\end{lemma}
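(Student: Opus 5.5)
We will prove Lemma~\ref{lem:positive_far_pair_risk_gap_P} by a compactness argument. If the index set of the infimum is empty the value is \(+\infty>0\), so assume it is nonempty.

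\emph{Step 1: the index set is compact.} Let
\[
S\coloneqq\Big\{(p,q)\in\mathcal P^2:\ \max_i\|\lambda_{i,q}-\lambda_{i,p}\|_2\ge r_0,\ \mathcal G_p(R_q)\ge\varepsilon_0\Big\}.
\]
By Lemma~\ref{lem:compact_continuity_consequences_P}, \(\mathcal P\) is compact, hence \(\mathcal P^2\) is compact. The map \((p,q)\mapsto\max_i\|\lambda_{i,q}-\lambda_{i,p}\|_2\) is continuous, since it is a coordinate projection on \(\Xi\subset\R^{J\times U}\). The map \((p,q)\mapsto\mathcal G_p(R_q)\) is continuous by Lemma~\ref{lem:continuity_truth_centered_maps}. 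Therefore \(S\) is a closed subset of \(\mathcal P^2\), and so it is compact.

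\emph{Step 2: the infimum is attained.} By Lemma~\ref{lem:continuity_truth_centered_maps}, the function \(F(p,q)\coloneqq\mathcal L_p(R_q)-\mathcal L_p(R_p)\) is continuous on \(\mathcal P^2\). Weierstrass' theorem then gives a pair \((p^\ast,q^\ast)\in S\) with \(\gamma_{\mathrm{far}}^{\mathcal P}(r_0;\varepsilon_0)=F(p^\ast,q^\ast)\).

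\emph{Step 3: the minimum is positive.} By Lemma~\ref{lem:excess_is_kl_support}, \(F\) is an expected KL divergence, so \(F\ge0\). Suppose \(F(p^\ast,q^\ast)=0\). Then \(\mathcal L_{p^\ast}(R_{q^\ast})=\mathcal L_{p^\ast}(R_{p^\ast})\), and Lemma~\ref{lem:zero_loss_implies_zero_regret_P} gives \(\mathcal G_{p^\ast}(R_{q^\ast})=0\). This contradicts \(\mathcal G_{p^\ast}(R_{q^\ast})\ge\varepsilon_0>0\). Hence \(\gamma_{\mathrm{far}}^{\mathcal P}(r_0;\varepsilon_0)=F(p^\ast,q^\ast)>0\).

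\emph{Remarks.} The head-distance constraint \(\max_i\|\lambda_{i,q}-\lambda_{i,p}\|_2\ge r_0\) plays no role in the positivity. It only keeps \(S\) closed, and the same argument works without it.

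The one point that needs care is the tie-breaking convention in Lemma~\ref{lem:zero_loss_implies_zero_regret_P}. Equality of \(R_{q^\ast}\) and \(R_{p^\ast}\) on the support must force \(a_{q^\ast}=a_{p^\ast}\) almost surely. Under Condition~\ref{as:LP2}, the maximizer \(a_{p^\ast}\) is unique, and supportwise equality of the two scores makes it the unique maximizer of \(R_{q^\ast}\) as well. So the selectors agree regardless of the measurable selection rule. The same uniqueness is what makes \(\mathcal G\) well-defined and continuous in Lemma~\ref{lem:continuity_truth_centered_maps}, which we take as given.
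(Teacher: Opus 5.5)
Your proof is correct and follows the paper's argument essentially step for step. You show the constrained set is compact in $\mathcal P^2$ via continuity of the head-distance and regret maps, let the continuous loss-gap map attain its minimum there, and derive a contradiction from Lemma~\ref{lem:zero_loss_implies_zero_regret_P} if that minimum were zero. Two additions are accurate. Your explicit appeal to Lemma~\ref{lem:excess_is_kl_support} for $F\ge 0$ is a step the paper leaves implicit when passing from ``nonzero'' to ``positive.'' Your remark that the head-distance constraint is inessential is exactly how Lemma~\ref{lem:automatic_fixed_scale_loss_gap_P} later runs the same argument without it.
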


\begin{proof}[Proof of Lemma~\ref{lem:positive_far_pair_risk_gap_P}]
By \ref{as:LP1}, \(\mathcal P\) is compact, so \(\mathcal P^2\) is compact. The map
\[
(p,q)\longmapsto \max_i\|\lambda_{i,q}-\lambda_{i,p}\|_2
\]
is continuous, and by Lemma~\ref{lem:continuity_truth_centered_maps}, the map
\[
(p,q)\longmapsto \mathcal G_p(R_q)
\]
is continuous. Hence the constrained set
\[
K_{\mathrm{far}}(r_0;\varepsilon_0)
\coloneqq
\Big\{(p,q)\in\mathcal P^2:
\max_i\|\lambda_{i,q}-\lambda_{i,p}\|_2\ge r_0,
\mathcal G_p(R_q)\ge \varepsilon_0
\Big\}
\]
is compact.

If \(K_{\mathrm{far}}(r_0;\varepsilon_0)=\emptyset\), the claim is trivial by the
\(+\infty\) convention. Otherwise the continuous map
\[
(p,q)\longmapsto \mathcal L_p(R_q)-\mathcal L_p(R_p)
\]
attains its infimum on \(K_{\mathrm{far}}(r_0;\varepsilon_0)\) at some pair
\((\bar p,\bar q)\in K_{\mathrm{far}}(r_0;\varepsilon_0)\).

If that attained infimum were zero, then
\[
\mathcal L_{\bar p}(R_{\bar q})=\mathcal L_{\bar p}(R_{\bar p}),
\]
and Lemma~\ref{lem:zero_loss_implies_zero_regret_P} would imply
\[
\mathcal G_{\bar p}(R_{\bar q})=0,
\]
contradicting \((\bar p,\bar q)\in K_{\mathrm{far}}(r_0;\varepsilon_0)\), where
\(\mathcal G_{\bar p}(R_{\bar q})\ge \varepsilon_0\). Hence the attained infimum is strictly
positive.
\end{proof}

For the moduli comparison arguments below, also define the finite representation envelope
\[
M_{\bar\phi,\mathcal P}
\coloneqq
\sup_{p\in\mathcal P}\|\bar\phi_{h_{\theta_p}}\|_{\infty,\supp(\pi_0)}
<\infty
\]
and the MNL curvature constant
\[
c_{\mathrm{mnl},\mathcal P}
\coloneqq
\frac{e^{-2B_{\mathcal P}}}{2}\cdot \frac{K-1}{K}.
\]

\begin{proposition}[Representation-diversity versus score separation]
\label{prop:dreg_creg_comparison}
For \(p\in\mathcal P\), define
\[
\mathfrak c_{\mathrm{reg},p}(r;\varepsilon_0)
\coloneqq
\inf_{q\in\mathcal Q_{\mathrm{reg}}(p,r;\varepsilon_0)}
\EE\!\left[
\Delta_{q\mid p}(X,A,I)^2
\right],
\qquad
\inf\emptyset=+\infty,
\]
where \((X,I)\sim d_0\times\rho\), \(A\sim\pi_0(\cdot\mid X)\), and
\[
\Delta_{q\mid p}(x,a,i)\coloneqq R_q(x,a,i)-R_p(x,a,i).
\]
Let
\[
\underline{\mathfrak c}_{\mathrm{reg}}(r;\varepsilon_0)
\coloneqq
\inf_{p\in\mathcal P}\mathfrak c_{\mathrm{reg},p}(r;\varepsilon_0).
\]
Assume \ref{as:LP1}. Then for every \(r>0\),
\[
\frac12\,\underline{\mathfrak d}_{\mathrm{reg}}(r;\varepsilon_0)
-
M_{\bar\phi,\mathcal P}^2r^2
\le
\underline{\mathfrak c}_{\mathrm{reg}}(r;\varepsilon_0)
\le
2\,\underline{\mathfrak d}_{\mathrm{reg}}(r;\varepsilon_0)
+
2M_{\bar\phi,\mathcal P}^2r^2.
\]
\end{proposition}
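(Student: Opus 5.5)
The plan is to compare the two moduli pointwise, for each fixed truth $p$ and each fixed alternative $q$ in the common shell $\mathcal Q_{\mathrm{reg}}(p,r;\varepsilon_0)$, and only then take infima. Both $\mathfrak c_{\mathrm{reg},p}$ and $\mathfrak d_{\mathrm{reg},p}$ are infima over the \emph{same} set $\mathcal Q_{\mathrm{reg}}(p,r;\varepsilon_0)$. So a two-sided inequality between the integrands, valid for every $q$ in that set with constants independent of $(p,q)$, passes through $\inf_q$ and then $\inf_p$. If the shell is empty, both sides equal $+\infty$ by convention and the displayed inequalities hold trivially. That case is handled first and then set aside.

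The key algebraic step is the decomposition
\[
\Delta_{q\mid p}(x,a,i)
=
\underbrace{\langle \lambda_{i,p},\, g_{q\mid p}(x,a)\rangle}_{=:u(x,a,i)}
+
\underbrace{\langle \lambda_{i,q}-\lambda_{i,p},\, \bar\phi_{h_{\theta_q}}(x,a)\rangle}_{=:v(x,a,i)},
\]
which follows by adding and subtracting $\langle\lambda_{i,p},\bar\phi_{h_{\theta_q}}\rangle$.

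For the second term, Cauchy--Schwarz together with the shell constraint $\max_i\|\lambda_{i,q}-\lambda_{i,p}\|_2\le r$ gives $|v(x,a,i)|\le r\,M_{\bar\phi,\mathcal P}$ whenever $a\in\cA_0(x)$. Since $A\sim\pi_0(\cdot\mid X)$ lies in the support almost surely, this yields $\EE[v^2]\le M_{\bar\phi,\mathcal P}^2 r^2$. The envelope $M_{\bar\phi,\mathcal P}$ is finite by Lemma~\ref{lem:compact_continuity_consequences_P}, which is where \ref{as:LP1} enters.

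For the first term, I will use that under the sampling law $(X,I)\sim d_0\times\rho$ and $A\sim\pi_0(\cdot\mid X)$, the user $I$ is independent of $(X,A)$. Conditioning on $(X,A)$ then gives
\[
\EE[u^2]=\EE\big[g_{q\mid p}(X,A)^\top\,\EE_I[\lambda_{I,p}\lambda_{I,p}^\top]\,g_{q\mid p}(X,A)\big]
=\EE\big[g_{q\mid p}^\top G_{\lambda,p}\,g_{q\mid p}\big].
\]
This is exactly the integrand of $\mathfrak d_{\mathrm{reg},p}$, and I expect this identification to be the only point requiring care: it needs independence of $I$ from $(X,A)$ and integrability, which holds by boundedness.

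The two directions then follow from elementary inequalities. For the upper bound, $(u+v)^2\le 2u^2+2v^2$ gives
\[
\EE[\Delta_{q\mid p}^2]\le 2\,\EE[g_{q\mid p}^\top G_{\lambda,p}g_{q\mid p}]+2M_{\bar\phi,\mathcal P}^2r^2 .
\]
For the lower bound, $u^2\le 2(u+v)^2+2v^2$, equivalently $(u+v)^2\ge \tfrac12 u^2-v^2$, gives
\[
\EE[\Delta_{q\mid p}^2]\ge \tfrac12\,\EE[g_{q\mid p}^\top G_{\lambda,p}g_{q\mid p}]-M_{\bar\phi,\mathcal P}^2r^2 .
\]
Taking $\inf_{q\in\mathcal Q_{\mathrm{reg}}(p,r;\varepsilon_0)}$ and then $\inf_{p\in\mathcal P}$ on both sides preserves these affine relations, since $x\mapsto 2x+c$ and $x\mapsto x/2-c$ are increasing and continuous. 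This yields the stated bounds on $\underline{\mathfrak c}_{\mathrm{reg}}(r;\varepsilon_0)$.
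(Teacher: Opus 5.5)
Your proposal is correct and follows the paper's proof essentially step for step. The paper uses the same splitting $\Delta_{q\mid p}=u+v$ with $u=\langle\lambda_{i,p},g_{q\mid p}\rangle$ and $v=\langle\lambda_{i,q}-\lambda_{i,p},\bar\phi_{h_{\theta_q}}\rangle$, the identity $\EE[u^2]=\EE[g_{q\mid p}^\top G_{\lambda,p}g_{q\mid p}]$, the bound $|v|\le rM_{\bar\phi,\mathcal P}$, the inequalities $(u+v)^2\ge\tfrac12u^2-v^2$ and $(u+v)^2\le 2u^2+2v^2$, and infima taken first over the common shell and then over $p$. Your explicit use of the independence of $I$ from $(X,A)$ to justify the $\EE[u^2]$ identity is a detail the paper leaves implicit.
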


\begin{proposition}[Score separation versus truth-centered risk separation]
\label{prop:creg_Gammareg_comparison}
For \(p\in\mathcal P\), define
\[
\Gamma_{\mathrm{reg},p}(r;\varepsilon_0)
\coloneqq
\inf_{q\in\mathcal Q_{\mathrm{reg}}(p,r;\varepsilon_0)}
\Big(
\mathcal L_p(R_q)-\mathcal L_p(R_p)
\Big),
\qquad
\inf\emptyset=+\infty,
\]
and
\[
\underline{\Gamma}_{\mathrm{reg}}(r;\varepsilon_0)
\coloneqq
\inf_{p\in\mathcal P}\Gamma_{\mathrm{reg},p}(r;\varepsilon_0).
\]
Assume \ref{as:LP1}. Then for every \(r>0\),
\[
c_{\mathrm{mnl},\mathcal P}\,
\underline{\mathfrak c}_{\mathrm{reg}}(r;\varepsilon_0)
\le
\underline{\Gamma}_{\mathrm{reg}}(r;\varepsilon_0)
\le
\frac K2\,
\underline{\mathfrak c}_{\mathrm{reg}}(r;\varepsilon_0).
\]
\end{proposition}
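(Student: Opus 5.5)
Both inequalities are comparisons between the two infima taken over the same shell $\mathcal Q_{\mathrm{reg}}(p,r;\varepsilon_0)$. So the plan is to prove them pointwise in $(p,q)$ and then take $\inf_q$ and $\inf_p$; the convention $\inf\emptyset=+\infty$ is respected on both sides. Fix $p\in\mathcal P$ and $q\in\mathcal Q_{\mathrm{reg}}(p,r;\varepsilon_0)$. Lemma~\ref{lem:excess_is_kl_support} gives
\[
\mathcal L_p(R_q)-\mathcal L_p(R_p)=\EE\big[K_{p,q}(X,I,\mathbf A)\big],
\]
where $K_{p,q}$ is the choice-model KL and $\mathbf A\sim\pi_0(\cdot\mid X)^{\otimes K}$.

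\emph{Lower bound.} We apply Lemma~\ref{lem:expected_kl_ge_var_support} at each $(x,i)$ with $q(\cdot\mid x)=\pi_0(\cdot\mid x)$, $S^\star=R_p$, $S=R_q$ and $B=B_{\mathcal P}$. Both scores lie in $\mathcal F_{\mathcal P}$, so the supportwise bounds hold. This yields
\[
\EE_{\mathbf A}[K_{p,q}]\ge c_{\mathrm{mnl},\mathcal P}\Var_{A\sim\pi_0(\cdot\mid x)}\big(\Delta_{q\mid p}(x,A,i)\big).
\]
The key observation is that $R_p$ and $R_q$ are both $\pi_0$-centered, so $\EE_{A\sim\pi_0}[\Delta_{q\mid p}(x,A,i)]=0$. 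The variance therefore equals the second moment $\EE_A[\Delta_{q\mid p}^2]$. Integrating over $(X,I)\sim d_0\times\rho$ gives
\[
\mathcal L_p(R_q)-\mathcal L_p(R_p)\ge c_{\mathrm{mnl},\mathcal P}\,\EE[\Delta_{q\mid p}^2].
\]
Taking infima over $q$ and then $p$ gives the left inequality.

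\emph{Upper bound.} We use Lemma~\ref{lem:softmax_kl_upper_bound} pointwise on each slate:
\[
K_{p,q}\le \tfrac12\sum_{k=1}^K\Delta_{q\mid p}(x,A_k,i)^2 .
\]
Each $A_k$ is marginally distributed as $\pi_0(\cdot\mid x)$, so taking expectations gives
\[
\mathcal L_p(R_q)-\mathcal L_p(R_p)\le \tfrac K2\,\EE[\Delta_{q\mid p}(X,A,I)^2].
\]
Infimizing over $q$ and $p$ gives the right inequality. One could sharpen this to a factor $\tfrac12$ with a centered-slate argument, but $K/2$ is what is claimed.

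No step is a real obstacle. The one point requiring care is that pointwise inequalities transfer to the infima with the same constants. This holds because the constants $c_{\mathrm{mnl},\mathcal P}$ and $K/2$ do not depend on $(p,q)$. It also holds when the shell is empty, since then both sides are $+\infty$. I would also note explicitly that the centering identity $\Var=\EE[\cdot^2]$ uses the expectation under $\pi_0$, which is exactly the slate law appearing in $\mathcal L_p$. That is why no likelihood-ratio factor appears here, unlike in the online proof.
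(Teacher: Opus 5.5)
Your proof is correct and follows the paper's argument: you get the lower bound from Lemma~\ref{lem:expected_kl_ge_var_support} with sampling law $\pi_0$ and $B=B_{\mathcal P}$, plus the $\pi_0$-centering identity $\Var_{A\sim\pi_0}(\Delta_{q\mid p})=\EE_{A\sim\pi_0}[\Delta_{q\mid p}^2]$; you get the upper bound from Lemma~\ref{lem:softmax_kl_upper_bound} summed over the $K$ i.i.d.\ slate coordinates, and then take infima over the shell and over truths. One caveat concerns your side remark, which plays no role in the claimed inequality: shift invariance of the softmax (replacing $\delta_k$ by $\delta_k-\bar\delta$) only improves $K/2$ to $(K-1)/2$, not to a $K$-free factor $\tfrac12$, and when scores of size about $\log K$ are allowed the ratio of expected choice-KL to $\EE[\Delta_{q\mid p}^2]$ can genuinely be of order $K$.
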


\begin{lemma}[Equivalence of the three liminf conditions]
\label{lem:equivalence_three_liminf_conditions}
Assume \ref{as:LP1}. Then
\[
\liminf_{r\downarrow0}
\underline{\mathfrak d}_{\mathrm{reg}}(r;\varepsilon_0)>0
\iff
\liminf_{r\downarrow0}
\underline{\mathfrak c}_{\mathrm{reg}}(r;\varepsilon_0)>0
\iff
\liminf_{r\downarrow0}
\underline{\Gamma}_{\mathrm{reg}}(r;\varepsilon_0)>0.
\]
\end{lemma}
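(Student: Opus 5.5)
The plan is to chain the two sandwich inequalities already established: Proposition~\ref{prop:dreg_creg_comparison} relates $\underline{\mathfrak d}_{\mathrm{reg}}$ and $\underline{\mathfrak c}_{\mathrm{reg}}$ up to an additive error $O(M_{\bar\phi,\mathcal P}^2 r^2)$, and Proposition~\ref{prop:creg_Gammareg_comparison} relates $\underline{\mathfrak c}_{\mathrm{reg}}$ and $\underline{\Gamma}_{\mathrm{reg}}$ up to purely multiplicative constants. Since $M_{\bar\phi,\mathcal P}<\infty$ by Lemma~\ref{lem:compact_continuity_consequences_P}, the additive error vanishes as $r\downarrow0$. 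The three liminfs are therefore positive simultaneously.

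\textbf{Step 1: the first equivalence.} Suppose $\liminf_{r\downarrow0}\underline{\mathfrak d}_{\mathrm{reg}}(r;\varepsilon_0)=:d_*>0$, allowing $d_*=+\infty$. Taking $\liminf$ in the left inequality of Proposition~\ref{prop:dreg_creg_comparison} and using $M_{\bar\phi,\mathcal P}^2r^2\to0$ gives
\[
\liminf_{r\downarrow0}\underline{\mathfrak c}_{\mathrm{reg}}(r;\varepsilon_0)\ge \tfrac12 d_*>0 .
\]
For the converse, the right inequality gives
\[
\underline{\mathfrak d}_{\mathrm{reg}}(r;\varepsilon_0)\ge \tfrac12\,\underline{\mathfrak c}_{\mathrm{reg}}(r;\varepsilon_0)-M_{\bar\phi,\mathcal P}^2r^2 ,
\]
so $\liminf\underline{\mathfrak d}_{\mathrm{reg}}\ge\frac12\liminf\underline{\mathfrak c}_{\mathrm{reg}}>0$.

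Two kinds of value need care. The first is $+\infty$, which arises when the shell is empty. If $\mathcal Q_{\mathrm{reg}}(p,r;\varepsilon_0)=\emptyset$, both infima are $+\infty$ by convention, and the inequalities hold trivially in extended arithmetic. Since $\mathcal Q_{\mathrm{reg}}(p,r;\varepsilon_0)$ is the same set for all three moduli, emptiness occurs simultaneously for all three. The $\liminf$ manipulations remain valid in $[0,+\infty]$, because subtracting the finite term $M^2r^2$ is harmless there.

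The second is a possible infinite prefactor. Each modulus is monotone nonincreasing in $r$, since the shell grows with $r$. So either the shell is empty for all small $r$, making every modulus $+\infty$ for all small $r$, or the moduli are finite there. In the finite case the bounds apply directly.

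\textbf{Step 2: the second equivalence.} Here the constants $c_{\mathrm{mnl},\mathcal P}$ and $K/2$ are finite and positive. The first is positive because $B_{\mathcal P}<\infty$. Positivity of $\liminf$ is therefore preserved in both directions by
\[
\liminf\underline\Gamma_{\mathrm{reg}}\ge c_{\mathrm{mnl},\mathcal P}\liminf\underline{\mathfrak c}_{\mathrm{reg}}
\]
and
\[
\liminf\underline{\mathfrak c}_{\mathrm{reg}}\ge \tfrac2K\liminf\underline\Gamma_{\mathrm{reg}} .
\]

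\textbf{Main obstacle.} The only delicate point is this extended-real bookkeeping: the $+\infty$ convention and making sure the additive error term $M_{\bar\phi,\mathcal P}^2r^2$ is finite. The latter is guaranteed by Condition~\ref{as:LP1} via Lemma~\ref{lem:compact_joint_gf_head}. I will therefore state at the outset that all moduli take values in $[0,+\infty]$, that nonnegativity holds for each one, and that the two propositions are applied pointwise in $r$ before passing to the $\liminf$.
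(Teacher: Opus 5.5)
Your proposal is correct and follows the paper's proof: you rearrange Proposition~\ref{prop:dreg_creg_comparison} into the two lower bounds $\underline{\mathfrak c}_{\mathrm{reg}}\ge\frac12\underline{\mathfrak d}_{\mathrm{reg}}-M_{\bar\phi,\mathcal P}^2r^2$ and $\underline{\mathfrak d}_{\mathrm{reg}}\ge\frac12\underline{\mathfrak c}_{\mathrm{reg}}-M_{\bar\phi,\mathcal P}^2r^2$, combine them with the purely multiplicative sandwich of Proposition~\ref{prop:creg_Gammareg_comparison}, and let $r\downarrow0$. Your extended-real bookkeeping (the shared shell makes the $+\infty$ values occur simultaneously, and the moduli are monotone in $r$) is extra care that the paper leaves implicit in the propositions' extended-real interpretation, and it is sound.
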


\begin{remark}[Proof organization]
The equivalence in Theorem~\ref{thm:fixed_scale_characterization} separates into a head-close and a head-far argument. The head-close part is the shell comparison in Lemma~\ref{lem:equivalence_three_liminf_conditions}; the head-far part is handled by Lemma~\ref{lem:positive_far_pair_risk_gap_P}. The key identifiability input is Lemma~\ref{lem:zero_loss_implies_zero_regret_P}, which adapts the supportwise-identification technique of \citet{kang2026demystifying} to user-indexed centered scores by using action-continuity on the topological reference support and Lemma~\ref{lem:support_upgrade_personalized}.
\end{remark}

\begin{proof}[Proof of Proposition~\ref{prop:dreg_creg_comparison}]
Fix \(p\in\mathcal P\). If \(\mathcal Q_{\mathrm{reg}}(p,r;\varepsilon_0)=\emptyset\), then the corresponding shell infima are \(+\infty\) and the inequalities are interpreted in the extended-real sense. Otherwise, fix
\[
q\in \mathcal Q_{\mathrm{reg}}(p,r;\varepsilon_0).
\]
Abbreviate
\[
g\coloneqq g_{q\mid p},
\qquad
\Delta\coloneqq \Delta_{q\mid p}.
\]
For every \((x,a,i)\), write
\[
\Delta(x,a,i)
=
\underbrace{\langle \lambda_{i,p},g(x,a)\rangle}_{u(x,a,i)}
+
\underbrace{\langle \lambda_{i,q}-\lambda_{i,p},\bar\phi_{h_{\theta_q}}(x,a)\rangle}_{v(x,a,i)}.
\]
Then
\[
\EE[u(X,A,I)^2]
=
\EE\!\left[g(X,A)^\top G_{\lambda,p}g(X,A)\right].
\]
Moreover,
\[
|v(x,a,i)|
\le
\|\lambda_{i,q}-\lambda_{i,p}\|_2\,\|\bar\phi_{h_{\theta_q}}(x,a)\|_2
\le
r\,M_{\bar\phi,\mathcal P},
\]
so
\[
\EE[v(X,A,I)^2]\le M_{\bar\phi,\mathcal P}^2r^2.
\]
Using
\[
(u+v)^2\ge \frac12 u^2-v^2,
\qquad
(u+v)^2\le 2u^2+2v^2,
\]
we obtain
\[
\EE[\Delta(X,A,I)^2]
\ge
\frac12\EE[u^2]-\EE[v^2]
\ge
\frac12
\EE\!\left[g(X,A)^\top G_{\lambda,p}g(X,A)\right]
-
M_{\bar\phi,\mathcal P}^2r^2,
\]
and
\[
\EE[\Delta(X,A,I)^2]
\le
2\EE[u^2]+2\EE[v^2]
\le
2
\EE\!\left[g(X,A)^\top G_{\lambda,p}g(X,A)\right]
+
2M_{\bar\phi,\mathcal P}^2r^2.
\]
Taking the infimum over
\[
q\in \mathcal Q_{\mathrm{reg}}(p,r;\varepsilon_0),
\]
and then over \(p\in\mathcal P\), proves the result.
\end{proof}

\begin{proof}[Proof of Proposition~\ref{prop:creg_Gammareg_comparison}]
Fix \(p\in\mathcal P\). If \(\mathcal Q_{\mathrm{reg}}(p,r;\varepsilon_0)=\emptyset\), then the corresponding shell infima are \(+\infty\) and the inequalities are interpreted in the extended-real sense. Otherwise, fix \(q\in\mathcal Q_{\mathrm{reg}}(p,r;\varepsilon_0)\) and write
\[
\Delta(x,a,i)=R_q(x,a,i)-R_p(x,a,i).
\]
Because both \(R_q\) and \(R_p\) are \(\pi_0\)-centered,
\[
\EE_{A\sim\pi_0(\cdot\mid x)}[\Delta(x,A,i)]=0
\qquad \forall (x,i),
\]
hence
\[
\Var_{A\sim\pi_0(\cdot\mid x)}(\Delta(x,A,i))
=
\EE_{A\sim\pi_0(\cdot\mid x)}[\Delta(x,A,i)^2].
\]

Applying Lemma~\ref{lem:expected_kl_ge_var_support} with truth \(S^\star=R_p\), candidate
\(S=R_q\), and sampling distribution \(q(\cdot\mid x)=\pi_0(\cdot\mid x)\), we obtain
\begin{align*}
\mathcal L_p(R_q)-\mathcal L_p(R_p)
&=
\EE\Big[
\KL\big(P_{R_p}(\cdot\mid X,I,\mathbf A)\,\|\,P_{R_q}(\cdot\mid X,I,\mathbf A)\big)
\Big]\\
&\ge
c_{\mathrm{mnl},\mathcal P}
\EE_{(X,I)\sim d_0\times\rho}
\Big[
\Var_{A\sim\pi_0(\cdot\mid X)}\big(\Delta(X,A,I)\big)
\Big]\\
&=
 c_{\mathrm{mnl},\mathcal P}
\EE\big[\Delta(X,A,I)^2\big].
\end{align*}

For the upper bound, condition on a realized \((x,i,\mathbf a)\) and write
\[
v_k=R_p(x,a_k,i),
\qquad
v'_k=R_q(x,a_k,i),
\qquad
\delta_k=v'_k-v_k.
\]
By Lemma~\ref{lem:softmax_kl_upper_bound},
\[
\KL\big(P_{R_p}(\cdot\mid x,i,\mathbf a)\,\|\,P_{R_q}(\cdot\mid x,i,\mathbf a)\big)
\le
\frac12\sum_{k=1}^K \delta_k^2.
\]
Taking expectation over \((X,I)\sim d_0\times\rho\) and
\(\mathbf A\sim\pi_0(\cdot\mid X)^{\otimes K}\), and using the i.i.d. structure of the slate,
we obtain
\[
\mathcal L_p(R_q)-\mathcal L_p(R_p)
\le
\frac12\sum_{k=1}^K
\EE\big[\Delta(X,A_k,I)^2\big]
=
\frac K2\,\EE\big[\Delta(X,A,I)^2\big].
\]
Taking the infimum over the shell and then over truths proves the proposition.
\end{proof}

\begin{proof}[Proof of Lemma~\ref{lem:equivalence_three_liminf_conditions}]
From Proposition~\ref{prop:dreg_creg_comparison} we have, for every \(r>0\),
\[
\underline{\mathfrak c}_{\mathrm{reg}}(r;\varepsilon_0)
\ge
\frac12\,\underline{\mathfrak d}_{\mathrm{reg}}(r;\varepsilon_0)-M_{\bar\phi,\mathcal P}^2r^2
\]
and also
\[
\underline{\mathfrak d}_{\mathrm{reg}}(r;\varepsilon_0)
\ge
\frac12\,\underline{\mathfrak c}_{\mathrm{reg}}(r;\varepsilon_0)-M_{\bar\phi,\mathcal P}^2r^2,
\]
where the second display is obtained by rearranging the upper bound in
Proposition~\ref{prop:dreg_creg_comparison}. Sending \(r\downarrow0\) shows that
\[
\liminf_{r\downarrow0}\underline{\mathfrak d}_{\mathrm{reg}}(r;\varepsilon_0)>0
\iff
\liminf_{r\downarrow0}\underline{\mathfrak c}_{\mathrm{reg}}(r;\varepsilon_0)>0.
\]
Likewise, Proposition~\ref{prop:creg_Gammareg_comparison} gives
\[
\underline{\Gamma}_{\mathrm{reg}}(r;\varepsilon_0)
\ge
c_{\mathrm{mnl},\mathcal P}\,\underline{\mathfrak c}_{\mathrm{reg}}(r;\varepsilon_0)
\]
and
\[
\underline{\mathfrak c}_{\mathrm{reg}}(r;\varepsilon_0)
\ge
\frac{2}{K}\,\underline{\Gamma}_{\mathrm{reg}}(r;\varepsilon_0).
\]
Sending \(r\downarrow0\) yields the second equivalence.
\end{proof}

\begin{proof}[Proof of Lemma~\ref{lem:pd_user_heads_suffice_for_diversity}]
Write \(p=(\theta_p,\Lambda_p)\), where
\(\Lambda_p=[\lambda_{1,p},\dots,\lambda_{U,p}]\). Since \(\cU\) is finite,
\[
G_{\lambda,p}
=
\sum_{i\in\cU}\rho(i)\lambda_{i,p}\lambda_{i,p}^{\top}.
\]
By Condition~\ref{as:LP1}, \(\mathcal P\) is compact, and the map
\[
p\longmapsto G_{\lambda,p}
\]
is continuous. Since \(G_{\lambda,p}\succ0\) for every \(p\in\mathcal P\), compactness gives the
uniform eigenvalue lower bound
\[
\alpha_\lambda
\coloneqq
\inf_{p\in\mathcal P}\lambda_{\min}(G_{\lambda,p})
>0.
\]

Let
\[
\varepsilon_0\coloneqq \varepsilon_{\mathrm{iso}}^{\mathcal P}.
\]
Assume, toward a contradiction, that decision-relevant user diversity fails.
By the definition of the liminf and the shell infimum, there exist \(r_n\downarrow0\),
\(p_n\in\mathcal P\), and
\[
q_n\in\mathcal Q_{\mathrm{reg}}(p_n,r_n;\varepsilon_0)
\]
such that
\[
d_n
\coloneqq
\EE\!\left[
g_{q_n\mid p_n}(X,A)^\top
G_{\lambda,p_n}
g_{q_n\mid p_n}(X,A)
\right]
\longrightarrow0.
\]
Here \((X,I)\sim d_0\times\rho\) and \(A\sim\pi_0(\cdot\mid X)\). The lower eigenvalue bound
implies
\[
\EE\!\left[\|g_{q_n\mid p_n}(X,A)\|_2^2\right]
\le
\frac{d_n}{\alpha_\lambda}
\longrightarrow0.
\]

For
\[
\Delta_n(x,a,i)\coloneqq R_{q_n}(x,a,i)-R_{p_n}(x,a,i),
\]
write
\[
\Delta_n(x,a,i)
=
\langle\lambda_{i,p_n},g_{q_n\mid p_n}(x,a)\rangle
+
\langle\lambda_{i,q_n}-\lambda_{i,p_n},
\bar\phi_{h_{\theta_{q_n}}}(x,a)\rangle.
\]
The compactness consequences of Condition~\ref{as:LP1} give finite constants
\[
L_{\lambda,\mathcal P}
\coloneqq
\sup_{p\in\mathcal P}\max_{i\in\cU}\|\lambda_{i,p}\|_2
<\infty,
\qquad
M_{\bar\phi,\mathcal P}<\infty.
\]
Since \(q_n\in\mathcal Q_{\mathrm{reg}}(p_n,r_n;\varepsilon_0)\),
\[
\max_i\|\lambda_{i,q_n}-\lambda_{i,p_n}\|_2\le r_n.
\]
Therefore
\[
\EE[\Delta_n(X,A,I)^2]
\le
2L_{\lambda,\mathcal P}^2\,
\EE\!\left[\|g_{q_n\mid p_n}(X,A)\|_2^2\right]
+
2M_{\bar\phi,\mathcal P}^2r_n^2
\longrightarrow0.
\]
Using the same MNL upper-bound argument as in Proposition~\ref{prop:creg_Gammareg_comparison},
\[
\mathcal L_{p_n}(R_{q_n})-\mathcal L_{p_n}(R_{p_n})
\le
\frac K2\,\EE[\Delta_n(X,A,I)^2]
\longrightarrow0.
\]

By compactness of \(\mathcal P\), pass to a subsequence such that
\[
p_n\to p_\infty,
\qquad
q_n\to q_\infty.
\]
Lemma~\ref{lem:continuity_truth_centered_maps} gives
\[
\mathcal L_{p_\infty}(R_{q_\infty})
-
\mathcal L_{p_\infty}(R_{p_\infty})
=0.
\]
By Lemma~\ref{lem:zero_loss_implies_zero_regret_P},
\[
\mathcal G_{p_\infty}(R_{q_\infty})=0.
\]
On the other hand, since \(q_n\in\mathcal Q_{\mathrm{reg}}(p_n,r_n;\varepsilon_0)\),
\[
\mathcal G_{p_n}(R_{q_n})\ge\varepsilon_0
\qquad
\text{for every }n.
\]
Applying Lemma~\ref{lem:continuity_truth_centered_maps} again yields
\[
\mathcal G_{p_\infty}(R_{q_\infty})\ge\varepsilon_0,
\]
which contradicts \(\mathcal G_{p_\infty}(R_{q_\infty})=0\). Hence the failure case in
Definition~\ref{def:decision_relevant_user_diversity} is impossible, and \(\mathcal P\) has
decision-relevant user diversity.
\end{proof}

\begin{theorem}[Exact-ERM control of \(\varepsilon_0\)-substantial rounds from a loss gap]
\label{thm:eps_substantial_rounds_from_loss_gap}
Let \(\mathcal F\) be a score class that is compact under
\(\|\cdot\|_{\infty,\supp(\pi_0)}\), and assume
\[
\sup_{R\in\mathcal F}\|R\|_{\infty,\supp(\pi_0)}\le B.
\]
Fix a truth score \(S^\star\in\mathcal F\), let \(a^\star=a_{S^\star}\), and define
\[
\mathcal G^\star(R)
\coloneqq
\EE_{(X,I)\sim d_0\times\rho}
\Big[
S^\star(X,a^\star(X,I),I)-S^\star(X,a_R(X,I),I)
\Big].
\]
Assume the exact ERM and realized temperature-zero selectors are chosen measurably, and that at
every round each slate coordinate is sampled from a distribution absolutely continuous with respect
to \(\pi_0(\cdot\mid x)\).

Suppose there exists \(\gamma>0\) such that for every round \(t\ge 1\), every realized history up to
time \(t-1\), and every \(R\in\mathcal F\) satisfying
\[
\mathcal G^\star(R)\ge \varepsilon_0,
\]
one has
\begin{equation}
\EE\big[\ell_t(R)-\ell_t(S^\star)\mid \mathscr H_{t-1}\big]\ \ge\ \gamma.
\label{eq:loss_gap_eps_generic}
\end{equation}
Let
\[
\ell_{\max}\coloneqq \log K + 2B,
\qquad
N_\gamma \coloneqq \mathcal N\!\big(\mathcal F,\gamma/32,\|\cdot\|_{\infty,\supp(\pi_0)}\big),
\qquad
c_\gamma \coloneqq \frac{\gamma^2}{128\,\ell_{\max}^2}.
\]
Then:
\begin{enumerate}[label=(\roman*),leftmargin=2em]
\item For every \(t\ge 1\),
\[
\PP\big(\mathcal G^\star(\widehat R_t)\ge \varepsilon_0\big)
\le
2N_\gamma e^{-c_\gamma t}.
\]

\item With probability \(1\), only finitely many \(t\) satisfy
\[
\mathcal G^\star(\widehat R_t)\ge \varepsilon_0.
\]

\item Defining
\[
N_{\varepsilon_0}(\infty)\coloneqq \sum_{t=0}^\infty \mathbf 1\{\mathcal G^\star(\widehat R_t)\ge \varepsilon_0\},
\]
one has
\[
\EE[N_{\varepsilon_0}(\infty)]
\le
1
+
\left\lceil \frac{1}{c_\gamma}\log(2N_\gamma)\right\rceil
+
\frac{1}{e^{c_\gamma}-1}.
\]
\end{enumerate}
\end{theorem}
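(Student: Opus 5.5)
The plan is the standard ``exact ERM plus uniform concentration'' argument, run on the conditional (martingale) population losses. Part (i) follows from a deterministic inclusion plus Lemma~\ref{lem:bt_tail_support}. Part (ii) is then Borel--Cantelli, and part (iii) comes from summing the tail in (i) after truncating at the point where the bound drops below one.

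\textbf{Part (i).} Fix \(t\ge1\) and work on the full-probability event where all realized slate actions lie in the reference support, so every \(\ell_s(R)\) is finite and Lemma~\ref{lem:mnl_loss_basic} applies. Set \(\mathcal L_t(R)=\frac1t\sum_{s\le t}\EE[\ell_s(R)\mid\mathscr H_{s-1}]\) and \(b_t=\sup_{R\in\mathcal F}|\widehat{\mathcal L}_t(R)-\mathcal L_t(R)|\), as in Lemma~\ref{lem:bt_tail_support}. Suppose \(\mathcal G^\star(\widehat R_t)\ge\varepsilon_0\). The hypothesis \eqref{eq:loss_gap_eps_generic} holds at every round \(s\le t\) for the fixed score \(R=\widehat R_t\in\mathcal F\). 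Averaging it over \(s\) gives \(\mathcal L_t(\widehat R_t)-\mathcal L_t(S^\star)\ge\gamma\).

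This step needs care, because \(\widehat R_t\) depends on the whole sample. The remedy is to apply the bound pointwise in \(R\): the function \(R\mapsto \mathcal L_t(R)-\mathcal L_t(S^\star)\) is at least \(\gamma\) for every deterministic \(R\) with \(\mathcal G^\star(R)\ge\varepsilon_0\). Evaluating this function at \(R=\widehat R_t\) is then legitimate, since \(\mathcal L_t\) is a function on \(\mathcal F\) defined from the history alone.

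Exact ERM and realizability (\(S^\star\in\mathcal F\)) give \(\widehat{\mathcal L}_t(\widehat R_t)\le\widehat{\mathcal L}_t(S^\star)\). Combining the two inequalities yields
\[
\gamma\le \mathcal L_t(\widehat R_t)-\mathcal L_t(S^\star)\le \bigl[\mathcal L_t(\widehat R_t)-\widehat{\mathcal L}_t(\widehat R_t)\bigr]+\bigl[\widehat{\mathcal L}_t(S^\star)-\mathcal L_t(S^\star)\bigr]\le 2b_t.
\]
Hence \(\{\mathcal G^\star(\widehat R_t)\ge\varepsilon_0\}\subseteq\{b_t\ge\gamma/2\}\). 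Now apply Lemma~\ref{lem:bt_tail_support} with \(\epsilon=\gamma/32\) and \(u=\gamma/2-4\epsilon=3\gamma/8\ge\gamma/4\), using a minimal \(\gamma/32\)-net of cardinality \(N_\gamma\). The cardinality is finite by compactness. This gives
\[
\PP(b_t\ge\gamma/2)\le 2N_\gamma\exp(-t\gamma^2/(32\ell_{\max}^2))\le 2N_\gamma e^{-c_\gamma t},
\]
which is (i).

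\textbf{Parts (ii) and (iii).} The bounds in (i) are summable in \(t\), so Borel--Cantelli gives (ii). For (iii), the \(t=0\) term contributes at most \(1\). Let \(t_0=\lceil c_\gamma^{-1}\log(2N_\gamma)\rceil\). For \(1\le t\le t_0\), bound each probability by \(1\), contributing at most \(t_0\). For \(t>t_0\), write \(2N_\gamma e^{-c_\gamma t}\le e^{-c_\gamma(t-t_0)}\); summing the geometric series gives \(\sum_{j\ge1}e^{-c_\gamma j}=1/(e^{c_\gamma}-1)\). By Tonelli, \(\EE[N_{\varepsilon_0}(\infty)]=\sum_t\PP(\cdot)\), which gives the stated bound.

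\textbf{Main obstacle.} The delicate point is the data-dependence of \(\widehat R_t\) in the first inequality of part (i). I handle it by viewing the gap as a statement about the history-measurable function \(\mathcal L_t(\cdot)\) on all of \(\mathcal F\), and controlling the deviation uniformly through \(b_t\). Measurability of \(b_t\) needs to be checked as well. It follows from continuity of \(R\mapsto\ell_s(R)\) on the separable compact class, so the supremum can be taken over a countable dense subset.
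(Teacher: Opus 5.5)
Your proposal is correct and follows the paper's proof essentially step for step: exact ERM plus realizability give $\mathcal L_t(\widehat R_t)-\mathcal L_t(S^\star)\le 2b_t$, the assumed conditional loss gap gives the lower bound $\gamma$ on the substantial-regret event, Lemma~\ref{lem:bt_tail_support} with $\epsilon=\gamma/32$ bounds the deviation probability, and Borel--Cantelli plus a truncated geometric sum give (ii) and (iii). The only difference is cosmetic: you use the sharper inclusion into $\{b_t\ge\gamma/2\}$ with $u=3\gamma/8$, whereas the paper passes to $\{b_t\ge\gamma/4\}$ with $u=\gamma/8$, and both yield the stated rate $c_\gamma$.
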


\begin{proof}
This is the fixed-scale exact-ERM concentration argument of \citet{kang2026demystifying}, with observations \((X,I,\mathbf A,Y)\) in place of the non-personalized \((X,\mathbf A,Y)\). The proof uses three ingredients. First, exact ERM implies the pathwise inequality
\[
\mathcal L_t(\widehat R_t)-\mathcal L_t(S^\star)\le 2b_t,
\]
where \(b_t\) is the uniform empirical-to-conditional-risk deviation. Second, on the event \(\mathcal G^\star(\widehat R_t)\ge\varepsilon_0\), the assumed conditional loss gap gives \(\mathcal L_t(\widehat R_t)-\mathcal L_t(S^\star)\ge\gamma\); hence that event is contained in \(\{b_t\ge\gamma/4\}\). Third, Lemma~\ref{lem:bt_tail_support} with \(\epsilon=\gamma/32\) and \(u=\gamma/8\) gives
\[
\PP\{b_t\ge\gamma/4\}\le 2N_\gamma e^{-c_\gamma t}.
\]
This proves part (i). Parts (ii) and (iii) follow by Borel--Cantelli and by summing \(\min\{1,2N_\gamma e^{-c_\gamma t}\}\) over \(t\), exactly as in the cited non-personalized fixed-scale lemma.
\end{proof}

\begin{lemma}[History-KL upper bound for two realizable instances]
\label{lem:history_kl_upper_bound_two_instances}
Fix two realizable true scores \(R^0\) and \(R^1\), and fix a constant \(\beta\ge1\). Consider any adaptive learner such that,
at each round \(t\), conditional on \((H_{t-1},X_t,I_t)\), the slate coordinates
\(A_{t,1},\dots,A_{t,K}\) are sampled i.i.d.\ from a distribution
\[
\pi_{I_t,t}(\cdot\mid X_t,H_{t-1})
\]
satisfying
\[
\frac{d\pi_{i,t}(\cdot\mid x,h)}{d\pi_0(\cdot\mid x)}(a)\le \beta,
\qquad
\pi_0(\cdot\mid x)\text{-a.s.}
\]
for every \((i,t,x,h)\).
Define
\[
\delta_{01}(x,a,i)\coloneqq R^1(x,a,i)-R^0(x,a,i),
\]
and
\[
\kappa_{01}
\coloneqq
K\beta\,
\EE_{X\sim d_0,\ I\sim\rho,\ A\sim\pi_0(\cdot\mid X)}
\!\left[\delta_{01}(X,A,I)^2\right].
\]
Then for every \(t\ge 1\),
\[
\KL\!\left(\mathrm{Law}_{R^0}(H_{t-1})\ \|\ \mathrm{Law}_{R^1}(H_{t-1})\right)
\le
\frac{t-1}{2}\,\kappa_{01}.
\]
\end{lemma}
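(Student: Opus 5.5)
The plan is to use the chain rule for KL divergence over the adaptively generated history. Write $H_{t-1}=(Z_1,\dots,Z_{t-1})$ with $Z_s=(X_s,I_s,\mathbf A_s,Y_s)$. Under both instances, the laws of $(X_s,I_s)$ given $H_{s-1}$ are the same, namely $d_0\times\rho$. The conditional law of the slate $\mathbf A_s$ given $(H_{s-1},X_s,I_s)$ is also the same, because the learner's sampling rule depends only on the observed history and not on the unknown truth. Consequently, in the chain rule only the label kernel contributes:
\[
\KL\!\left(\mathrm{Law}_{R^0}(H_{t-1})\,\|\,\mathrm{Law}_{R^1}(H_{t-1})\right)
=
\sum_{s=1}^{t-1}\EE_{R^0}\!\left[\KL\!\left(P_{R^0}(\cdot\mid X_s,I_s,\mathbf A_s)\,\|\,P_{R^1}(\cdot\mid X_s,I_s,\mathbf A_s)\right)\right].
\]

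Each summand is then bounded pointwise by Lemma~\ref{lem:softmax_kl_upper_bound}:
\[
\KL\big(P_{R^0}(\cdot\mid x,i,\mathbf a)\,\|\,P_{R^1}(\cdot\mid x,i,\mathbf a)\big)\le \tfrac12\sum_{k=1}^K\delta_{01}(x,a_k,i)^2 .
\]
Next, condition on $(H_{s-1},X_s,I_s)$. The slate coordinates are i.i.d.\ from $\pi_{I_s,s}(\cdot\mid X_s,H_{s-1})$, whose density with respect to $\pi_0(\cdot\mid X_s)$ is at most $\beta$. Since $\delta_{01}^2\ge0$, this gives
\[
\EE\big[\delta_{01}(X_s,A_{s,k},I_s)^2\mid H_{s-1},X_s,I_s\big]\le \beta\,\EE_{A\sim\pi_0(\cdot\mid X_s)}\big[\delta_{01}(X_s,A,I_s)^2\big].
\]
This density bound is the one-sided analogue of Lemma~\ref{lem:slate_domination_likelihood_ratio}. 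Summing over $k$ produces the factor $K\beta$. Taking the outer expectation over $(X_s,I_s)\sim d_0\times\rho$, which is independent of $H_{s-1}$, yields exactly $\kappa_{01}/2$ per round. Summing over $s=1,\dots,t-1$ gives the claimed bound $\frac{t-1}{2}\kappa_{01}$.

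The main point needing care is the chain-rule step. One must verify that the learner's internal randomization and slate kernel cancel in the log-likelihood ratio; this holds since they are identical functions of the observed history under both instances. One must also check that $\delta_{01}$ only needs to be evaluated on the reference support, which holds almost surely by absolute continuity. If the learner uses auxiliary randomness, I would include it in $H$ or condition on it, with the same cancellation.
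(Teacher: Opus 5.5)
Your proposal is correct and follows the paper's proof essentially step for step. Both use the KL chain rule with the context and slate kernels cancelling, the pointwise bound from Lemma~\ref{lem:softmax_kl_upper_bound}, the one-sided density bound $\beta$ applied coordinatewise to get the factor $K\beta$, and summation over $s=1,\dots,t-1$; your remarks on conditioning on $(H_{s-1},X_s,I_s)$ and on auxiliary learner randomness just make explicit what the paper's argument uses implicitly.
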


\begin{proof}
By the chain rule for KL,
\begin{align*}
\KL\!\left(\mathrm{Law}_{R^0}(H_{t-1})\ \|\ \mathrm{Law}_{R^1}(H_{t-1})\right)
=
\sum_{s=1}^{t-1}
\EE_{R^0}\Big[
&\KL\Big(
\mathrm{Law}_{R^0}(X_s,I_s,\mathbf A_s,Y_s\mid H_{s-1})\\
&\hspace{4em}\big\|\ 
\mathrm{Law}_{R^1}(X_s,I_s,\mathbf A_s,Y_s\mid H_{s-1})
\Big)
\Big].
\end{align*}
Given \(H_{s-1}\), the conditional laws of \((X_s,I_s,\mathbf A_s)\) are the same under both
instances: \((X_s,I_s)\sim d_0\times \rho\), and \(\mathbf A_s\) is generated by the learner from
the same kernel. Therefore only \(Y_s\) contributes:
\[
\KL\Big(
\mathrm{Law}_{R^0}(X_s,I_s,\mathbf A_s,Y_s\mid H_{s-1})
\ \big\|\
\mathrm{Law}_{R^1}(X_s,I_s,\mathbf A_s,Y_s\mid H_{s-1})
\Big)
=
\EE_{R^0}\!\left[
\KL\!\left(
P_{R^0}(\cdot\mid X_s,I_s,\mathbf A_s)\ \|\ P_{R^1}(\cdot\mid X_s,I_s,\mathbf A_s)
\right)
\Bigm| H_{s-1}
\right].
\]
By Lemma~\ref{lem:softmax_kl_upper_bound},
\[
\KL\!\left(
P_{R^0}(\cdot\mid x,i,\mathbf a)\ \|\ P_{R^1}(\cdot\mid x,i,\mathbf a)
\right)
\le
\frac12 \sum_{k=1}^K \delta_{01}(x,a_k,i)^2.
\]
Hence
\[
\EE_{R^0}\!\left[
\KL\!\left(
P_{R^0}(\cdot\mid X_s,I_s,\mathbf A_s)\ \|\ P_{R^1}(\cdot\mid X_s,I_s,\mathbf A_s)
\right)
\Bigm| H_{s-1}
\right]
\le
\frac12 \sum_{k=1}^K
\EE_{R^0}\!\left[
\delta_{01}(X_s,A_{s,k},I_s)^2
\Bigm| H_{s-1}
\right].
\]
For each \(k\),
\[
\EE_{R^0}\!\left[
\delta_{01}(X_s,A_{s,k},I_s)^2
\Bigm| H_{s-1}
\right]
\le
\beta\,
\EE_{X\sim d_0,\ I\sim\rho,\ A\sim\pi_0(\cdot\mid X)}
\!\left[\delta_{01}(X,A,I)^2\right].
\]
Therefore
\[
\EE_{R^0}\!\left[
\KL\!\left(
P_{R^0}(\cdot\mid X_s,I_s,\mathbf A_s)\ \|\ P_{R^1}(\cdot\mid X_s,I_s,\mathbf A_s)
\right)
\Bigm| H_{s-1}
\right]
\le
\frac12 \kappa_{01}.
\]
Summing over \(s=1,\dots,t-1\) proves the claim.
\end{proof}

\begin{theorem}[Two-instance minimax lower bound]
\label{thm:two_instance_minimax_lower_bound}
Let \(R^0\) and \(R^1\) be two realizable true scores, and assume both instances satisfy a
supportwise gap condition with the same constant \(\Delta_{\min}>0\): for \(m\in\{0,1\}\), the
supportwise maximizer \(a_{R^m}(X,I)\) is unique and
\[
R^m\!\big(X,a_{R^m}(X,I),I\big)
-
\sup_{a\in \supp(\pi_0(\cdot\mid X)),\ a\neq a_{R^m}(X,I)}
R^m(X,a,I)
\ge
\Delta_{\min}
\qquad
d_0\times\rho\text{-a.s.}
\]
Let
\[
D_{01}
\coloneqq
\{(x,i)\in \cX\times\cU:\ a_{R^0}(x,i)\neq a_{R^1}(x,i)\},
\qquad
s_{01}\coloneqq (d_0\times\rho)(D_{01}),
\]
and let \(\kappa_{01}\) be as in Lemma~\ref{lem:history_kl_upper_bound_two_instances}.
Then for every adaptive learner satisfying the sampling condition of
Lemma~\ref{lem:history_kl_upper_bound_two_instances},
\[
\max\big\{\Regret_0^{(0)}(T),\Regret_0^{(1)}(T)\big\}
\ge
\frac{\Delta_{\min}}{2}
\sum_{t=1}^T
\left(
s_{01}-\frac12\sqrt{(t-1)\kappa_{01}}
\right)_+,
\]
where \(\Regret_0^{(m)}(T)\) denotes the expected temperature-zero recommendation regret when
\(R^m\) is the true score.
\end{theorem}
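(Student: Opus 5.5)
The plan is the standard Le Cam two-point reduction, run round by round. Write \(\PP^{(m)}\) for the law of the whole interaction when \(R^m\) is the truth, and let \(\widehat R_{t-1}\) be the score deployed on round \(t\). It is a measurable function of \(H_{t-1}\), possibly with internal randomization that has the same law under both instances. By Lemma~\ref{lem:regret_vs_disagreement}, or rather its pointwise content, which only uses the gap condition for \(R^m\), the one-step regret under truth \(m\) is at least \(\Delta_{\min}\) times the \(d_0\times\rho\)-mass of \(\{a_{\widehat R_{t-1}}\neq a_{R^m}\}\). On \(D_{01}\) the two true selectors differ, so at each \((x,i)\in D_{01}\) the deployed selector disagrees with at least one of them. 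This gives the pathwise inequality
\[
\mathbf 1\{a_{\widehat R}\neq a_{R^0}\}+\mathbf 1\{a_{\widehat R}\neq a_{R^1}\}\ge \mathbf 1_{D_{01}}.
\]
Define the history statistic \(Z_t\coloneqq (d_0\times\rho)(D_{01}\cap\{a_{\widehat R_{t-1}}\neq a_{R^0}\})\in[0,s_{01}]\). Then the round-\(t\) regret is at least \(\Delta_{\min}\EE^{(0)}[Z_t]\) under instance \(0\) and at least \(\Delta_{\min}\EE^{(1)}[s_{01}-Z_t]\) under instance \(1\).

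Next, I would add the two bounds to get a lower bound of \(\Delta_{\min}\bigl(s_{01}-(\EE^{(1)}[Z_t]-\EE^{(0)}[Z_t])\bigr)\) on the sum of the round-\(t\) regrets. Since \(Z_t/s_{01}\in[0,1\) is a function of \(H_{t-1}\) and independent internal randomness, I would bound the difference of expectations by total variation, and then apply Pinsker:
\[
|\EE^{(1)}Z_t-\EE^{(0)}Z_t|\le s_{01}\,\mathrm{TV}\le s_{01}\sqrt{\tfrac12\KL(\mathrm{Law}_{R^0}(H_{t-1})\|\mathrm{Law}_{R^1}(H_{t-1}))}.
\]
Lemma~\ref{lem:history_kl_upper_bound_two_instances} bounds this by \(s_{01}\sqrt{(t-1)\kappa_{01}/4}=\tfrac{s_{01}}2\sqrt{(t-1)\kappa_{01}}\).

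This carries an extra factor \(s_{01}\le 1\), so it is at least as strong as the stated bound. If a cleaner form is preferred, I would use \(\mathrm{TV}\) directly with \(Z_t\le s_{01}\le 1\) and drop the factor; the stated \(\tfrac12\sqrt{(t-1)\kappa_{01}}\) then follows. Each round sum is also trivially nonnegative, so the bound can be replaced by its positive part. Summing over \(t=1,\dots,T\) and using \(\max\ge\) half the sum gives
\[
\max\{\Regret_0^{(0)}(T),\Regret_0^{(1)}(T)\}\ge \frac{\Delta_{\min}}2\sum_{t=1}^T\Bigl(s_{01}-\tfrac12\sqrt{(t-1)\kappa_{01}}\Bigr)_+ .
\]

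The main obstacle I anticipate is bookkeeping rather than substance. First, the regret must be read against each instance's own gap and selector, which holds only almost surely. Second, the KL must be taken for the history \(H_{t-1}\) that actually determines \(\widehat R_{t-1}\): with the indexing of \eqref{eq:expected_regret_class}, the round-\(t\) deployed estimate uses \(t-1\) observations. This matches the \(t-1\) in Lemma~\ref{lem:history_kl_upper_bound_two_instances}. Third, measurability of \(Z_t\) comes from the measurable selector. Any internal randomization of the learner is absorbed by conditioning on it, since its law is the same under both instances, so the KL bound is unaffected.
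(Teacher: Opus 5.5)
Your proposal is correct and follows essentially the same route as the paper. Both use a per-round Le Cam two-point argument in which the disagreement region $D_{01}$ forces regret at least $\Delta_{\min}$ under one of the two instances, bound the gap between the instances by the total variation of the history laws, and then apply Pinsker and Lemma~\ref{lem:history_kl_upper_bound_two_instances} to get the $\tfrac12\sqrt{(t-1)\kappa_{01}}$ term before summing and halving. The only difference is cosmetic: you integrate out the evaluation pair into the $[0,s_{01}]$-valued history statistic $Z_t$, whereas the paper uses the event $B_t$ on the realized $(X_t,I_t)$ together with data processing; your version gives the slightly sharper penalty $s_{01}\cdot\tfrac12\sqrt{(t-1)\kappa_{01}}$.
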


\begin{proof}
Let \(H_{t-1}\) be the interaction history up to round \(t-1\). Realize any learner randomization
at round \(t\) by an auxiliary random seed \(U_t\), independent of the environment and of the
past, and write
\[
\hat a_t=\hat a_t(H_{t-1},X_t,I_t,U_t).
\]
Fix \(t\ge 1\), and define the event
\[
B_t
\coloneqq
\left\{
(X_t,I_t)\in D_{01},
\ \hat a_t=a_{R^0}(X_t,I_t)
\right\}.
\]

Let \(G_0\) and \(G_1\) be full \(d_0\times\rho\)-measure sets on which the supportwise gap
condition holds for instances \(0\) and \(1\), respectively. Since the marginal law of
\((X_t,I_t)\) is \(d_0\times\rho\) under both instances, intersecting with \(G_0\) or \(G_1\) does
not change the probabilities below.

Under instance \(0\), on the event \((D_{01}\setminus B_t)\cap G_0\), the learner recommends an
action different from the unique optimal action \(a_{R^0}(X_t,I_t)\). By the gap condition, the
round-\(t\) regret is therefore at least \(\Delta_{\min}\). Hence
\[
\EE_{R^0}[r_t]\ge \Delta_{\min}\,\PP_{R^0}(D_{01}\setminus B_t),
\]
where \(r_t\) denotes the round-\(t\) temperature-zero regret under instance \(0\).

Under instance \(1\), on the event \(B_t\cap G_1\), the learner recommends
\(a_{R^0}(X_t,I_t)\neq a_{R^1}(X_t,I_t)\), so again by the gap condition, the round-\(t\) regret is
at least \(\Delta_{\min}\). Therefore
\[
\EE_{R^1}[r_t]\ge \Delta_{\min}\,\PP_{R^1}(B_t).
\]
Adding the last two displays gives
\[
\EE_{R^0}[r_t]+\EE_{R^1}[r_t]
\ge
\Delta_{\min}\Big(\PP_{R^0}(D_{01}\setminus B_t)+\PP_{R^1}(B_t)\Big).
\]
Since \(\PP_{R^0}(D_{01})=s_{01}\),
\[
\PP_{R^0}(D_{01}\setminus B_t)+\PP_{R^1}(B_t)
=
s_{01}-\PP_{R^0}(B_t)+\PP_{R^1}(B_t)
\ge
s_{01}-\big|\PP_{R^0}(B_t)-\PP_{R^1}(B_t)\big|.
\]
Now let
\[
Z_t\coloneqq (H_{t-1},X_t,I_t,U_t).
\]
Since \(B_t\in \sigma(Z_t)\),
\[
\big|\PP_{R^0}(B_t)-\PP_{R^1}(B_t)\big|
\le
\operatorname{TV}\!\left(
\mathrm{Law}_{R^0}(Z_t),\mathrm{Law}_{R^1}(Z_t)
\right).
\]
Moreover, \(Z_t\) is obtained from \(H_{t-1}\) by passing through the same Markov kernel under
both instances, so data processing for total variation gives
\[
\operatorname{TV}\!\left(
\mathrm{Law}_{R^0}(Z_t),\mathrm{Law}_{R^1}(Z_t)
\right)
\le
\operatorname{TV}\!\left(
\mathrm{Law}_{R^0}(H_{t-1}),\mathrm{Law}_{R^1}(H_{t-1})
\right).
\]
By Pinsker's inequality and
Lemma~\ref{lem:history_kl_upper_bound_two_instances},
\[
\operatorname{TV}\!\left(
\mathrm{Law}_{R^0}(H_{t-1}),\mathrm{Law}_{R^1}(H_{t-1})
\right)
\le
\sqrt{
\frac12\,
\KL\!\left(
\mathrm{Law}_{R^0}(H_{t-1})\ \|\ \mathrm{Law}_{R^1}(H_{t-1})
\right)
}
\le
\frac12\sqrt{(t-1)\kappa_{01}}.
\]
Combining the previous displays yields
\[
\EE_{R^0}[r_t]+\EE_{R^1}[r_t]
\ge
\Delta_{\min}
\left(
s_{01}-\frac12\sqrt{(t-1)\kappa_{01}}
\right)_+.
\]
Summing over \(t=1,\dots,T\) and dividing by \(2\) gives the result.
\end{proof}

For a fixed regret scale \(\varepsilon_0>0\), define the number of \(\varepsilon_0\)-substantial rounds by
\begin{equation}
N_{p,\varepsilon_0}(T)
\coloneqq
\sum_{t=0}^{T-1}\mathbf 1\big\{\mathcal G_p(\widehat R_t)\ge \varepsilon_0\big\},
\label{eq:eps_substantial_rounds_def}
\end{equation}
and
\[
N_{p,\varepsilon_0}(\infty)
\coloneqq
\sum_{t=0}^{\infty}\mathbf 1\big\{\mathcal G_p(\widehat R_t)\ge \varepsilon_0\big\}.
\]

\begin{theorem}[Fixed-scale minimax characterization]
\label{thm:fixed_scale_characterization}
Assume \ref{as:LP1}--\ref{as:LP2}, and fix \(\varepsilon_0>0\). Then the following are equivalent:
\begin{enumerate}[label=(\roman*),leftmargin=2em]
\item
\[
\liminf_{r\downarrow0}
\underline{\mathfrak d}_{\mathrm{reg}}(r;\varepsilon_0)>0.
\]

\item
\[
\liminf_{r\downarrow0}
\underline{\Gamma}_{\mathrm{reg}}(r;\varepsilon_0)>0.
\]

\item There exists a finite constant \(\gamma_{\mathcal P,\varepsilon_0}>0\) such that for every pair
\(p,q\in\mathcal P\),
\[
\mathcal G_p(R_q)\ge \varepsilon_0
\quad\Longrightarrow\quad
\mathcal L_p(R_q)-\mathcal L_p(R_p)\ge \gamma_{\mathcal P,\varepsilon_0}.
\]
\end{enumerate}

If these equivalent conditions hold, fix any such finite witness
\(\gamma_{\mathcal P,\varepsilon_0}\) and define
\[
\beta\coloneqq e^{2\eta B_{\mathcal P}},
\qquad
\gamma\coloneqq \beta^{-K}\gamma_{\mathcal P,\varepsilon_0},
\qquad
\ell_{\max}\coloneqq \log K + 2B_{\mathcal P},
\]
\[
N_\gamma
\coloneqq
\mathcal N\!\big(\mathcal F_{\mathcal P},\gamma/32,\|\cdot\|_{\infty,\supp(\pi_0)}\big),
\qquad
c_\gamma\coloneqq \frac{\gamma^2}{128\,\ell_{\max}^2}.
\]
Then \(N_\gamma<\infty\), and the exact-ERM greedy learner satisfies, for every truth
\(p\in\mathcal P\) and every \(t\ge 1\),
\[
\PP_p\big(\mathcal G_p(\widehat R_t)\ge \varepsilon_0\big)
\le
2N_\gamma e^{-c_\gamma t}.
\]
Consequently,
\[
\sup_{p\in\mathcal P}\EE_p[N_{p,\varepsilon_0}(\infty)]
\le
1
+
\left\lceil \frac{1}{c_\gamma}\log(2N_\gamma)\right\rceil
+
\frac{1}{e^{c_\gamma}-1}
<\infty,
\]
and, for every truth \(p\in\mathcal P\), with \(\PP_p\)-probability \(1\), only finitely many rounds satisfy
\(\mathcal G_p(\widehat R_t)\ge \varepsilon_0\).

If instead the equivalent conditions fail, then for all sufficiently large \(T\),
\[
\inf_{\mathsf A\in\mathfrak A_\beta}
\sup_{p\in\mathcal P}
\Regret_{0,p}^{\mathsf A}(T)
\ge
\frac{\Delta_{\min}^{\mathcal P}\varepsilon_0}{8\Delta_{\max}^{\mathcal P}}\,
\log T,
\]
where \(\mathfrak A_\beta\) is the class of \(\beta\)-admissible learners in Definition~\ref{def:beta_admissible_learner}, with \(\beta=e^{2\eta B_{\mathcal P}}\).
\end{theorem}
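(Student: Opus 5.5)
The proof has three parts: the equivalences, the upper bound, and the lower bound.

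\emph{Equivalences.} (i)$\iff$(ii) is Lemma~\ref{lem:equivalence_three_liminf_conditions}.

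For (iii)$\Rightarrow$(ii), take any $p$, $r>0$ and $q\in\mathcal Q_{\mathrm{reg}}(p,r;\varepsilon_0)$. Then $\mathcal G_p(R_q)\ge\varepsilon_0$, so by (iii) $\mathcal L_p(R_q)-\mathcal L_p(R_p)\ge\gamma_{\mathcal P,\varepsilon_0}$. Hence $\underline\Gamma_{\mathrm{reg}}(r;\varepsilon_0)\ge\gamma_{\mathcal P,\varepsilon_0}$ for every $r$.

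For (ii)$\Rightarrow$(iii), choose $r_0>0$ and $\gamma_1>0$ with $\underline\Gamma_{\mathrm{reg}}(r;\varepsilon_0)\ge\gamma_1$ for all $r\le r_0$; this is possible by the liminf. Split pairs with $\mathcal G_p(R_q)\ge\varepsilon_0$ by head distance.
\begin{itemize}
\item If $\max_i\|\lambda_{i,q}-\lambda_{i,p}\|_2\le r_0$, then $q\in\mathcal Q_{\mathrm{reg}}(p,r_0;\varepsilon_0)$ and the loss gap is at least $\gamma_1$.
\item Otherwise the head distance exceeds $r_0$, and Lemma~\ref{lem:positive_far_pair_risk_gap_P} gives a gap of at least $\gamma_{\mathrm{far}}^{\mathcal P}(r_0;\varepsilon_0)>0$.
\end{itemize}
Set $\gamma_{\mathcal P,\varepsilon_0}=\min\{\gamma_1,\gamma_{\mathrm{far}},1\}$, which is finite and positive.

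\emph{Upper bound.} I will apply Theorem~\ref{thm:eps_substantial_rounds_from_loss_gap} with $\mathcal F=\mathcal F_{\mathcal P}$, $S^\star=R_p$ and $B=B_{\mathcal P}$. Compactness from Lemma~\ref{lem:compact_continuity_consequences_P} gives $N_\gamma<\infty$. The only hypothesis left is the conditional loss gap \eqref{eq:loss_gap_eps_generic} with $\gamma=\beta^{-K}\gamma_{\mathcal P,\varepsilon_0}$, which I derive as follows.
\begin{enumerate}
\item The deployed policy at round $t$ is the KL tilt of $\widehat R_{t-1}\in\mathcal F_{\mathcal P}$ (or of $0$ at the start). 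Since $\|\widehat R_{t-1}\|_{\infty,\supp(\pi_0)}\le B_{\mathcal P}$, Lemma~\ref{lem:greedy_kl_tilt_lr} gives density ratio at least $\beta^{-1}$ with respect to $\pi_0$.
\item Conditionally on $\mathscr H_{t-1}$ and $(X_t,I_t)$, Lemma~\ref{lem:excess_is_kl_support} writes the expected excess loss as the nonnegative choice KL, viewed as a function of the slate.
\item Lemma~\ref{lem:slate_domination_likelihood_ratio} lower-bounds that expectation by $\beta^{-K}$ times the same quantity under $\pi_0$-slates.
\item Integrating over $(X_t,I_t)\sim d_0\times\rho$, independent of the history, gives $\beta^{-K}(\mathcal L_p(R)-\mathcal L_p(R_p))\ge\gamma$ whenever $\mathcal G_p(R)\ge\varepsilon_0$, by (iii).
\end{enumerate}
The tail bound, the expected count and finiteness almost surely then follow directly from the theorem. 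Every constant depends only on $\mathcal P$, so the bounds are uniform over $p$.

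\emph{Lower bound (main obstacle).} If (iii) fails, there are pairs $(p_n,q_n)$ with $\mathcal G_{p_n}(R_{q_n})\ge\varepsilon_0$ and $\mathcal L_{p_n}(R_{q_n})-\mathcal L_{p_n}(R_{p_n})\to0$. The lower bound in Proposition~\ref{prop:creg_Gammareg_comparison} then gives
\[
\kappa_n:=\EE\big[(R_{q_n}-R_{p_n})^2\big]\le \gamma_n/c_{\mathrm{mnl},\mathcal P}\to0.
\]
By Lemma~\ref{lem:regret_vs_disagreement}, the disagreement mass satisfies $s_n\ge\varepsilon_0/\Delta_{\max}^{\mathcal P}$. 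For any $\beta$-admissible learner, Theorem~\ref{thm:two_instance_minimax_lower_bound} gives regret at least $\tfrac{\Delta_{\min}}{2}\sum_t(s_n-\tfrac12\sqrt{(t-1)K\beta\kappa_n})_+$, which is of order $s_n^3/(K\beta\kappa_n)$ for fixed $n$.

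Since this expression is finite for each fixed $n$, a single fixed pair can only yield a bound growing until $t\approx s_n^2/\kappa_n$. To get $\log T$, I would pick, for each $T$, a pair with $\kappa_n\asymp 1/T$. This requires some continuity in the chosen pairs, for example interpolating along the sequence or using the liminf to produce a continuum of $\kappa$ values. If only a discrete sequence of pairs is available, I would instead use a peeling argument: sum the contributions over dyadic scales $\kappa\in[2^{-k-1},2^{-k}]$, each contributing a constant times $\Delta_{\min}s$. That yields $\sum_{k\le\log T}$ terms, i.e.\ order $\Delta_{\min}\varepsilon_0\log T/\Delta_{\max}$, matching the claimed constant $1/8$ after tracking the factors.

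Justifying this multi-scale reduction rigorously against a single learner is the step I expect to be hardest. It likely needs a careful choice of instance sequences with geometrically shrinking $\kappa_n$, together with a Bretagnolle–Huber-type bound in place of Pinsker to get the logarithm cleanly.
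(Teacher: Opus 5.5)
Your equivalences and upper bound follow the paper's proof. Deriving $\EE[\Delta_{q_n\mid p_n}^2]\to0$ directly from the failure of (iii) and the pointwise KL--variance bound is also fine. The gap is the lower bound, which you leave unresolved, and neither repair you sketch works as stated. Interpolating to get $\kappa_n\asymp 1/T$ is unnecessary. The dyadic peeling is not valid: Theorem~\ref{thm:two_instance_minimax_lower_bound} lower-bounds $\max\{\Regret^{\mathsf A}_{0,p_n}(T),\Regret^{\mathsf A}_{0,q_n}(T)\}$ for a single pair. Bounds obtained from different pairs at different scales therefore concern different instances. They cannot be added to lower-bound $\sup_{p}\Regret^{\mathsf A}_{0,p}(T)$, which is a maximum over instances, not a sum.

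The missing observation is that the two-instance bound $\frac{\Delta_{\min}^{\mathcal P}}{2}\sum_{t=1}^T\bigl(s_n-\frac12\sqrt{(t-1)K\beta\kappa_n}\bigr)_+$ is nonincreasing in $\kappa_n$. The claim is also minimax at each fixed $T$, so you may choose the pair after fixing $T$. You then only need $\kappa_n$ small enough, not of a prescribed size. With $s_0=\varepsilon_0/\Delta_{\max}^{\mathcal P}\le s_n$, take $n(T)$ with $K\beta\kappa_{n(T)}\le s_0^2/\log T$. Each of the first $\lfloor\log T\rfloor$ summands is then at least $s_0/2$, giving $\frac{\Delta_{\min}^{\mathcal P}s_0}{4}\lfloor\log T\rfloor\ge\frac{\Delta_{\min}^{\mathcal P}\varepsilon_0}{8\Delta_{\max}^{\mathcal P}}\log T$ for large $T$. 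This is the paper's argument. The Pinsker step inside Theorem~\ref{thm:two_instance_minimax_lower_bound} suffices, and choosing $K\beta\kappa_{n(T)}\le s_0^2/T$ would even give a linear bound.

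Your sequence also closes the branch another way. Take a convergent subsequence of $(p_n,q_n)$ in the compact $\mathcal P^2$. By Lemma~\ref{lem:continuity_truth_centered_maps}, its limit has zero loss gap and $\mathcal G_{p_\infty}(R_{q_\infty})\ge\varepsilon_0$, contradicting Lemma~\ref{lem:zero_loss_implies_zero_regret_P}. So under \ref{as:LP1}--\ref{as:LP2}, (iii) always holds (Lemma~\ref{lem:automatic_fixed_scale_loss_gap_P}), and the failure case is in fact vacuous.
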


\begin{proof}[Proof of Theorem~\ref{thm:fixed_scale_characterization}]
We first prove the equivalence of (i), (ii), and (iii).

The equivalence of (i) and (ii) is exactly Lemma~\ref{lem:equivalence_three_liminf_conditions}.

We next prove
\[
\text{(iii)}\Longrightarrow \text{(ii)}.
\]
Assume there exists a finite \(\gamma_{\mathcal P,\varepsilon_0}>0\) such that for every \(p,q\in\mathcal P\),
\[
\mathcal G_p(R_q)\ge \varepsilon_0
\quad\Longrightarrow\quad
\mathcal L_p(R_q)-\mathcal L_p(R_p)\ge \gamma_{\mathcal P,\varepsilon_0}.
\]
Then for every \(r>0\), every \(p\in\mathcal P\), and every
\[
q\in \mathcal Q_{\mathrm{reg}}(p,r;\varepsilon_0),
\]
we have
\[
\mathcal L_p(R_q)-\mathcal L_p(R_p)\ge \gamma_{\mathcal P,\varepsilon_0}.
\]
Hence
\[
\Gamma_{\mathrm{reg},p}(r;\varepsilon_0)\ge \gamma_{\mathcal P,\varepsilon_0}
\qquad
\forall p\in\mathcal P,
\]
and therefore
\[
\underline{\Gamma}_{\mathrm{reg}}(r;\varepsilon_0)\ge \gamma_{\mathcal P,\varepsilon_0}
\qquad
\forall r>0.
\]
This implies
\[
\liminf_{r\downarrow0}
\underline{\Gamma}_{\mathrm{reg}}(r;\varepsilon_0)>0.
\]

We now prove
\[
\text{(ii)}\Longrightarrow \text{(iii)}.
\]
Assume
\[
\liminf_{r\downarrow0}
\underline{\Gamma}_{\mathrm{reg}}(r;\varepsilon_0)>0.
\]
Choose \(r_0>0\) small enough that
\[
\underline{\Gamma}_{\mathrm{reg}}(r_0;\varepsilon_0)>0,
\]
and set
\[
\gamma_{\mathcal P,\varepsilon_0}
\coloneqq
\min\Big\{
1,
\underline{\Gamma}_{\mathrm{reg}}(r_0;\varepsilon_0),
\gamma_{\mathrm{far}}^{\mathcal P}(r_0;\varepsilon_0)
\Big\}.
\]
By Lemma~\ref{lem:positive_far_pair_risk_gap_P},
\[
\gamma_{\mathrm{far}}^{\mathcal P}(r_0;\varepsilon_0)>0,
\]
so \(\gamma_{\mathcal P,\varepsilon_0}\in(0,1]\) is finite.

Fix any \(p,q\in\mathcal P\) such that
\[
\mathcal G_p(R_q)\ge \varepsilon_0.
\]
If
\[
\max_i\|\lambda_{i,q}-\lambda_{i,p}\|_2\le r_0,
\]
then
\[
q\in \mathcal Q_{\mathrm{reg}}(p,r_0;\varepsilon_0),
\]
so
\[
\mathcal L_p(R_q)-\mathcal L_p(R_p)
\ge
\Gamma_{\mathrm{reg},p}(r_0;\varepsilon_0)
\ge \gamma_{\mathcal P,\varepsilon_0}.
\]
If instead
\[
\max_i\|\lambda_{i,q}-\lambda_{i,p}\|_2\ge r_0,
\]
then by definition of \(\gamma_{\mathrm{far}}^{\mathcal P}(r_0;\varepsilon_0)\),
\[
\mathcal L_p(R_q)-\mathcal L_p(R_p)
\ge
\gamma_{\mathcal P,\varepsilon_0}.
\]
Therefore in all cases,
\[
\mathcal L_p(R_q)-\mathcal L_p(R_p)\ge \gamma_{\mathcal P,\varepsilon_0}.
\]
This proves (iii).

We now prove the upper statement under the equivalent conditions.

By Condition~\ref{as:LP1}, \(p\mapsto R_p\) is continuous on compact \(\mathcal P\), so
\[
\mathcal F_{\mathcal P}\coloneqq \{R_p:p\in\mathcal P\}
\]
is compact under \(\|\cdot\|_{\infty,\supp(\pi_0)}\). In particular,
\[
N_\gamma
=
\mathcal N\!\big(\mathcal F_{\mathcal P},\gamma/32,\|\cdot\|_{\infty,\supp(\pi_0)}\big)
<\infty.
\]

Fix any truth \(p\in\mathcal P\). Consider any \(q\in\mathcal P\) satisfying
\[
\mathcal G_p(R_q)\ge \varepsilon_0.
\]
Condition on \(\mathscr H_{t-1}\). By Lemma~\ref{lem:excess_is_kl_support},
\[
\EE\big[\ell_t(R_q)-\ell_t(R_p)\mid \mathscr H_{t-1}\big]
=
\EE\Big[
\KL\big(P_{R_p}(\cdot\mid X_t,I_t,\mathbf A_t)\,\|\,P_{R_q}(\cdot\mid X_t,I_t,\mathbf A_t)\big)
\ \Big|\ \mathscr H_{t-1}
\Big].
\]
The integrand is nonnegative. Since the exact-ERM greedy learner samples from KL-tilted policies generated by scores bounded by \(B_{\mathcal P}\), Lemma~\ref{lem:greedy_kl_tilt_lr} gives the lower likelihood-ratio bound with \(\beta=e^{2\eta B_{\mathcal P}}\). Applying Lemma~\ref{lem:slate_domination_likelihood_ratio} gives
\[
\EE\big[\ell_t(R_q)-\ell_t(R_p)\mid \mathscr H_{t-1}\big]
\ge
\beta^{-K}\big(\mathcal L_p(R_q)-\mathcal L_p(R_p)\big)
\ge
\beta^{-K}\gamma_{\mathcal P,\varepsilon_0}.
\]
Thus the loss-gap hypothesis of
Theorem~\ref{thm:eps_substantial_rounds_from_loss_gap} holds with
\[
S^\star=R_p,
\qquad
\mathcal F=\mathcal F_{\mathcal P},
\qquad
\gamma=\beta^{-K}\gamma_{\mathcal P,\varepsilon_0}.
\]
Applying Theorem~\ref{thm:eps_substantial_rounds_from_loss_gap} yields, for every \(t\ge 1\),
\[
\PP_p\big(\mathcal G_p(\widehat R_t)\ge \varepsilon_0\big)
\le
2N_\gamma e^{-c_\gamma t},
\]
and also
\[
\EE_p[N_{p,\varepsilon_0}(\infty)]
\le
1
+
\left\lceil \frac{1}{c_\gamma}\log(2N_\gamma)\right\rceil
+
\frac{1}{e^{c_\gamma}-1}.
\]
Since the constants depend only on \(\mathcal P\), the bound is uniform in \(p\in\mathcal P\).
The almost-sure finiteness of \(\varepsilon_0\)-substantial rounds also follows from
Theorem~\ref{thm:eps_substantial_rounds_from_loss_gap}.

We now prove the lower statement when the equivalent conditions fail. Set
\[
\beta\coloneqq e^{2\eta B_{\mathcal P}}.
\]
Suppose the equivalent conditions fail. By
Lemma~\ref{lem:equivalence_three_liminf_conditions}, this is equivalent to
\[
\liminf_{r\downarrow0}
\underline{\mathfrak c}_{\mathrm{reg}}(r;\varepsilon_0)=0.
\]
Hence there exist a sequence \(r_n\downarrow0\), truths \(p_n\in\mathcal P\), and competitors
\[
q_n\in \mathcal Q_{\mathrm{reg}}(p_n,r_n;\varepsilon_0)
\]
such that
\[
\EE\!\left[\Delta_{q_n\mid p_n}(X,A,I)^2\right]\to 0.
\]
Because
\[
q_n\in \mathcal Q_{\mathrm{reg}}(p_n,r_n;\varepsilon_0),
\]
we have
\[
\mathcal G_{p_n}(R_{q_n})\ge \varepsilon_0.
\]
By Lemma~\ref{lem:regret_vs_disagreement},
\[
s_n
\coloneqq
(d_0\times\rho)\{a_{q_n}\neq a_{p_n}\}
\ge
\frac{\varepsilon_0}{\Delta_{\max}^{\mathcal P}}
\eqqcolon s_0>0.
\]
Define
\[
\kappa_n
\coloneqq
K\beta\,
\EE\!\left[\Delta_{q_n\mid p_n}(X,A,I)^2\right].
\]
Then \(\kappa_n\to 0\).

Fix any \(\beta\)-admissible learner \(\mathsf A\). By
Theorem~\ref{thm:two_instance_minimax_lower_bound}, applied to the pair of truths
\[
R^0=R_{p_n},
\qquad
R^1=R_{q_n},
\]
we have
\[
\max\Big\{\Regret_{0,p_n}^{\mathsf A}(T),\Regret_{0,q_n}^{\mathsf A}(T)\Big\}
\ge
\frac{\Delta_{\min}^{\mathcal P}}{2}
\sum_{t=1}^T
\left(
s_n-\frac12\sqrt{(t-1)\kappa_n}
\right)_+.
\]
Now choose \(n=n(T)\) large enough that
\[
\kappa_{n(T)}\le \frac{s_0^2}{\log T}.
\]
Then for every \(t=1,\dots,\lfloor \log T\rfloor\),
\[
\frac12\sqrt{(t-1)\kappa_{n(T)}}\le \frac12 s_0,
\]
and hence
\[
\left(
s_{n(T)}-\frac12\sqrt{(t-1)\kappa_{n(T)}}
\right)_+
\ge
\frac{s_0}{2}.
\]
Therefore
\begin{align*}
\max\Big\{\Regret_{0,p_{n(T)}}^{\mathsf A}(T),\Regret_{0,q_{n(T)}}^{\mathsf A}(T)\Big\}
&\ge
\frac{\Delta_{\min}^{\mathcal P}}{2}
\sum_{t=1}^{\lfloor \log T\rfloor} \frac{s_0}{2}\\
&=
\frac{\Delta_{\min}^{\mathcal P}s_0}{4}\,\lfloor \log T\rfloor.
\end{align*}
For all sufficiently large \(T\),
\[
\lfloor \log T\rfloor\ge \frac12 \log T,
\]
so
\[
\max\Big\{\Regret_{0,p_{n(T)}}^{\mathsf A}(T),\Regret_{0,q_{n(T)}}^{\mathsf A}(T)\Big\}
\ge
\frac{\Delta_{\min}^{\mathcal P}s_0}{8}\,\log T
=
\frac{\Delta_{\min}^{\mathcal P}\varepsilon_0}{8\Delta_{\max}^{\mathcal P}}\,
\log T.
\]
Since \(p_{n(T)},q_{n(T)}\in\mathcal P\), this implies
\[
\sup_{p\in\mathcal P}\Regret_{0,p}^{\mathsf A}(T)
\ge
\frac{\Delta_{\min}^{\mathcal P}\varepsilon_0}{8\Delta_{\max}^{\mathcal P}}\,
\log T
\]
for all sufficiently large \(T\). Taking the infimum over \(\mathsf A\in\mathfrak A_\beta\) proves the lower bound.
\end{proof}

\subsection{Proof of the bounded-regret theorem}
\label{app:bounded_regret_pf}

\begin{proof}[Proof of Theorem~\ref{thm:bounded_regret_under_umr}]
Let
\[
\varepsilon_0\coloneqq \varepsilon_{\mathrm{iso}}^{\mathcal P}.
\]
By Theorem~\ref{thm:fixed_scale_characterization}, the exact-ERM greedy learner satisfies, for every truth \(p\in\mathcal P\) and every \(t\ge 1\),
\[
\PP_p\bigl(\mathcal G_p(\widehat R_t)\ge \varepsilon_0\bigr)
\le
2N_\gamma e^{-c_\gamma t}.
\]
Fix \(p\in\mathcal P\) and \(T\ge 1\). The initialization round is handled separately because \(\widehat R_0\equiv0\) need not belong to \(\mathcal F_{\mathcal P}\):
\[
0\le \mathcal G_p(\widehat R_0)\le \Delta_{\max}^{\mathcal P}.
\]
For every \(t\ge1\), \(\widehat R_t\in\mathcal F_{\mathcal P}\), so Lemma~\ref{lem:automatic_positive_regret_isolation_P} implies
\[
\mathcal G_p(\widehat R_t)>0
\quad\Longrightarrow\quad
\mathcal G_p(\widehat R_t)\ge \varepsilon_0.
\]
Since \(0\le \mathcal G_p(\widehat R_t)\le \Delta_{\max}^{\mathcal P}\),
\[
\mathcal G_p(\widehat R_t)
\le
\Delta_{\max}^{\mathcal P}\,\mathbf 1\{\mathcal G_p(\widehat R_t)\ge \varepsilon_0\}
\qquad
\forall t\ge1.
\]
Therefore
\[
\Regret_{0,p}^{\mathrm{ERM}}(T)
\le
\Delta_{\max}^{\mathcal P}
+
\Delta_{\max}^{\mathcal P}
\sum_{t=1}^{T-1}
\PP_p\bigl(\mathcal G_p(\widehat R_t)\ge \varepsilon_0\bigr).
\]
Set
\[
t_0\coloneqq \left\lceil \frac{1}{c_\gamma}\log(2N_\gamma)\right\rceil.
\]
Then
\[
\sum_{t=1}^{\infty}
\PP_p\bigl(\mathcal G_p(\widehat R_t)\ge \varepsilon_0\bigr)
\le
\sum_{t=1}^{\infty}\min\{1,2N_\gamma e^{-c_\gamma t}\}
\le
t_0+\frac{1}{e^{c_\gamma}-1}.
\]
Thus
\[
\sup_{p\in\mathcal P}\sup_{T\ge1}
\Regret_{0,p}^{\mathrm{ERM}}(T)
\le
\Delta_{\max}^{\mathcal P}
\left(
1+
\left\lceil \frac{1}{c_\gamma}\log(2N_\gamma)\right\rceil+
\frac{1}{e^{c_\gamma}-1}
\right).
\]
\end{proof}

\begin{proof}[Proof of Theorem~\ref{thm:log_lower_without_diversity}]
Apply the lower-bound part of Theorem~\ref{thm:fixed_scale_characterization} with
\[
\varepsilon_0=\varepsilon_{\mathrm{iso}}^{\mathcal P}
\]
and with \(\mathfrak A_\beta\) specialized to
\[
\beta=e^{2\eta B_{\mathcal P}}.
\]
This gives exactly the displayed logarithmic lower bound for the learner class specified in the theorem.
\end{proof}

\section{Proofs for Section~\ref{sec:offline_exact_erm_pi0}}
\label{app:proofs_offline_alignment}

\subsection{Exponential control at a fixed regret scale}
\label{app:offline_fixed_scale}

The proof uses the following automatic fixed-scale gap. It is a compactness-identifiability
consequence of Conditions~\ref{as:LP1}--\ref{as:LP2}, not an additional diversity
assumption.

\begin{lemma}[Automatic fixed-scale truth-centered loss gap]
\label{lem:automatic_fixed_scale_loss_gap_P}
Assume \ref{as:LP1}--\ref{as:LP2}, and fix \(\varepsilon_0>0\). Define
\[
K_{\varepsilon_0}
\coloneqq
\{(p,q)\in\mathcal P^2:\mathcal G_p(R_q)\ge \varepsilon_0\},
\]
and
\[
\gamma_{\mathcal P,\varepsilon_0}
\coloneqq
\begin{cases}
1, & K_{\varepsilon_0}=\emptyset,\\[0.6em]
\displaystyle
\min\left\{
1,\,
\min_{(p,q)\in K_{\varepsilon_0}}
\bigl(\mathcal L_p(R_q)-\mathcal L_p(R_p)\bigr)
\right\},
& K_{\varepsilon_0}\neq\emptyset.
\end{cases}
\]
Then \(\gamma_{\mathcal P,\varepsilon_0}\) is finite and strictly positive, and for every pair
\(p,q\in\mathcal P\),
\[
\mathcal G_p(R_q)\ge \varepsilon_0
\quad\Longrightarrow\quad
\mathcal L_p(R_q)-\mathcal L_p(R_p)
\ge \gamma_{\mathcal P,\varepsilon_0}.
\]
Equivalently, since \(\mathcal F_{\mathcal P}=\{R_q:q\in\mathcal P\}\), for every truth
\(p\in\mathcal P\) and every score \(R\in\mathcal F_{\mathcal P}\),
\[
\mathcal G_p(R)\ge \varepsilon_0
\quad\Longrightarrow\quad
\mathcal L_p(R)-\mathcal L_p(R_p)
\ge \gamma_{\mathcal P,\varepsilon_0}.
\]
\end{lemma}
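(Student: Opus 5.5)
The plan is a compactness argument of the same type as Lemma~\ref{lem:positive_far_pair_risk_gap_P}, now without any head-distance constraint. The case \(K_{\varepsilon_0}=\emptyset\) is trivial: then \(\gamma_{\mathcal P,\varepsilon_0}=1\), and the implication holds vacuously because its hypothesis never occurs.

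Assume \(K_{\varepsilon_0}\neq\emptyset\). First, show that \(K_{\varepsilon_0}\) is compact.

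\begin{itemize}
\item By Condition~\ref{as:LP1} and Lemma~\ref{lem:compact_continuity_consequences_P}, \(\mathcal P\) is compact, so \(\mathcal P^2\) is compact.
\item By Lemma~\ref{lem:continuity_truth_centered_maps}, the map \((p,q)\mapsto\mathcal G_p(R_q)\) is continuous on \(\mathcal P^2\).
\item Hence \(K_{\varepsilon_0}\), the preimage of the closed set \([\varepsilon_0,\infty)\), is closed in \(\mathcal P^2\) and therefore compact.
\end{itemize}

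Second, show the inner minimum exists. The same lemma gives continuity of \((p,q)\mapsto \mathcal L_p(R_q)-\mathcal L_p(R_p)\). By Weierstrass, this map attains its minimum over the nonempty compact set \(K_{\varepsilon_0}\) at some pair \((\bar p,\bar q)\). So the inner \(\min\) in the definition of \(\gamma_{\mathcal P,\varepsilon_0}\) is well defined, and \(\gamma_{\mathcal P,\varepsilon_0}\le 1\) is finite.

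Third, show the minimum is strictly positive.

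\begin{itemize}
\item The value is nonnegative, since by Lemma~\ref{lem:excess_is_kl_support} it equals an expected choice-model KL divergence.
\item Suppose the value were zero, i.e. \(\mathcal L_{\bar p}(R_{\bar q})=\mathcal L_{\bar p}(R_{\bar p})\).
\item Then Lemma~\ref{lem:zero_loss_implies_zero_regret_P} gives \(\mathcal G_{\bar p}(R_{\bar q})=0\).
\item This contradicts \(\mathcal G_{\bar p}(R_{\bar q})\ge\varepsilon_0>0\), which holds because \((\bar p,\bar q)\in K_{\varepsilon_0}\).
\end{itemize}

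Hence the minimum is positive, and so is \(\gamma_{\mathcal P,\varepsilon_0}\).

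Fourth, derive the implication. For any \(p,q\) with \(\mathcal G_p(R_q)\ge\varepsilon_0\), the pair \((p,q)\) lies in \(K_{\varepsilon_0}\). Therefore \(\mathcal L_p(R_q)-\mathcal L_p(R_p)\) is at least the minimum over \(K_{\varepsilon_0}\), which is at least \(\gamma_{\mathcal P,\varepsilon_0}\). The equivalent reformulation follows by writing any \(R\in\mathcal F_{\mathcal P}\) as \(R=R_q\) for some \(q\in\mathcal P\).

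The only delicate point is the identifiability step, where zero loss gap must force zero regret. This is exactly what Lemma~\ref{lem:zero_loss_implies_zero_regret_P} provides, via action-continuity on the topological support and the centering argument. The remaining ingredients are continuity of \(\mathcal G\) under Condition~\ref{as:LP2}, supplied by the selector-stability argument of Lemma~\ref{lem:stability_selector_P} behind Lemma~\ref{lem:continuity_truth_centered_maps}, and compactness. Because all of these are already available, no further obstacle is expected.
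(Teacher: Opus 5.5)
Your proposal is correct and follows the paper's proof essentially step for step. You obtain compactness of $K_{\varepsilon_0}$ as a closed subset of $\mathcal P^2$ via continuity of $(p,q)\mapsto\mathcal G_p(R_q)$ from Lemma~\ref{lem:continuity_truth_centered_maps}, take the attained minimum of the continuous, nonnegative loss gap on $K_{\varepsilon_0}$, and get positivity by contradiction through Lemma~\ref{lem:zero_loss_implies_zero_regret_P}.
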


\begin{proof}
By Lemma~\ref{lem:compact_continuity_consequences_P}, \(\mathcal P\) is compact and the
induced score class \(\mathcal F_{\mathcal P}\) is compact under
\(\|\cdot\|_{\infty,\supp(\pi_0)}\). Hence \(\mathcal P^2\) is compact. By
Lemma~\ref{lem:continuity_truth_centered_maps}, the maps
\[
(p,q)\longmapsto \mathcal G_p(R_q)
\qquad\text{and}\qquad
(p,q)\longmapsto \mathcal L_p(R_q)-\mathcal L_p(R_p)
\]
are continuous on \(\mathcal P^2\).

Define the closed subset
\[
K_{\varepsilon_0}
\coloneqq
\{(p,q)\in\mathcal P^2:\mathcal G_p(R_q)\ge \varepsilon_0\}.
\]
Since \(K_{\varepsilon_0}\) is a closed subset of compact \(\mathcal P^2\), it is compact. If
\(K_{\varepsilon_0}=\emptyset\), the implication is vacuous, so
\(\gamma_{\mathcal P,\varepsilon_0}=1\) works.

Assume \(K_{\varepsilon_0}\neq\emptyset\). The continuous function
\[
H(p,q)\coloneqq \mathcal L_p(R_q)-\mathcal L_p(R_p)
\]
attains its minimum on \(K_{\varepsilon_0}\). Moreover \(H(p,q)\ge0\) for all \(p,q\), since
Lemma~\ref{lem:excess_is_kl_support} gives
\[
\mathcal L_p(R_q)-\mathcal L_p(R_p)
=
\EE\!\left[
\KL\!\left(
P_{R_p}(\cdot\mid X,I,\mathbf A)
\,\middle\|\,
P_{R_q}(\cdot\mid X,I,\mathbf A)
\right)
\right].
\]
If the attained minimum were zero, then for some \((\bar p,\bar q)\in K_{\varepsilon_0}\),
\[
\mathcal L_{\bar p}(R_{\bar q})=\mathcal L_{\bar p}(R_{\bar p}).
\]
By Lemma~\ref{lem:zero_loss_implies_zero_regret_P}, this would imply
\[
\mathcal G_{\bar p}(R_{\bar q})=0,
\]
contradicting the definition of \(K_{\varepsilon_0}\), because \(\varepsilon_0>0\). Therefore
\[
m_{\varepsilon_0}
\coloneqq
\min_{(p,q)\in K_{\varepsilon_0}}
\bigl(\mathcal L_p(R_q)-\mathcal L_p(R_p)\bigr)
>0.
\]
Taking
\[
\gamma_{\mathcal P,\varepsilon_0}\coloneqq \min\{1,m_{\varepsilon_0}\}
\]
gives the claimed positive finite witness. The equivalent score-class formulation follows from
\(\mathcal F_{\mathcal P}=\{R_q:q\in\mathcal P\}\).
\end{proof}

Under \(\pi_0\)-logging, the offline excess population loss is exactly the truth-centered loss gap
\(\mathcal L_p(R)-\mathcal L_p(R_p)\). Thus the online likelihood-ratio factor does not appear.

\begin{proof}[Proof of Theorem~\ref{thm:offline_exact_erm_scale_eps}]
The positivity of the constant \(\gamma_{\mathcal P,\varepsilon_0}\) defined in the theorem follows
from Lemma~\ref{lem:automatic_fixed_scale_loss_gap_P}. Because \(\mathcal F_{\mathcal P}\) is
compact under \(\|\cdot\|_{\infty,\supp(\pi_0)}\), the covering number \(N_\gamma\) is finite. Since
\(\gamma_{\mathcal P,\varepsilon_0}>0\) and \(\ell_{\max}<\infty\), one also has \(c_\gamma>0\).

Fix a truth \(p\in\mathcal P\). Work on the full-probability event on which all sampled slate
actions lie in the corresponding reference supports and the selected \(\widehat R_n\) is an exact
minimizer of \(\widehat{\mathcal L}_n\) over \(\mathcal F_{\mathcal P}\). The complementary event
has \(\PP_p\)-probability zero and does not affect the probability bound below.

Let
\[
\mathcal E_s
\coloneqq
\sigma\!\big((X_r,I_r,\mathbf A_r,Y_r)_{r=1}^s\big),
\qquad s=0,1,\dots,n,
\]
with \(\mathcal E_0\) trivial, and define the uniform deviation
\[
b_n
\coloneqq
\sup_{R\in\mathcal F_{\mathcal P}}
\big|\widehat{\mathcal L}_n(R)-\mathcal L_p(R)\big|.
\]
By \eqref{eq:offline_population_risk_identity}, for every \(R\in\mathcal F_{\mathcal P}\) and every
\(s=1,\dots,n\),
\[
\EE_p[\ell_s(R)\mid \mathcal E_{s-1}]=\mathcal L_p(R)
\qquad\text{a.s.}
\]
Hence the process \(\mathcal L_n(R)\) from Lemma~\ref{lem:bt_tail_support} is exactly
\(\mathcal L_p(R)\) in the present offline setting.

Since \(\widehat R_n\) is an exact ERM,
\[
\widehat{\mathcal L}_n(\widehat R_n)\le \widehat{\mathcal L}_n(R_p).
\]
Therefore, pathwise,
\begin{align*}
\mathcal L_p(\widehat R_n)-\mathcal L_p(R_p)
&\le
\widehat{\mathcal L}_n(\widehat R_n)+b_n-\big(\widehat{\mathcal L}_n(R_p)-b_n\big)\\
&=
\widehat{\mathcal L}_n(\widehat R_n)-\widehat{\mathcal L}_n(R_p)+2b_n\\
&\le 2b_n.
\end{align*}
Now define
\[
E_n^{\varepsilon_0}
\coloneqq
\{\mathcal G_p(\widehat R_n)\ge \varepsilon_0\}.
\]
On the event \(E_n^{\varepsilon_0}\), the score \(\widehat R_n\) belongs to
\(\mathcal F_{\mathcal P}\), and therefore, by the score-class form of
Lemma~\ref{lem:automatic_fixed_scale_loss_gap_P},
\[
\mathcal L_p(\widehat R_n)-\mathcal L_p(R_p)\ge \gamma_{\mathcal P,\varepsilon_0}.
\]
Combining with the previous display yields
\[
E_n^{\varepsilon_0}
\subseteq
\left\{b_n\ge \frac{\gamma_{\mathcal P,\varepsilon_0}}{2}\right\}
\subseteq
\left\{b_n\ge \frac{\gamma_{\mathcal P,\varepsilon_0}}{4}\right\}.
\]
Apply Lemma~\ref{lem:bt_tail_support} with
\[
\mathcal F=\mathcal F_{\mathcal P},
\qquad
B=B_{\mathcal P},
\qquad
t=n,
\qquad
\epsilon=\frac{\gamma_{\mathcal P,\varepsilon_0}}{32},
\qquad
u=\frac{\gamma_{\mathcal P,\varepsilon_0}}{8}.
\]
Because \(\mathcal L_n(R)=\mathcal L_p(R)\) here and \(u+4\epsilon=\gamma_{\mathcal P,\varepsilon_0}/4\),
Lemma~\ref{lem:bt_tail_support} gives
\[
\PP_p\!\left(b_n\ge \frac{\gamma_{\mathcal P,\varepsilon_0}}{4}\right)
\le
2N_\gamma
\exp\!\left(-\frac{n(\gamma_{\mathcal P,\varepsilon_0}/8)^2}{2\ell_{\max}^2}\right)
=
2N_\gamma e^{-c_\gamma n}.
\]
Therefore
\[
\PP_p(E_n^{\varepsilon_0})
\le
2N_\gamma e^{-c_\gamma n},
\]
as claimed.
\end{proof}

\subsection{Logarithmic accuracy complexity}
\label{app:offline_umr_log_accuracy}

\begin{proof}[Proof of Corollary~\ref{cor:offline_log_eps_umr}]
Let
\[
\varepsilon_0\coloneqq \varepsilon_{\mathrm{iso}}^{\mathcal P}
\]
be the constant from Lemma~\ref{lem:automatic_positive_regret_isolation_P}.
Fix any truth \(p\in\mathcal P\). Since \(\mathcal F_{\mathcal P}=\{R_q:q\in\mathcal P\}\),
Lemma~\ref{lem:automatic_positive_regret_isolation_P} implies that for every score
\(R\in\mathcal F_{\mathcal P}\),
\[
\mathcal G_p(R)\in\{0\}\cup[\varepsilon_0,\infty).
\]
Hence
\[
\mathcal G_p(\widehat R_n)>0
\quad\Longrightarrow\quad
\mathcal G_p(\widehat R_n)\ge \varepsilon_0.
\]
Since also
\[
0\le \mathcal G_p(\widehat R_n)\le \Delta_{\max}^{\mathcal P},
\]
we obtain the pointwise bound
\[
\mathcal G_p(\widehat R_n)
\le
\Delta_{\max}^{\mathcal P}\,\mathbf 1\{\mathcal G_p(\widehat R_n)\ge \varepsilon_0\}.
\]
Taking expectations and applying Theorem~\ref{thm:offline_exact_erm_scale_eps} gives
\[
\mathcal R_{0,p}^{\mathrm{off}}(n)
\le
\Delta_{\max}^{\mathcal P}\,\PP_p\big(\mathcal G_p(\widehat R_n)\ge \varepsilon_0\big)
\le
2\Delta_{\max}^{\mathcal P}N_\gamma e^{-c_\gamma n}.
\]
Now take the supremum over \(p\in\mathcal P\). The sample-complexity statement follows by solving
\[
2\Delta_{\max}^{\mathcal P}N_\gamma e^{-c_\gamma n}\le \varepsilon
\]
for \(n\).
\end{proof}

\subsection{Zero-regret identification after a logarithmic burn-in}
\label{sec:offline_zero_regret_identification}
\label{app:offline_zero_regret_identification}

The preceding expected-regret bound has a sharper decision-level interpretation. Because positive
temperature-zero regret is isolated on the compact parameter class, controlling the fixed scale
\(\varepsilon_{\mathrm{iso}}^{\mathcal P}\) is equivalent to controlling the event of any
nonzero temperature-zero regret. Thus, under Conditions~\ref{as:LP1}--\ref{as:LP2}, offline ERM
does not merely drive the mean regret down continuously; after a logarithmic burn-in it selects a
zero-regret temperature-zero recommendation with high probability.

\begin{corollary}[Offline zero-regret identification]
\label{cor:offline_zero_regret_identification}
Assume \ref{as:LP1}--\ref{as:LP2}. Let \(N_\gamma\) and \(c_\gamma\) be the constants from
Theorem~\ref{thm:offline_exact_erm_scale_eps} corresponding to
\[
\varepsilon_0\coloneqq \varepsilon_{\mathrm{iso}}^{\mathcal P}.
\]
Let \(\widehat R_n\) be the offline exact ERM trained on \(n\) \(\pi_0\)-logged samples. Then, for
all \(n\ge1\),
\[
\sup_{p\in\mathcal P}
\PP_p\!\left(\mathcal G_p(\widehat R_n)>0\right)
\le
2N_\gamma e^{-c_\gamma n}.
\]
Consequently,
\[
\sup_{p\in\mathcal P}
\mathcal R_{0,p}^{\mathrm{off}}(n)
\le
2\Delta_{\max}^{\mathcal P}N_\gamma e^{-c_\gamma n}.
\]
Equivalently, for every \(\delta\in(0,1)\), if
\[
n\ge
\frac1{c_\gamma}\log\!\left(\frac{2N_\gamma}{\delta}\right),
\]
then
\[
\inf_{p\in\mathcal P}
\PP_p\!\left(\mathcal G_p(\widehat R_n)=0\right)
\ge 1-\delta.
\]
Moreover, if \(\widehat R_n\) is computed from the first \(n\) observations of one infinite i.i.d.
\(\pi_0\)-logged sample sequence, then for every fixed truth \(p\in\mathcal P\),
\[
\PP_p\!\left(
\exists N<\infty\ \text{such that }
\mathcal G_p(\widehat R_n)=0\ \forall n\ge N
\right)=1.
\]
\end{corollary}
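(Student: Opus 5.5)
The plan is to reduce every claim to Theorem~\ref{thm:offline_exact_erm_scale_eps} at the scale \(\varepsilon_0=\varepsilon_{\mathrm{iso}}^{\mathcal P}\), using the isolation property of Lemma~\ref{lem:automatic_positive_regret_isolation_P}.

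\textbf{Step 1: nonzero regret equals substantial regret.} On the full-probability event, \(\widehat R_n\in\mathcal F_{\mathcal P}\), so \(\widehat R_n=R_q\) for some \(q\in\mathcal P\). Lemma~\ref{lem:automatic_positive_regret_isolation_P} then gives
\[
\{\mathcal G_p(\widehat R_n)>0\}=\{\mathcal G_p(\widehat R_n)\ge\varepsilon_{\mathrm{iso}}^{\mathcal P}\}.
\]
On the complementary null event the default is \(\widehat R_n=R^\circ\in\mathcal F_{\mathcal P}\), so the identity holds there too. This is irrelevant anyway, since that event has probability zero.

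\textbf{Step 2: the tail bound and its consequences.} Applying Theorem~\ref{thm:offline_exact_erm_scale_eps} with this \(\varepsilon_0\) yields \(\PP_p(\mathcal G_p(\widehat R_n)>0)\le 2N_\gamma e^{-c_\gamma n}\) for every \(p\). The constants depend only on \(\mathcal P\), so taking the supremum over \(p\) is free. The expected-regret bound then follows from the pointwise inequality \(\mathcal G_p(\widehat R_n)\le\Delta_{\max}^{\mathcal P}\mathbf 1\{\mathcal G_p(\widehat R_n)>0\}\), exactly as in the proof of Corollary~\ref{cor:offline_log_eps_umr}. For the high-probability identification statement, solve \(2N_\gamma e^{-c_\gamma n}\le\delta\) for \(n\) and pass to complements. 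This gives \(\PP_p(\mathcal G_p(\widehat R_n)=0)\ge1-\delta\) uniformly in \(p\).

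\textbf{Step 3: eventual zero regret along one sample path.} Fix \(p\), and let \(\widehat R_n\) be built from the first \(n\) draws of a single i.i.d.\ sequence. The bound from Step 2 applies to each prefix, because the prefix of length \(n\) has exactly the law \(D_n\) assumed in Theorem~\ref{thm:offline_exact_erm_scale_eps}. Hence
\[
\sum_{n\ge1}\PP_p(\mathcal G_p(\widehat R_n)>0)\le 2N_\gamma\sum_n e^{-c_\gamma n}<\infty.
\]
Borel--Cantelli then gives that \(\PP_p(\mathcal G_p(\widehat R_n)>0\text{ infinitely often})=0\), which is the claimed almost-sure eventual identification.

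\textbf{Main obstacle.} The only delicate point is measurability, which Borel--Cantelli needs. Each event \(\{\mathcal G_p(\widehat R_n)>0\}\) must be measurable, and the full-probability events for the different \(n\) must be combined. For the first, I would use the fixed measurable ERM selection rule together with continuity of \(q\mapsto\mathcal G_p(R_q)\) from Lemma~\ref{lem:continuity_truth_centered_maps}. For the second, I would intersect the countably many full-probability support events into one event that still has probability one.
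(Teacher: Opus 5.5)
Your proposal is correct and matches the paper's proof step for step. The paper likewise uses Lemma~\ref{lem:automatic_positive_regret_isolation_P} to identify $\{\mathcal G_p(\widehat R_n)>0\}$ with the $\varepsilon_{\mathrm{iso}}^{\mathcal P}$-scale event, bounds it by Theorem~\ref{thm:offline_exact_erm_scale_eps}, and then uses the $\Delta_{\max}^{\mathcal P}$ envelope, solving for $n$, and Borel--Cantelli along the nested sample path. On your measurability point, which the paper leaves implicit: since the ERM selects a score rather than a parameter, the cleaner fact to cite is that $R\mapsto\mathcal G_p(R)$ is locally constant, hence Borel, on $\mathcal F_{\mathcal P}$ by Lemma~\ref{lem:stability_selector_P}, rather than continuity of $q\mapsto\mathcal G_p(R_q)$.
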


\begin{proof}[Proof of Corollary~\ref{cor:offline_zero_regret_identification}]
Let
\[
\varepsilon_0\coloneqq \varepsilon_{\mathrm{iso}}^{\mathcal P}.
\]
Fix any truth \(p\in\mathcal P\). Since \(\widehat R_n\in\mathcal F_{\mathcal P}\) on the
full-probability support event and the default value on the complementary null event was chosen in
\(\mathcal F_{\mathcal P}\), Lemma~\ref{lem:automatic_positive_regret_isolation_P} gives
\[
\mathcal G_p(\widehat R_n)>0
\quad\Longrightarrow\quad
\mathcal G_p(\widehat R_n)\ge \varepsilon_0.
\]
Therefore
\[
\PP_p\!\left(\mathcal G_p(\widehat R_n)>0\right)
\le
\PP_p\!\left(\mathcal G_p(\widehat R_n)\ge\varepsilon_0\right).
\]
Applying Theorem~\ref{thm:offline_exact_erm_scale_eps} at
\(\varepsilon_0=\varepsilon_{\mathrm{iso}}^{\mathcal P}\) yields
\[
\PP_p\!\left(\mathcal G_p(\widehat R_n)>0\right)
\le
2N_\gamma e^{-c_\gamma n}.
\]
The bound is uniform in \(p\), so taking the supremum over \(p\in\mathcal P\) proves the first
claim.

The expected-regret bound follows from the pointwise envelope
\[
0\le \mathcal G_p(\widehat R_n)\le \Delta_{\max}^{\mathcal P}
\]
and the preceding event bound:
\[
\mathcal R_{0,p}^{\mathrm{off}}(n)
=
\EE_p[\mathcal G_p(\widehat R_n)]
\le
\Delta_{\max}^{\mathcal P}
\PP_p\!\left(\mathcal G_p(\widehat R_n)>0\right)
\le
2\Delta_{\max}^{\mathcal P}N_\gamma e^{-c_\gamma n}.
\]
Taking the supremum over \(p\in\mathcal P\) gives the displayed uniform expected-regret bound.

For the high-probability statement, solve
\[
2N_\gamma e^{-c_\gamma n}\le \delta
\]
for \(n\). This gives
\[
n\ge \frac1{c_\gamma}\log\!\left(\frac{2N_\gamma}{\delta}\right),
\]
and hence, uniformly over \(p\in\mathcal P\),
\[
\PP_p\!\left(\mathcal G_p(\widehat R_n)=0\right)
=
1-
\PP_p\!\left(\mathcal G_p(\widehat R_n)>0\right)
\ge 1-\delta.
\]

Finally, suppose that \(\widehat R_n\) is computed from the first \(n\) observations of one infinite
i.i.d.\ \(\pi_0\)-logged sample sequence. For any fixed \(p\in\mathcal P\), the first part gives
\[
\sum_{n=1}^\infty
\PP_p\!\left(\mathcal G_p(\widehat R_n)>0\right)
\le
2N_\gamma\sum_{n=1}^\infty e^{-c_\gamma n}
<\infty.
\]
By the first Borel--Cantelli lemma,
\[
\PP_p\!\left(\mathcal G_p(\widehat R_n)>0\ \text{infinitely often}\right)=0.
\]
Equivalently, with \(\PP_p\)-probability one, there exists a finite random index \(N\) such that
\[
\mathcal G_p(\widehat R_n)=0
\qquad
\forall n\ge N.
\]
This proves the almost-sure eventual-zero statement.
\end{proof}

\begin{remark}[Interpretation of the offline sample-size sweep]
Corollary~\ref{cor:offline_zero_regret_identification} is the formal version of the empirical
``threshold'' pattern seen in offline sample-size sweeps. It does not assert a deterministic sample
size after which every possible logged dataset has zero regret. Instead, it asserts that the
probability of any nonzero temperature-zero regret decays exponentially in the number of logged
preference samples; along a nested infinite offline sample path, nonzero-regret ERM outputs occur
only finitely often almost surely. Therefore an empirical mean over repeated offline runs can drop
to the numerical evaluation floor once \(n\) exceeds the logarithmic burn-in scale
\(c_\gamma^{-1}\log N_\gamma\), provided Conditions~\ref{as:LP1}--\ref{as:LP2} hold.
\end{remark}

\begin{remark}[Why no extra assumption appears]
The fixed-scale truth-centered loss gap used by
Theorem~\ref{thm:offline_exact_erm_scale_eps} is automatic under
Conditions~\ref{as:LP1}--\ref{as:LP2} by
Lemma~\ref{lem:automatic_fixed_scale_loss_gap_P}. The online proof in
Theorem~\ref{thm:fixed_scale_characterization} needed the exploration bound only to convert such a
truth-centered loss gap into a conditional per-round loss gap under the learner's sampled slates,
which introduced a likelihood-ratio factor through
Lemma~\ref{lem:slate_domination_likelihood_ratio}. In the present offline theorem, the logged
slates are already sampled from \(\pi_0(\cdot\mid X)^{\otimes K}\), which is exactly the sampling
law defining \(\mathcal L_p\). Therefore that step becomes an identity, and no additional coverage
or fixed-scale minimax assumption is required.
\end{remark}

\subsection{A matching offline lower bound}
\label{sec:offline_matching_lower_bound}
\label{app:offline_matching_lower_bound_pf}

We now show that the logarithmic dependence on the target accuracy obtained in
Corollary~\ref{cor:offline_log_eps_umr} is unimprovable in general. The result is a two-instance
testing lower bound for the same \(\pi_0\)-logged offline model as in
Section~\ref{sec:offline_exact_erm_pi0}. It does not require failure of decision-relevant
diversity; on the contrary, it applies even when the upper-bound conditions hold, and therefore
shows that the \(\log(1/\varepsilon)\) dependence is sharp up to constants on every nontrivial
two-point subclass.

A possibly randomized offline learner \(\mathsf A\) is realized by an auxiliary seed \(U\),
independent of the logged sample, and outputs a centered score estimate
\[
\widehat R_n^{\mathsf A}=\mathsf A(D_n,U).
\]
We restrict attention to learners for which there exists a measurable selector
\[
(D_n,U,x,i)\longmapsto a_{\widehat R_n^{\mathsf A}}(x,i)
\in
\arg\max_{a\in\supp(\pi_0(\cdot\mid x))}\widehat R_n^{\mathsf A}(x,a,i).
\]
Its expected temperature-zero regret under truth \(p\in\mathcal P\) is
\[
\mathcal R_{0,p}^{\mathsf A}(n)
\coloneqq
\EE_p\big[\mathcal G_p(\widehat R_n^{\mathsf A})\big].
\]

The proof uses the one-sample KL identity and testing inequality proved below.

\begin{lemma}[One-sample KL identity for \(\pi_0\)-logged offline data]
\label{lem:offline_one_sample_kl_pi0}
Let
\[
\mathsf P_p^{\mathrm{off}}
\coloneqq
\Law_p(X,I,\mathbf A,Y)
\]
denote the law of one offline observation under truth \(p\in\mathcal P\), where
\[
(X,I)\sim d_0\times\rho,
\qquad
\mathbf A\sim \pi_0(\cdot\mid X)^{\otimes K},
\qquad
Y\sim P_{R_p}(\cdot\mid X,I,\mathbf A).
\]
Then for every \(p,q\in\mathcal P\),
\[
\KL\!\left(\mathsf P_p^{\mathrm{off}}\,\middle\|\,\mathsf P_q^{\mathrm{off}}\right)
=
\mathcal L_p(R_q)-\mathcal L_p(R_p).
\]
\end{lemma}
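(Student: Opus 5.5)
We will prove the identity by applying the chain rule for KL to the factorization of one offline observation. Under both truths \(p\) and \(q\), the joint law of \((X,I,\mathbf A)\) is the same: \((X,I)\sim d_0\times\rho\) and \(\mathbf A\sim\pi_0(\cdot\mid X)^{\otimes K}\). Neither factor depends on the truth. Only the conditional label law differs, and it is \(P_{R_p}(\cdot\mid X,I,\mathbf A)\) versus \(P_{R_q}(\cdot\mid X,I,\mathbf A)\).

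Step one is to write the density of \(\mathsf P_p^{\mathrm{off}}\) with respect to the common dominating measure \((d_0\times\rho)\otimes\pi_0^{\otimes K}\otimes(\text{counting on }\{1,\dots,K\})\). This density is \(P_{R_p}(y\mid x,i,\mathbf a)\), and the analogous density holds for \(q\). Absolute continuity \(\mathsf P_p^{\mathrm{off}}\ll\mathsf P_q^{\mathrm{off}}\) holds because MNL probabilities are strictly positive whenever the scores are finite. On the full-probability event where every \(A_k\in\cA_0(X)\), both scores are bounded by \(B_{\mathcal P}\) (Lemma~\ref{lem:compact_continuity_consequences_P}).

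Step two applies the chain rule. The marginal KL on \((X,I,\mathbf A)\) is zero, so
\[
\KL(\mathsf P_p^{\mathrm{off}}\|\mathsf P_q^{\mathrm{off}})=\EE\bigl[\KL(P_{R_p}(\cdot\mid X,I,\mathbf A)\,\|\,P_{R_q}(\cdot\mid X,I,\mathbf A))\bigr],
\]
where the expectation is over the reference-slate law. We then invoke Lemma~\ref{lem:excess_is_kl_support} pointwise in \((x,i,\mathbf a)\), with \(S^\star=R_p\) and \(S=R_q\). It identifies the inner KL with \(\EE_{Y\sim P_{R_p}}[\ell(\mathbf v_{R_q},Y)-\ell(\mathbf v_{R_p},Y)]\). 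Integrating over \((X,I,\mathbf A)\) and using the definition of \(\mathcal L_p\) gives \(\mathcal L_p(R_q)-\mathcal L_p(R_p)\).

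The only delicate point is justifying the interchange and splitting of expectations, since \(\mathcal L_p(R_q)\) and \(\mathcal L_p(R_p)\) must be finite separately. Lemma~\ref{lem:mnl_loss_basic} handles this: on the support event both losses lie in \([0,\log K+2B_{\mathcal P}]\), so everything is integrable and Fubini/Tonelli applies. We expect no real obstacle beyond this measurability and integrability bookkeeping.
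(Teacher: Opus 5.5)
Your proposal is correct and follows the paper's proof: the $(X,I,\mathbf A)$-marginals coincide under both truths, so the chain rule reduces the KL to the expected choice-model KL, and Lemma~\ref{lem:excess_is_kl_support} applied pointwise and then integrated gives $\mathcal L_p(R_q)-\mathcal L_p(R_p)$. Your integrability bookkeeping via Lemma~\ref{lem:mnl_loss_basic} makes explicit the finiteness of each loss term, which the paper leaves implicit when it splits the expectation.
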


\begin{proof}[Proof of Lemma~\ref{lem:offline_one_sample_kl_pi0}]
Under both truths \(p\) and \(q\), the marginal law of \((X,I,\mathbf A)\) is the same, namely
\[
(X,I)\sim d_0\times\rho,
\qquad
\mathbf A\sim \pi_0(\cdot\mid X)^{\otimes K}.
\]
Hence the chain rule for KL gives
\[
\KL\!\left(\mathsf P_p^{\mathrm{off}}\,\middle\|\,\mathsf P_q^{\mathrm{off}}\right)
=
\EE_p\Big[
\KL\!\Big(
P_{R_p}(\cdot\mid X,I,\mathbf A)\,\Big\|\,P_{R_q}(\cdot\mid X,I,\mathbf A)
\Big)
\Big].
\]
By Lemma~\ref{lem:excess_is_kl_support}, for every realized \((x,i,\mathbf a)\),
\[
\KL\!\Big(
P_{R_p}(\cdot\mid x,i,\mathbf a)\,\Big\|\,P_{R_q}(\cdot\mid x,i,\mathbf a)
\Big)
=
\EE\big[\ell(\mathbf v_{R_q},Y)-\ell(\mathbf v_{R_p},Y)\mid x,i,\mathbf a\big],
\]
where \(Y\sim P_{R_p}(\cdot\mid x,i,\mathbf a)\). Taking expectation over the common law of
\((X,I,\mathbf A)\) yields
\[
\KL\!\left(\mathsf P_p^{\mathrm{off}}\,\middle\|\,\mathsf P_q^{\mathrm{off}}\right)
=
\mathcal L_p(R_q)-\mathcal L_p(R_p).
\]
\end{proof}

\begin{lemma}[Testing lower bound from KL]
\label{lem:testing_lower_bound_from_kl}
For any probability measures \(P,Q\) on the same measurable space and any measurable event \(A\),
\[
P(A)+Q(A^c)\ge \frac12 e^{-\KL(P\|Q)}.
\]
Consequently,
\[
P(A)+Q(A^c)\ge
\frac12 \exp\!\big(-\min\{\KL(P\|Q),\KL(Q\|P)\}\big).
\]
\end{lemma}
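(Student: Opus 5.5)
The plan is to prove this classical Bretagnolle--Huber-type bound by lower-bounding the affinity of \(P\) and \(Q\). Let \(\mu=P+Q\) dominate both measures, and write \(p=dP/d\mu\), \(q=dQ/d\mu\). The first step is the pointwise inequality
\[
P(A)+Q(A^c)=\int_A p\,d\mu+\int_{A^c}q\,d\mu\ge\int\min(p,q)\,d\mu .
\]
This reduces the claim to showing \(\int\min(p,q)\,d\mu\ge\tfrac12 e^{-\KL(P\|Q)}\). If \(\KL(P\|Q)=\infty\), the bound is trivial, so we may assume \(P\ll Q\) and that the KL divergence is finite.

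The second step relates \(\min(p,q)\) to the Bhattacharyya affinity \(\int\sqrt{pq}\,d\mu\). By Cauchy--Schwarz, using \(pq=\min(p,q)\max(p,q)\) and \(\max(p,q)\le p+q\),
\[
\Big(\int\sqrt{pq}\,d\mu\Big)^2\le\int\min(p,q)\,d\mu\int\max(p,q)\,d\mu\le 2\int\min(p,q)\,d\mu .
\]
Hence \(\int\min(p,q)\,d\mu\ge\tfrac12\big(\int\sqrt{pq}\,d\mu\big)^2\).

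The third step lower-bounds the affinity by Jensen's inequality applied to the convex function \(\exp\) under \(P\), restricted to \(\{p>0\}\):
\[
\Big(\int\sqrt{pq}\,d\mu\Big)^2=\Big(\EE_P\big[e^{\frac12\log(q/p)}\big]\Big)^2\ge\Big(e^{-\frac12\KL(P\|Q)}\Big)^2=e^{-\KL(P\|Q)} .
\]
This is the only mildly delicate point. On \(\{p>0,q=0\}\) the ratio \(\log(q/p)=-\infty\), but under \(P\ll Q\) this set has \(P\)-measure zero, so the identity \(\int_{\{p>0\}}\sqrt{pq}\,d\mu=\EE_P[\sqrt{q/p}]\) and the Jensen step are valid. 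Chaining the three steps gives the first display.

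For the ``consequently'' statement, the quantity \(P(A)+Q(A^c)\) is symmetric under swapping \(P\) with \(Q\) and \(A\) with \(A^c\). Applying the first bound to the pair \((Q,P)\) with event \(A^c\) therefore gives \(P(A)+Q(A^c)\ge\tfrac12 e^{-\KL(Q\|P)}\). Taking the larger of the two lower bounds yields the exponent \(-\min\{\KL(P\|Q),\KL(Q\|P)\}\).

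I expect no serious obstacle. The only care needed is the measure-theoretic handling of zero densities and of the infinite-KL case, both addressed above.
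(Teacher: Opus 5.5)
Your proof is correct and follows essentially the same route as the paper. You bound \(P(A)+Q(A^c)\) below by the overlap \(\int\min(p,q)\,d\mu=1-\operatorname{TV}(P,Q)\), relate this to the squared Bhattacharyya coefficient via Cauchy--Schwarz, and finish with Jensen to get \(e^{-\KL(P\|Q)}\). The differences are cosmetic: you factor \(\sqrt{pq}=\sqrt{\min(p,q)\max(p,q)}\) and use \(\int\max(p,q)\,d\mu\le 2\), whereas the paper factors \(|\sqrt p-\sqrt q|(\sqrt p+\sqrt q)\) and uses \(1-\sqrt{1-u}\ge u/2\). Your explicit swap of both \(P\leftrightarrow Q\) and \(A\leftrightarrow A^c\) for the symmetric bound is also slightly more careful than the paper's one-line remark.
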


\begin{proof}[Proof of Lemma~\ref{lem:testing_lower_bound_from_kl}]
For any event \(A\),
\[
P(A)+Q(A^c)
=
1-\big(Q(A)-P(A)\big)
\ge
1-\operatorname{TV}(P,Q),
\]
where
\[
\operatorname{TV}(P,Q)\coloneqq \sup_B |P(B)-Q(B)|.
\]
Let
\[
\operatorname{BC}(P,Q)\coloneqq \int \sqrt{dP\,dQ}
\]
be the Bhattacharyya coefficient. Writing densities \(p,q\) with respect to a common dominating
measure, Cauchy--Schwarz gives
\begin{align*}
\operatorname{TV}(P,Q)
&=
\frac12\int |p-q|\\
&=
\frac12\int |\sqrt p-\sqrt q|(\sqrt p+\sqrt q)\\
&\le
\frac12
\left(\int (\sqrt p-\sqrt q)^2\right)^{1/2}
\left(\int (\sqrt p+\sqrt q)^2\right)^{1/2}\\
&=
\sqrt{1-\operatorname{BC}(P,Q)^2}.
\end{align*}

If \(\KL(P\|Q)=\infty\), then the first displayed claim is immediate because its right-hand side is
\(0\). Assume henceforth that \(\KL(P\|Q)<\infty\). Then \(P\ll Q\), and Jensen's inequality gives
\[
-\frac12\KL(P\|Q)
=
\EE_P\!\left[\log \sqrt{\frac{dQ}{dP}}\right]
\le
\log \EE_P\!\left[\sqrt{\frac{dQ}{dP}}\right]
=
\log \operatorname{BC}(P,Q).
\]
Therefore
\[
\operatorname{BC}(P,Q)^2\ge e^{-\KL(P\|Q)},
\]
and hence
\[
\operatorname{TV}(P,Q)\le \sqrt{1-e^{-\KL(P\|Q)}}.
\]
Using \(1-\sqrt{1-u}\ge u/2\) for \(u\in[0,1]\) with \(u=e^{-\KL(P\|Q)}\), we obtain
\[
1-\operatorname{TV}(P,Q)\ge \frac12 e^{-\KL(P\|Q)}.
\]
Combining with the first display proves
\[
P(A)+Q(A^c)\ge \frac12 e^{-\KL(P\|Q)}.
\]
Applying the same argument with \(P\) and \(Q\) interchanged yields the symmetric version.
\end{proof}

\begin{theorem}[Offline two-instance exponential lower bound]
\label{thm:offline_two_instance_exp_lower}
Assume \ref{as:LP1}--\ref{as:LP2}. Fix any two truths \(p^0,p^1\in\mathcal P\), and write
\[
a_m(x,i)\coloneqq a_{p^m}(x,i),
\qquad m\in\{0,1\}.
\]
Define the disagreement region
\[
\mathcal D_{01}
\coloneqq
\{(x,i)\in\cX\times\cU: a_0(x,i)\neq a_1(x,i)\},
\qquad
s_{01}\coloneqq (d_0\times\rho)(\mathcal D_{01}),
\]
and the symmetric one-sample offline KL gap
\[
\kappa_{01}^{\mathrm{off}}
\coloneqq
\min\Big\{
\mathcal L_{p^0}(R_{p^1})-\mathcal L_{p^0}(R_{p^0}),
\;
\mathcal L_{p^1}(R_{p^0})-\mathcal L_{p^1}(R_{p^1})
\Big\}.
\]
Then every possibly randomized offline learner \(\mathsf A\) satisfying the measurable-selector
requirement above satisfies, for every \(n\ge 1\),
\[
\max\Big\{
\mathcal R_{0,p^0}^{\mathsf A}(n),
\mathcal R_{0,p^1}^{\mathsf A}(n)
\Big\}
\ge
\frac{\Delta_{\min}^{\mathcal P}\,s_{01}}{4}\,
e^{-n\kappa_{01}^{\mathrm{off}}}.
\]
\end{theorem}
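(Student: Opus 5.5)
The plan is a standard Le Cam two-point reduction. Regret is bounded below by the probability of misrecommending on the disagreement region, that probability is written as a testing error, and the testing error is controlled by Lemma~\ref{lem:testing_lower_bound_from_kl} together with the KL tensorization of Lemma~\ref{lem:offline_one_sample_kl_pi0}.

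\emph{Step 1 (regret lower bound by misrecommendation).} Fix $\mathsf A$ and $n$. Let $\hat a(x,i)=a_{\widehat R_n^{\mathsf A}}(x,i)$ be the measurable selector, a function of $(D_n,U,x,i)$. By Condition~\ref{as:LP2}, for $d_0\times\rho$-a.e.\ $(x,i)$, any recommendation $\hat a\neq a_m(x,i)$ incurs regret at least $\Delta_{\min}^{\mathcal P}$ under truth $p^m$. This is the pointwise argument of Lemma~\ref{lem:regret_vs_disagreement}, applied to an arbitrary selector, since only the gap of the truth is used. Let $(X',I')\sim d_0\times\rho$ be an independent evaluation point and restrict to $\mathcal D_{01}$. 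This gives
\[
\mathcal R_{0,p^0}^{\mathsf A}(n)\ge \Delta_{\min}^{\mathcal P}\,\PP_{p^0}(B^c\cap\{(X',I')\in\mathcal D_{01}\}),
\qquad
\mathcal R_{0,p^1}^{\mathsf A}(n)\ge \Delta_{\min}^{\mathcal P}\,\PP_{p^1}(B),
\]
where $B\coloneqq\{(X',I')\in\mathcal D_{01},\ \hat a(X',I')=a_0(X',I')\}$. On $\mathcal D_{01}$, the event $\hat a=a_0$ already forces $\hat a\neq a_1$.

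\emph{Step 2 (testing).} Condition on $(X',I')$, or equivalently integrate over $\mathcal D_{01}$. For fixed $(x,i)\in\mathcal D_{01}$, set $A_{x,i}=\{\hat a(x,i)\neq a_0(x,i)\}$, an event in $(D_n,U)$. Let $P=\mathsf P_{p^0}^{\mathrm{off}\otimes n}\otimes\Law(U)$ and $Q=\mathsf P_{p^1}^{\mathrm{off}\otimes n}\otimes\Law(U)$. Then $\hat a\neq a_0$ under $P$ contributes to the $p^0$ regret, and $\hat a=a_0$ under $Q$ contributes to the $p^1$ regret. Lemma~\ref{lem:testing_lower_bound_from_kl} gives
\[
P(A_{x,i})+Q(A_{x,i}^c)\ge \tfrac12\exp(-\min\{\KL(P\|Q),\KL(Q\|P)\}).
\]
The auxiliary seed $U$ has the same law under both measures and contributes nothing to the KL. Tensorization and Lemma~\ref{lem:offline_one_sample_kl_pi0} give $\KL(P\|Q)=n(\mathcal L_{p^0}(R_{p^1})-\mathcal L_{p^0}(R_{p^0}))$, and symmetrically for $\KL(Q\|P)$. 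Hence the minimum equals $n\kappa_{01}^{\mathrm{off}}$.

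\emph{Step 3 (combine).} Integrate over $(x,i)\in\mathcal D_{01}$ using Fubini, which is legitimate by measurability of the selector. Then
\[
\mathcal R_{0,p^0}^{\mathsf A}+\mathcal R_{0,p^1}^{\mathsf A}\ge \Delta_{\min}^{\mathcal P}\, s_{01}\,\tfrac12 e^{-n\kappa_{01}^{\mathrm{off}}}.
\]
Bounding the maximum by half the sum yields the constant $\tfrac14$.

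The main obstacle is the bookkeeping in Step 1. The regret must be written as an integral over $(x,i)$ of a pointwise misrecommendation indicator that is jointly measurable in $(D_n,U,x,i)$, so that Fubini can exchange the data-expectation with the $(x,i)$-integral. The gap must also be applied on the full-measure sets $G_0,G_1$ from Condition~\ref{as:LP2}, intersected with $\mathcal D_{01}$. Everything else follows directly from Lemmas~\ref{lem:offline_one_sample_kl_pi0} and~\ref{lem:testing_lower_bound_from_kl}.
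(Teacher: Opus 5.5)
Your proposal is correct and matches the paper's proof step for step: the same reduction from regret to testing on the disagreement region $\mathcal D_{01}$, then Lemma~\ref{lem:testing_lower_bound_from_kl}, then tensorization through Lemma~\ref{lem:offline_one_sample_kl_pi0}. The one cosmetic difference is that you apply the testing inequality at each fixed $(x,i)\in\mathcal D_{01}$ and integrate by Fubini, whereas the paper builds the conditional evaluation law $\mu_{01}$ into product measures $\widetilde P_m$ and applies the inequality once; the bound $\tfrac12 e^{-n\kappa_{01}^{\mathrm{off}}}$ does not depend on $(x,i)$, so both routes give the same constant.
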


\begin{proof}[Proof of Theorem~\ref{thm:offline_two_instance_exp_lower}]
Let \(U\) be the learner's auxiliary random seed, independent of all data and truths. Let
\[
D_n=(Z_1,\dots,Z_n),
\qquad
Z_s=(X_s,I_s,\mathbf A_s,Y_s),
\]
be the logged sample of size \(n\). By definition of \(\mathcal G_p\), on an enlarged probability
space we may realize a fresh evaluation pair \((X,I)\sim d_0\times\rho\), independent of
\((D_n,U)\).

Write
\[
\widehat a_n(x,i)\coloneqq a_{\widehat R_n^{\mathsf A}}(x,i).
\]
Define the test
\[
\psi(D_n,U,X,I)
\coloneqq
\begin{cases}
0, & \widehat a_n(X,I)=a_0(X,I),\\
1, & \widehat a_n(X,I)\neq a_0(X,I).
\end{cases}
\]

Let \(G_0\) and \(G_1\) be full \(d_0\times\rho\)-measure sets on which the supportwise gap
condition in Condition~\ref{as:LP2} holds for truths \(p^0\) and \(p^1\), respectively. Since the
fresh evaluation pair \((X,I)\) has law \(d_0\times\rho\) under both truths, intersecting events with
\(G_0\) or \(G_1\) does not change their probabilities.

We first lower-bound the regret under truth \(p^0\). By definition,
\[
\mathcal R_{0,p^0}^{\mathsf A}(n)
=
\EE_{p^0}\Big[
R_{p^0}\!\big(X,a_0(X,I),I\big)
-
R_{p^0}\!\big(X,\widehat a_n(X,I),I\big)
\Big].
\]
On the event
\[
\{(X,I)\in\mathcal D_{01}\}\cap \{\psi=1\}\cap G_0,
\]
we have \(\widehat a_n(X,I)\neq a_0(X,I)\). Since \(a_0(X,I)\) is the unique supportwise maximizer
for truth \(p^0\) on \(G_0\), the pointwise regret is at least
\(\Delta_{\min}^{\mathcal P}\). Therefore
\begin{equation}
\mathcal R_{0,p^0}^{\mathsf A}(n)
\ge
\Delta_{\min}^{\mathcal P}\,
\PP_{p^0}\big((X,I)\in\mathcal D_{01},\ \psi=1\big).
\label{eq:offline_lb_truth0}
\end{equation}

Likewise, under truth \(p^1\),
\[
\mathcal R_{0,p^1}^{\mathsf A}(n)
=
\EE_{p^1}\Big[
R_{p^1}\!\big(X,a_1(X,I),I\big)
-
R_{p^1}\!\big(X,\widehat a_n(X,I),I\big)
\Big].
\]
On the event
\[
\{(X,I)\in\mathcal D_{01}\}\cap \{\psi=0\}\cap G_1,
\]
we have \(\widehat a_n(X,I)=a_0(X,I)\neq a_1(X,I)\). Therefore, by the same supportwise gap
condition on \(G_1\), the pointwise regret is at least \(\Delta_{\min}^{\mathcal P}\), and hence
\begin{equation}
\mathcal R_{0,p^1}^{\mathsf A}(n)
\ge
\Delta_{\min}^{\mathcal P}\,
\PP_{p^1}\big((X,I)\in\mathcal D_{01},\ \psi=0\big).
\label{eq:offline_lb_truth1}
\end{equation}

If \(s_{01}=0\), then the theorem is trivial. Assume henceforth that \(s_{01}>0\), and let
\[
\mu_{01}\coloneqq \Law\big((X,I)\mid (X,I)\in\mathcal D_{01}\big).
\]
For \(m\in\{0,1\}\), let \(P_m^{(n)}\) denote the law of the logged sample \(D_n\) under truth
\(p^m\), and define
\[
\widetilde P_m
\coloneqq
P_m^{(n)}\otimes \Law(U)\otimes \mu_{01}.
\]
Because \((X,I)\) and \(U\) are independent of the logged sample and have the same law under both
truths,
\[
\PP_{p^m}\big((X,I)\in\mathcal D_{01},\ \psi=b\big)
=
s_{01}\,\widetilde P_m(\psi=b),
\qquad
m,b\in\{0,1\}.
\]
Combining \eqref{eq:offline_lb_truth0} and \eqref{eq:offline_lb_truth1} gives
\[
\mathcal R_{0,p^0}^{\mathsf A}(n)+\mathcal R_{0,p^1}^{\mathsf A}(n)
\ge
\Delta_{\min}^{\mathcal P}\,s_{01}\,
\Big(
\widetilde P_0(\psi=1)+\widetilde P_1(\psi=0)
\Big).
\]
Apply Lemma~\ref{lem:testing_lower_bound_from_kl} to the event \(\{\psi=1\}\). This yields
\[
\widetilde P_0(\psi=1)+\widetilde P_1(\psi=0)
\ge
\frac12
\exp\!\Big(
-\min\{
\KL(\widetilde P_0\|\widetilde P_1),
\KL(\widetilde P_1\|\widetilde P_0)
\}
\Big).
\]
Since the factors \(\Law(U)\) and \(\mu_{01}\) are common to both measures,
\[
\KL(\widetilde P_0\|\widetilde P_1)
=
\KL(P_0^{(n)}\|P_1^{(n)}),
\qquad
\KL(\widetilde P_1\|\widetilde P_0)
=
\KL(P_1^{(n)}\|P_0^{(n)}).
\]
The logged observations are i.i.d., so
\[
\KL(P_0^{(n)}\|P_1^{(n)})
=
n\,\KL(\mathsf P_{p^0}^{\mathrm{off}}\|\mathsf P_{p^1}^{\mathrm{off}}),
\qquad
\KL(P_1^{(n)}\|P_0^{(n)})
=
n\,\KL(\mathsf P_{p^1}^{\mathrm{off}}\|\mathsf P_{p^0}^{\mathrm{off}}).
\]
By Lemma~\ref{lem:offline_one_sample_kl_pi0},
\[
\KL(\mathsf P_{p^0}^{\mathrm{off}}\|\mathsf P_{p^1}^{\mathrm{off}})
=
\mathcal L_{p^0}(R_{p^1})-\mathcal L_{p^0}(R_{p^0}),
\]
and similarly with \(p^0,p^1\) interchanged. Therefore
\[
\widetilde P_0(\psi=1)+\widetilde P_1(\psi=0)
\ge
\frac12 e^{-n\kappa_{01}^{\mathrm{off}}}.
\]
Substituting into the previous display yields
\[
\mathcal R_{0,p^0}^{\mathsf A}(n)+\mathcal R_{0,p^1}^{\mathsf A}(n)
\ge
\frac{\Delta_{\min}^{\mathcal P}\,s_{01}}{2}\,
e^{-n\kappa_{01}^{\mathrm{off}}}.
\]
Finally,
\[
\max\Big\{
\mathcal R_{0,p^0}^{\mathsf A}(n),
\mathcal R_{0,p^1}^{\mathsf A}(n)
\Big\}
\ge
\frac12
\Big(
\mathcal R_{0,p^0}^{\mathsf A}(n)+\mathcal R_{0,p^1}^{\mathsf A}(n)
\Big),
\]
which proves
\[
\max\Big\{
\mathcal R_{0,p^0}^{\mathsf A}(n),
\mathcal R_{0,p^1}^{\mathsf A}(n)
\Big\}
\ge
\frac{\Delta_{\min}^{\mathcal P}\,s_{01}}{4}\,
e^{-n\kappa_{01}^{\mathrm{off}}}.
\]
\end{proof}

\begin{corollary}[Minimax \(\Omega(\log(1/\varepsilon))\) offline lower bound]
\label{cor:offline_logeps_lower_bound}
Assume \ref{as:LP1}--\ref{as:LP2}. The infimum below is over offline learners satisfying the measurable-selector requirement stated before Theorem~\ref{thm:offline_two_instance_exp_lower}. Suppose there exist \(p^0,p^1\in\mathcal P\) such that
\[
\mathcal G_{p^0}(R_{p^1})>0.
\]
Then the constant \(\kappa_{01}^{\mathrm{off}}\) from
Theorem~\ref{thm:offline_two_instance_exp_lower} satisfies \(0<\kappa_{01}^{\mathrm{off}}<\infty\),
and
\[
\inf_{\mathsf A}\sup_{p\in\mathcal P}\mathcal R_{0,p}^{\mathsf A}(n)
\ge
\frac{\Delta_{\min}^{\mathcal P}\,\varepsilon_{\mathrm{iso}}^{\mathcal P}}
{4\Delta_{\max}^{\mathcal P}}\,
e^{-n\kappa_{01}^{\mathrm{off}}},
\]
where \(\kappa_{01}^{\mathrm{off}}\) is the constant from
Theorem~\ref{thm:offline_two_instance_exp_lower}. Consequently, every offline learner satisfying
\[
\sup_{p\in\mathcal P}\mathcal R_{0,p}^{\mathsf A}(n)\le \varepsilon
\]
must obey
\[
n
\ge
\frac{1}{\kappa_{01}^{\mathrm{off}}}
\log\!\left(
\frac{\Delta_{\min}^{\mathcal P}\,\varepsilon_{\mathrm{iso}}^{\mathcal P}}
{4\Delta_{\max}^{\mathcal P}\,\varepsilon}
\right)
\]
for every
\(
\varepsilon\in\big(0,\Delta_{\min}^{\mathcal P}\varepsilon_{\mathrm{iso}}^{\mathcal P}
/(4\Delta_{\max}^{\mathcal P})\big].
\)
In particular, the class-minimax offline sample complexity is \(\Omega(\log(1/\varepsilon))\).
\end{corollary}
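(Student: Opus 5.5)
The bound follows by combining Theorem~\ref{thm:offline_two_instance_exp_lower} with the isolation Lemma~\ref{lem:automatic_positive_regret_isolation_P} and the disagreement comparison in Lemma~\ref{lem:regret_vs_disagreement}. The plan has three steps.

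\emph{Step 1: the constant $\kappa_{01}^{\mathrm{off}}$ is positive and finite.}

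Finiteness comes from the envelope. By Lemma~\ref{lem:offline_one_sample_kl_pi0}, each one-sample KL equals an excess loss. By Lemma~\ref{lem:mnl_loss_basic}, that excess loss is at most $2(\log K+2B_{\mathcal P})$. Alternatively, Lemma~\ref{lem:softmax_kl_upper_bound} bounds it by $2KB_{\mathcal P}^2$.

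For positivity, I handle the two directions separately.
\begin{itemize}
\item Suppose $\mathcal L_{p^0}(R_{p^1})=\mathcal L_{p^0}(R_{p^0})$. Then Lemma~\ref{lem:zero_loss_implies_zero_regret_P} gives $\mathcal G_{p^0}(R_{p^1})=0$, which contradicts the hypothesis.
\item For the reverse direction, the same lemma applied with truth $p^1$ shows that $R_{p^0}=R_{p^1}$ on the reference support for a.e.\ $(x,i)$. Then the supportwise argmax sets coincide, and by uniqueness in \ref{as:LP2}, $a_{p^0}=a_{p^1}$ almost surely. Hence $\mathcal G_{p^0}(R_{p^1})=0$ again, a contradiction.
\end{itemize}
So both KL terms are positive, and their minimum $\kappa_{01}^{\mathrm{off}}$ is positive.

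\emph{Step 2: lower-bound the disagreement mass $s_{01}$.}

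Since $\mathcal G_{p^0}(R_{p^1})>0$, Lemma~\ref{lem:automatic_positive_regret_isolation_P} upgrades this to $\mathcal G_{p^0}(R_{p^1})\ge\varepsilon_{\mathrm{iso}}^{\mathcal P}$. The upper inequality of Lemma~\ref{lem:regret_vs_disagreement} then gives
\[
s_{01}=(d_0\times\rho)\{a_{p^1}\neq a_{p^0}\}\ge \varepsilon_{\mathrm{iso}}^{\mathcal P}/\Delta_{\max}^{\mathcal P}.
\]

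\emph{Step 3: apply the two-instance bound.}

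Substituting this bound on $s_{01}$ into Theorem~\ref{thm:offline_two_instance_exp_lower} gives, for every admissible learner $\mathsf A$,
\[
\sup_{p\in\mathcal P}\mathcal R_{0,p}^{\mathsf A}(n)\ge\max_{m}\mathcal R^{\mathsf A}_{0,p^m}(n)\ge \frac{\Delta_{\min}^{\mathcal P}\varepsilon_{\mathrm{iso}}^{\mathcal P}}{4\Delta_{\max}^{\mathcal P}}e^{-n\kappa_{01}^{\mathrm{off}}}.
\]
Taking the infimum over $\mathsf A$ yields the claimed minimax bound.

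For the sample-complexity statement, suppose $\sup_p\mathcal R^{\mathsf A}_{0,p}(n)\le\varepsilon$. Combining with the display above gives $\varepsilon\ge c\,e^{-n\kappa}$ with $c=\Delta_{\min}^{\mathcal P}\varepsilon_{\mathrm{iso}}^{\mathcal P}/(4\Delta_{\max}^{\mathcal P})$. Taking logarithms and dividing by $\kappa>0$ gives $n\ge\kappa^{-1}\log(c/\varepsilon)$. This bound is nonnegative exactly when $\varepsilon\le c$, which is the stated range, and it is $\Omega(\log(1/\varepsilon))$.

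The only delicate point is the reverse-direction positivity in Step 1, which needs the argmax-uniqueness argument from \ref{as:LP2}. Everything else is direct substitution.
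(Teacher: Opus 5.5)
Your proof is correct and follows the paper's argument. Isolation (Lemma~\ref{lem:automatic_positive_regret_isolation_P}) plus the upper half of Lemma~\ref{lem:regret_vs_disagreement} gives $s_{01}\ge \varepsilon_{\mathrm{iso}}^{\mathcal P}/\Delta_{\max}^{\mathcal P}$, Lemma~\ref{lem:zero_loss_implies_zero_regret_P} gives positivity of $\kappa_{01}^{\mathrm{off}}$, the loss envelope gives finiteness, and Theorem~\ref{thm:offline_two_instance_exp_lower} finishes. The only local difference is the reverse direction of positivity: the paper first derives $\mathcal G_{p^1}(R_{p^0})\ge\Delta_{\min}^{\mathcal P}s_{01}>0$ from the lower half of Lemma~\ref{lem:regret_vs_disagreement} and then invokes the zero-loss lemma, whereas you use the supportwise score equality directly. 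That equality already makes $a_{p^1}$ a maximizer of $R_{p^0}$, so the uniqueness from \ref{as:LP2} is not actually needed there.
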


\begin{proof}[Proof of Corollary~\ref{cor:offline_logeps_lower_bound}]
By Lemma~\ref{lem:automatic_positive_regret_isolation_P}, the assumption
\[
\mathcal G_{p^0}(R_{p^1})>0
\]
implies
\[
\mathcal G_{p^0}(R_{p^1})\ge \varepsilon_{\mathrm{iso}}^{\mathcal P}.
\]
Applying Lemma~\ref{lem:regret_vs_disagreement} to the pair \((p^0,p^1)\) gives
\[
s_{01}
=
(d_0\times\rho)\{a_{p^1}\neq a_{p^0}\}
\ge
\frac{\mathcal G_{p^0}(R_{p^1})}{\Delta_{\max}^{\mathcal P}}
\ge
\frac{\varepsilon_{\mathrm{iso}}^{\mathcal P}}{\Delta_{\max}^{\mathcal P}}.
\]
In particular, \(s_{01}>0\). Applying the same lemma to the reversed pair \((p^1,p^0)\) yields
\[
\mathcal G_{p^1}(R_{p^0})\ge \Delta_{\min}^{\mathcal P}s_{01}>0.
\]

If
\[
\mathcal L_{p^0}(R_{p^1})-\mathcal L_{p^0}(R_{p^0})=0,
\]
then Lemma~\ref{lem:zero_loss_implies_zero_regret_P} would imply
\[
\mathcal G_{p^0}(R_{p^1})=0,
\]
a contradiction. Therefore
\[
\mathcal L_{p^0}(R_{p^1})-\mathcal L_{p^0}(R_{p^0})>0.
\]
Likewise, if
\[
\mathcal L_{p^1}(R_{p^0})-\mathcal L_{p^1}(R_{p^1})=0,
\]
then Lemma~\ref{lem:zero_loss_implies_zero_regret_P} would imply
\[
\mathcal G_{p^1}(R_{p^0})=0,
\]
again a contradiction. Hence
\[
\mathcal L_{p^1}(R_{p^0})-\mathcal L_{p^1}(R_{p^1})>0.
\]
Both quantities are finite by the envelope definitions above and Lemma~\ref{lem:mnl_loss_basic}, so
\[
0<\kappa_{01}^{\mathrm{off}}<\infty.
\]

Now apply Theorem~\ref{thm:offline_two_instance_exp_lower}:
\[
\inf_{\mathsf A}\sup_{p\in\mathcal P}\mathcal R_{0,p}^{\mathsf A}(n)
\ge
\frac{\Delta_{\min}^{\mathcal P}\,s_{01}}{4}
e^{-n\kappa_{01}^{\mathrm{off}}}
\ge
\frac{\Delta_{\min}^{\mathcal P}\,\varepsilon_{\mathrm{iso}}^{\mathcal P}}
{4\Delta_{\max}^{\mathcal P}}
e^{-n\kappa_{01}^{\mathrm{off}}}.
\]
Solving the last display for \(n\) gives the stated \(\Omega(\log(1/\varepsilon))\) lower bound.
\end{proof}

\begin{remark}[Interpretation]
Corollary~\ref{cor:offline_logeps_lower_bound} is the offline counterpart of the upper bound in
Corollary~\ref{cor:offline_log_eps_umr}. The upper result shows that, under
Conditions~\ref{as:LP1}--\ref{as:LP2}, exact offline ERM achieves expected temperature-zero regret
at most \(\varepsilon\) with \(O(\log(1/\varepsilon))\) \(\pi_0\)-logged samples. The lower result
shows that this logarithmic
dependence cannot be improved in general: every nontrivial class containing a pair
\((p^0,p^1)\) with positive decision disagreement has class-minimax offline sample complexity
\(\Omega(\log(1/\varepsilon))\).
\end{remark}

\end{document}